\def\eqref#1{equation~\ref{#1}}
\def\1{\mathbbm{1}}
\def\rvx{{\mathbf{x}}}
\def\rvy{{\mathbf{y}}}
\DeclareMathAlphabet{\mathsfit}{\encodingdefault}{\sfdefault}{m}{sl}
\SetMathAlphabet{\mathsfit}{bold}{\encodingdefault}{\sfdefault}{bx}{n}
\DeclareMathOperator*{\argmin}{arg\,min}
\newtheorem{assumption}{Assumption}
\newtheorem{definition}{Definition}
\newtheorem{lemma}{Lemma}
\newtheorem{theorem}{Theorem}
\newtheorem{corollary}{Corollary}
\newcommand{\name}{\textsc{AugFL}}
\newcommand{\etal}{\textit{et al}.}
\newcommand{\ie}{\textit{i}.\textit{e}.}
\begin{document}

\title{\name{}: Augmenting Federated Learning with Pretrained Models}

\author{
Sheng Yue,~\IEEEmembership{Member,~IEEE},
Zerui Qin,~\IEEEmembership{Student Member,~IEEE},
Yongheng Deng,~\IEEEmembership{Member,~IEEE},
Ju Ren,~\IEEEmembership{Senior Member,~IEEE},
Yaoxue Zhang,~\IEEEmembership{Senior Member,~IEEE},
and Junshan Zhang,~\IEEEmembership{Fellow,~IEEE}
\thanks{This work was accepted in part by the 22nd International Symposium on Theory, Algorithmic Foundations, and Protocol Design for Mobile Networks and Mobile Computing (MobiHoc)~\cite{yue2021inexact}.}
\thanks{Sheng Yue is with the School of Cyber Science and Technology, Sun Yat-sen University. Email: yuesh5@mail.sysu.edu.cn.}
\thanks{Zerui Qin, Yongheng Deng, Ju Ren (corresponding author), and Yaoxue Zhang are with the Department of Computer Science and Technology, Tsinghua University. Ju Ren and Yaoxue Zhang are also with Zhongguancun Laboratory. Emails: qzr24@mails.tsinghua.edu.cn, \{dyh2024,renju, zhangyx\}@mail.tsinghua.edu.cn.}
\thanks{Junshan Zhang is with the Department of Electrical and Computer Engineering, University of California, Davis. Email: jazh@ucdavis.edu.}
\thanks{Sheng Yue and Zerui Qin contributed equally to this work.}
}




\maketitle

\begin{abstract}
Federated Learning (FL) has garnered widespread interest in recent years. However, owing to strict privacy policies or limited storage capacities of training participants such as IoT devices, its effective deployment is often impeded by the scarcity of training data in practical decentralized learning environments. 
In this paper, we study enhancing FL with the aid of (large) pre-trained models (PMs), that encapsulate wealthy general/domain-agnostic knowledge, to alleviate the data requirement in conducting FL from scratch. Specifically, we consider a networked FL system formed by a central server and distributed clients. First, we formulate the PM-aided personalized FL as a regularization-based federated meta-learning problem, where clients join forces to learn a meta-model with knowledge transferred from a private PM stored at the server. Then, we develop an inexact-ADMM-based algorithm, \name{}, to optimize the problem with no need to expose the PM or incur additional computational costs to local clients. Further, we establish theoretical guarantees for \name{} in terms of communication complexity, adaptation performance, and the benefit of knowledge transfer in general non-convex cases. Extensive experiments corroborate the efficacy and superiority of \name{} over existing baselines.
\end{abstract}

\begin{IEEEkeywords}
Federated learning, pretrained model, personalization, edge intelligence.
\end{IEEEkeywords}

\section{Introduction}
\label{sec:introduction}

\IEEEPARstart{F}{ederated} Learning (FL)~\cite{mcmahan2017communication,li2020federated,kairouz2021advances} has gained prominence as a distributed learning paradigm allowing a large number of decentralized users to collaboratively train models without sharing their local data, which has garnered significant attention from both academia and industry~\cite{bonawitz2019towards,dinh2020federated,nguyen2021federated,xue2021asynchronous,li2021privacy,gecer2024federated,deng2024communication}. Despite its rapid advancements, the effective deployment of FL has been hampered by a significant hurdle: in practice, the participants, such as IoT devices~\cite{yang2019federated}, can provide only scarce training data due to limited storage capacities or strict privacy policies~\cite{zhang2022federated,wang2023attrleaks}. Therefore, it is often unsatisfactory to carry out FL from scratch in many modern data-hungry applications like natural language processing~\cite{brown2020language} and robotic control~\cite{collins2019quantifying}.

More recently, the field of artificial intelligence has undergone a significant transformation with the advent of large \textit{pretrained models} (PMs)~\cite{bommasani2021opportunities}. Advanced large models, such as GPT-4~\cite{achiam2023gpt}, PaLM~\cite{chowdhery2023palm} and LLaMA~\cite{touvron2023llama} boasting billions of parameters, have drawn considerable attention due to their remarkable performance in various AI tasks ranging from natural language processing, content generation to more complex tasks such as planning and reasoning~\cite{wei2022emergent}. Given the wealthy general/domain-agnostic knowledge of PMs, and considering the current limitation of FL as noted, a pertinent question is: \textit{can we augment FL with the power of PM to reduce the data requirement in conducting FL from scratch? }


A straightforward solution is to utilize the PM as a warm start for FL and finetune it federatively with clients' domain-specific data~\cite{shi2022just,hou2023privately,yu2023selective}. Yet, owing to the potentially colossal parameter sizes, directly deploying PMs on resource-limited clients may outweigh their computational and storage capabilities. More recently, many efforts have been made to compress the PM and exploit the smaller, compressed model for warm-up or knowledge transfer~\cite{yu2021LLMfinetune,mireshghallah2022differentially,li2023backdoor,wang2023can,yuan2023rethinking}. However, compression for large models is still in the early stage and remains highly nontrivial due to the complex configuration and black-box nature thereof~\cite{deng2020model}. More importantly, sharing either the original or compressed PM, or even its representations, would inevitably threaten the ownership of PMs~\cite{kang2023grounding, roy2019mitigating, jeong2022privacy}. With the growing trend toward proprietary PMs such as GPT-4~\cite{achiam2023gpt}, Claude-3~\cite{claude2024}, GLM-4~\cite{glm2024chatglm}, and Kimi~\cite{qin2024mooncake}, which developed at substantial expense of time and cost, the ownership of PMs is increasingly recognized as a critical issue in the community~\cite{li2022fedipr, fan2023fate, xiao2023offsite}.

To overcome these limitations, this paper introduces a new PM-aided personalized FL framework, which can enhance FL by PM without exposing the information of PM or introducing additional computational burden to clients. First, we formulate the PM-aided personalized FL as a regularized federated meta-learning problem to collaboratively learn a \textit{meta-model} with the PM knowledge transfer characterized by a regularization. Then, we exploit \textit{Alternative Direction Method of Multipliers} (ADMM) to decompose the original problem into a set of subproblems that can be solved in parallel across clients. Notably, ADMM can decouple the regularizer from the computation at local clients, and the knowledge transfer is thus achieved solely on the server side, \ie, `transparent' to all clients. Observe that conventional ADMM techniques are computationally intensive owing to the need for exact 
solutions to the (potentially non-convex) subproblems in each round. To tackle this, we develop an inexact variant of ADMM, \name, where we exploit linear approximation as well as Hessian estimation to transform each subproblem into a quadratic form that can be solved with a closed-form solution, achieving a computational complexity of $\mathcal{O}(n)$ per round (with $n$ the model dimension) and maintaining the lowest among existing federated meta-learning approaches. 

Theoretically, we analyze the convergence and performance of the proposed method. We note that the error induced by linear approximation and Hessian estimation complicates the proof of the convergence of \name{}. The existing results~\cite{hong2016convergence,wang2019global,barber2024convergence} cannot be applied directly because the sufficient descent condition of the Lagrangian function is violated. We develop a new technical path to resolve this issue and establish the convergence guarantee for general non-convex cases. Further, we characterize the adaptation performance of the learned meta-model and quantify the benefit of the PM knowledge transfer. In particular, unlike previous approaches~\cite{lin2020collaborative,fallah2020personalized}, \name{} can converge under mild conditions -- not requiring the regular \textit{similarity} assumptions on training clients. Therefore, it can be applied to unbalanced local datasets, unleashing the potential to cope with the statistical heterogeneity of FL. 

In a nutshell, the main contributions of this paper are three-fold:
\begin{itemize}
\item We cast the PM-aided personalized FL as a regularizaton-based federated meta-learning problem, where clients join forces to learn a meta-model with the knowledge transfer from a private PM at the server. 
\item We devise an inexact-ADMM-based algorithm, \name{}, capable of decomposing the original problem into a set of subproblems that can be solved in parallel across clients while enabling the knowledge transfer computed only on the server side. \name{} solves the sub-problems via linear approximation and Hessian estimation, reducing the computational cost per round to $\mathcal{O}(n)$, maintaining the lowest among existing federated meta-learning methods.
\item We carry out a comprehensive analysis of the proposed method for general nonconvex cases, where we provide the convergence guarantees and communication complexity. Besides, we characterize the adaptation performance and quantify the benefit of the PM knowledge transfer. 
\item We evaluate the performance of \name{} across different benchmark datasets. The extensive experimental results showcase that \name{} outperforms existing baselines, in terms of both performance and convergence speed.
\end{itemize}

In what follows, we review related literature in \cref{sec:related_work}, including federated transfer learning, personalized federated learning, and ADMM. In \cref{sec:method}, we introduce the investigated framework and methodology. In \cref{sec:analysis}, we provide rigorous performance guarantees for our method. In \cref{sec:instantiation}, we provide practical instantiation for the component of knowledge transfer. Finally, we present the experimental results in \cref{sec:experiment} and conclude our work in \cref{sec:conclustion}.

\section{Related Work}
\label{sec:related_work}


Federated learning, introduced in~\cite{mcmahan2017communication,koneny2016fl}, enables decentralized clients to collaboratively learn a shared model while keeping their local data private. In recent years, it has attracted significant attention and has been applied in many practical domains and products (see \cite{kairouz2021advances} for a comprehensive review).



\textbf{Pretrained-model-aided federated learning.} Recent advances in (large) PMs have prompted various attempts to integrate them into FL. A straightforward solution is to exploit the PM to initialize the FL process, providing a ``warm start''. Nguyen \etal~\cite{nguyen2022begin} examine the impact of pretraining and initialization in FL, showcasing that the performance of FL can be significantly enhanced by leveraging the PM as an initial model. Tan \etal~\cite{tan2022federated} propose an approach where a frozen PM acts as an encoder deployed on each client, followed by prototype-based knowledge distillation to train a personalized projector on top of the PM. Yet, the colossal parameters of PM may overweigh the computational and storage capabilities of resource-constrained clients.


Rather than deploying the complete PM on clients, \textit{Federated Transfer Learning} (FTL)~\cite{kang2023grounding}, a marriage of FL and transfer learning techniques, has been deemed as a promising solution to adapt large PMs to domain-specific client models. Many FTL approaches have been proposed to compress the PM and exploit the smaller and compressed model for knowledge transfer. For example, Mireshghallah \etal~\cite{mireshghallah2022differentially} study differentially private model compression based on knowledge distillation and pruning. Wang \etal~\cite{wang2023can} develop a distribution matching algorithm to select public data, mirroring the distribution of the client private data, to fine-tune compressed models effectively. Unfortunately, the compression for large PMs is highly challenging due to their complex configuration and black-box nature~\cite{deng2020model}. To deal with this problem, several knowledge-distillation-based methods are proposed to distill prototypes or representations of the PM to clients~\cite{zhang2024upload, he2020group}. However, sharing either parameters or representations of the PM inevitably threatens the ownership of the PM~\cite{kang2023grounding, li2021privacy, roy2019mitigating}, which has been deemed as a critical issue due to the increasing number of proprietary PMs developed at substantial time and cost~\cite{xiao2023offsite}. Furthermore, downloading additional PM parameters or representations can also impose computational and storage burdens on resource-limited clients.

\textbf{Personalized federated learning.} Personalization is crucial for FL, especially in the decentralized learning environments with statistical heterogeneity~\cite{tan2022pfl}. Jeong \etal~\cite{jeong2018communication} utilize data augmentation to alleviate the local data discrepancy. The method first maintains a generative adversarial network (GAN) at the server, which is then distributed to clients to generate i.i.d. data. Wang \etal~\cite{wang2020optimizing} propose a client selection strategy based on $Q$-learning to balance the bias introduced by non-i.i.d. data. Arivazhagan \etal~\cite{arivazhagan2019federated} develop a structure-based method dividing the model into base and personalized layers, facilitating domain adaptation. Smith \etal~\cite{smith2017federated} implement a multi-task framework based on a primal-dual formulation, which considers the data on different clients as distinct tasks, allowing the learning of a personalized model for each client. 

Federated meta-learning is a prominent research direction in personalized FL, where the integration of \textit{Model-Agnostic Meta-Learning} (MAML)~\cite{finn2017model} and FL has attracted increasing attention. Chen \etal~\cite{chen2018federated} introduce a federated meta-learning framework (called Fed-Meta) based on FedAvg~\cite{mcmahan2017communication} and the MAML-type algorithms, indicating the performance improvement and convergence acceleration over vanilla FedAvg. Jiang \etal~\cite{jiang2019improving} comprehensively analyze the connections between FedAvg and MAML. Lin \etal~\cite{lin2020collaborative} analyze the convergence properties
and computational complexity of the combination of MAML and FedAvg in a strongly convex setting. Further, Fallah \etal~\cite{fallah2020personalized} establish the convergence guarantee for the non-convex setting. However, these studies focus on performing FL from scratch without exploring the knowledge transfer from PMs.

\textbf{ADMM.} A number of existing works~\cite{hong2016convergence,magnusson2015convergence,wang2014convergence,wang2019global} analyze the convergence of ADMM for the case where the solution to each subproblem is computed exactly. Wang \etal~\cite{wang2018convergence} extend the ADMM method from two-block to multi-block form. In addition, there are also a few works~\cite{barber2024convergence,jiang2019structured,lanza2017nonconvex,mukkamala2020convex} studying the performance of ADMM in an inexact setting. It is worth noting that these existing results cannot be directly applied in analyzing our algorithm, because in \name{}, the sufficient descent condition of the Lagrangian is violated due to the introduced linearization and Hessian estimation.



\section{Pretrained-Model-Aided Personalized Federated Learning}
\label{sec:method}

In this section, we first introduce the setting and formulation of the investigated problem and then present our solution.

\subsection{Problem Formulation}
\label{sec:formulation}

As depicted in \cref{fig:system_model}, we consider a PM-aided FL framework, where a set of distributed clients, denoted as $\mathcal{I}$, are connected to a central server which possesses a private pretrained model, parameterized by $\theta_p\in\mathbb{R}^m$.\footnote{In this paper, the PM follows its general definition: a model that is trained on a large dataset prior to being fine-tuned or used for specific tasks~\cite{han2021pre, marcelino2018transfer, zhuang2020comprehensive}.} Each client $i\in\mathcal{I}$ has a labeled dataset, $\mathcal{D}_i=\{(\mathbf{x}^j_i,\mathbf{y}^j_i)\}^{D_i}_{j=1}$, with total $D_i$ samples, where  $(\mathbf{x}^j_i,\mathbf{y}^j_i)\in\mathcal{X}_i\times\mathcal{Y}_i$ denotes a data sample with input $\mathbf{x}^j_i$ and label $\mathbf{y}^j_i$, following an unknown distribution $P_i$. Of note, we focus on heterogeneous cases where the distributions may not be identical across the clients. In this setting, the goal of clients is to effectively utilize the knowledge of the PM and collaboratively learn personalized models without getting access to the PM or exchanging their local data with other clients or the central server. 


\begin{figure}[t]
    \centering
    \includegraphics[width=0.85\linewidth]{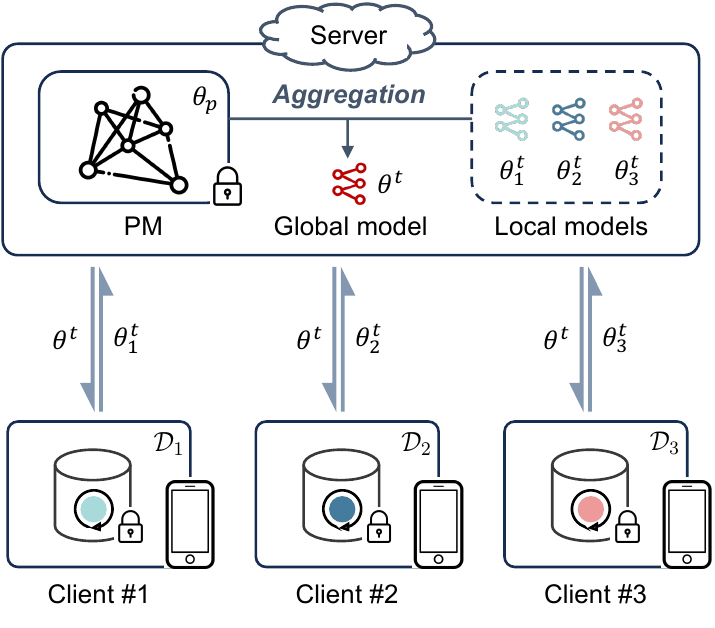}
    \caption{An illustration of the pretrained-model-aided federated meta-learning framework, where the server and clients form a networked system for federated learning. It consists of two main phases: 1) the distributed learning of local models based on the updated global model and 2) the knowledge transfer from the pretrained model to the current global model.}
    \label{fig:system_model}
\end{figure}

More formally, given a model parameter $\phi\in\mathbb{R}^n$, we define the loss function on $\mathcal{D}_i$ as 
\begin{align}
    L_i(\phi,\mathcal{D}_i)\doteq\frac{1}{D_i}\sum^{D_i}_{j=1}l_i(\phi,(\rvx^j_i,\rvy^j_i)),
\end{align}
where $l_i(\phi,(\rvx,\rvy))$ measures the error of $\phi$ in predicting $\rvy$ given input $\rvx$, which is assumed to be differentiable w.r.t. $\phi$. Following the same line as in \textit{Model-Agnostic Meta-Learning} (MAML)~\cite{finn2017model}, we divide dataset $\mathcal{D}_i$ of client $i$ into two disjoint sets, \ie, the \textit{support set} $\mathcal{D}^{s}_i$ and \textit{query set} $\mathcal{D}^{q}_i$, 
based on which, we formulate the PM-aided personalized FL as the following regularized optimization problem:
\begin{align}
    \label{prim_prob}
    \min_\theta&~\sum_{i\in\mathcal{I}}w_i L_i(\phi_i(\theta),\mathcal{D}^{q}_i)+\lambda R_h(\theta,\theta_p)\\
    \label{eq:phi}
    \mathrm{s.t.}&~~\phi_i(\theta)=\theta-\alpha\nabla L_i(\theta,\mathcal{D}^{s}_i),~i\in\mathcal{I}.
\end{align}
Here, $\theta\in\mathbb{R}^n$ denotes the model parameter; $w_i$ is the weight parameter (often set as $D_i/\sum_{i\in\mathcal{I}}D_i$) that characterizes client $i$'s contribution; and $\alpha$ is the learning rate. We denote $R_h$ as a general loss function for knowledge transfer~\cite{park2019relational,ji2021show,zhao2022decoupled}, with $\lambda\ge0$ a balancing parameter that trades off the losses of collaborative learning and knowledge transfer.\footnote{We remark the primary objective of this work is to establish a principle framework for pretrained-model-aided FL, which is independent of the specific form of $R_h$. In particular, we provide an instantiation in \cref{sec:instantiation}.} Given this formulation, we aim to find an initial model (namely meta-model) that after slight updating, \ie, one-step gradient descent, leads to a good personalized (small) model for each individual local data while equipped with the general knowledge transferred from the PM. It therefore achieves the adaptation to heterogeneous data. To illustrate this formulation, consider training a natural language processing model over a set of clients. In this problem, $P_i$ and $\mathcal{D}_i$ represent the distribution and the collected data of client $i$'s expressions, respectively. Thus, the  $\theta_p$ can be a pretrained \textit{Large Language Model} (LLM) like GPT-3.5. Upon solving Problem (\ref{prim_prob})-(\ref{eq:phi}), each client $i$ could take the meta-model and update it by going over their own data $\mathcal{D}_i$ and performing just one or few steps of gradient descent to obtain a personalized model that not only fits its own expressions well but is also equipped with the ability of general-purpose language generation.

In a nutshell, the strengths of this formulation are threefold: 1) It gives personalized solutions that can capture any heterogeneity among clients; 2) it can transfer valuable pretrained information to clients' specific domains, enhancing the robustness and generalization of clients' local models; 3) except for the training clients, the learned meta-model can quickly adapt to \textit{new} clients via slightly updating it using limited domain-specific data, \ie, the capability of few-slot learning. However, directly addressing Problem (\ref{prim_prob})-(\ref{eq:phi}) by existing federated meta-learning methods, such as Per-FedAvg~\cite{fallah2020personalized}, requires clients to download the PM and handle the knowledge transfer on the client side, which incurs significant computational and storage burdens for resource-limited clients. We resolve this problem in the following section.

\subsection{\name: An Inexact-ADMM-Based Algorithm for Regularized Federated Meta-Learning}
\label{sec:methodology}

As alluded to earlier, existing approaches cannot handle the regularized optimization well. In this section, we introduce a new algorithm based on ADMM to solve Problem (\ref{prim_prob})-(\ref{eq:phi}). 

Observe that the original problem (\ref{prim_prob})-(\ref{eq:phi}) is equivalent to the following constrained optimization problem:
\begin{align}
    \label{equi_prob}
    \begin{split}
        \min_{\theta_{1:|\mathcal{I}|},\theta}&~\sum_{i\in\mathcal{I}}w_i L_i(\phi_i(\theta_i),\mathcal{D}^{q}_i)+\lambda R_h(\theta,\theta_p)\\
        \mathrm{s.t.}~&~~\phi_i(\theta)=\theta-\alpha\nabla L_i(\theta,\mathcal{D}^{s}_i),\\
        &~~\theta_i - \theta = 0,~i\in\mathcal{I}.
    \end{split}
\end{align}
We form the corresponding \textit{augmented Lagrangian function} of Problem~(\ref{equi_prob}) as follows:
\begin{align}
    \label{eq:lagrangian}
    \nonumber
    \mathcal{L}(\{\theta_i,y_i\},\theta)&\doteq\sum_{i\in\mathcal{I}}(w_i L_i(\phi_i(\theta_i),\mathcal{D}^{q}_i)+\langle y _i,\theta_i-\theta\rangle\\
    &+\frac{\rho_i}{2}\Vert \theta_i-\theta\Vert^2)+\lambda R_h(\theta,\theta_p),
\end{align}
where $y_i\in\mathbb{R}^n$ is the dual variable and $\rho_i>0$ is the penalty parameter for each $i\in\mathcal{I}$. 

When classical ADMM~\cite{boyd2011distributed} is applied, the variables $\theta_i$, $\theta$ and $y_i$ are updated alternatively in Problem~(\ref{equi_prob}), that is, 
\begin{align}
    \label{eq:ori_ADMM}
    \begin{cases}
    \theta^{t+1}=\argmin_{\theta}\mathcal{L}(\{\theta^t_i,y^t_i\},\theta),\\
    \theta^{t+1}_i=\argmin_{\theta_i}\mathcal{L}_i(\theta_i,y^t_i,\theta^{t+1}),\\
    y^{t+1}_i=y^t_i+\rho_i(\theta^{t+1}_i-\theta^{t+1}),\\
    \end{cases}
\end{align}
where $\mathcal{L}_i(\theta_i,y_i,\theta)\doteq w_i L_i(\phi_i(\theta_i),\mathcal{D}^{q}_i)+\langle y _i,\theta_i-\theta\rangle+\frac{\rho_i}{2}\Vert \theta_i-\theta\Vert^2$. It decomposes Problem~(\ref{equi_prob}) into a set of subproblems that can be solved in parallel, all while computing $R_h(\theta,\theta_p)$ and $L_i(\phi_i(\theta_i),\mathcal{D}^{q}_i)$ separately. Building on this, to fully take advantage of the combined computation power of the local clients and the server, we provide the following alternating updating strategy: 1) updating $\theta$ at the server and 2) updating $\theta_i,y_i$ at the clients in a distributed manner. This way, the computation corresponding to the regularizer can be decoupled from the client side to the server side, avoiding exposing the PM's information to clients. Yet, attaining the exact solution to each subproblem is computationally costly, especially with complex function approximations. To tackle this challenge, we devise an inexact-ADMM-based algorithm, namely \name{}, in what follows.

Specifically, in the communication round $t=0$, the server randomly initializes the global model parameter, denoted as $\theta^0$, and sends it to all clients; each client locally initializes the dual variable, $y^{-1}_i$. Then, \name{} iterates in two repeating steps:\footnote{In what follows, we use $\theta^t_i$ to denote the `local model' that is updated locally on client $i$, and use  $\theta^t$ to denote the `global model' that is aggregated from all local models after each round of aggregation. It is worth noting that the local model and global model are both meta-models, i.e., learning model initialization.} 

\textbf{1) Local update of $\bm{\theta_i}$ and $\bm{y_i}$.} After receiving the global model $\theta^t$ from the server at communication round $t$, each client $i\in\mathcal{I}$ does the following updates:
    
\begin{itemize}
    \item \textit{Update client-specific model $\phi_i$:} Based on local dataset $\mathcal{D}^{s}_i$, $\phi^t_i$ is updated as
    \begin{align}
        \label{eq:update_phi}
        \phi^t_i=\theta^t-\alpha\nabla L_i(\theta^t,\mathcal{D}^{s}_{i}).
    \end{align}
    
    \item  \textit{Update local parameter $\theta_i$:} Based on \cref{eq:ori_ADMM}, given the global model (meta-model) $\theta^t$ and the local dual variable $y_i^{t-1}$ from last communication round, the local parameter $\theta_i$ is updated as
    \begin{align}
    \label{eq:update_thetai}
        \theta^t_i&=\mathop{\arg\min}_{\theta_i}\{w_i L_i(\phi_i(\theta_i),\mathcal{D}^{q}_i)+ \langle y^{t-1}_i,\theta_i-\theta^t\rangle \nonumber\\
      &+\frac{\rho_i}{2} \Vert\theta_i-\theta^t\Vert^2\}.
    \end{align}
    To simplify the computation, we use linear approximation (\ie, first-order Taylor expansion) around $\theta^t$ to relax the above subproblem, \ie,
    \begin{align}
    \label{eq:linear_approx}
    \nonumber
    \theta^t_i&=\mathop{\arg\min}_{\theta_i}\{w_i L_i(\phi^t_i,\mathcal{D}^{q}_i) + \langle w_i(I-\alpha\nabla^2 L_i(\theta^t,\mathcal{D}^{s}_i))\\
    &\cdot\nabla  L_i(\phi^t_i,\mathcal{D}^{q}_i)+y^{t-1}_i,\theta_i-\theta^t\rangle+\frac{\rho_i}{2} \Vert \theta_i-\theta^t\Vert^2\}
    \end{align}
    where $\phi^t_i$ is from \cref{eq:update_phi}. Nevertheless, \cref{eq:linear_approx} is still insufficient since the computational complexity of the Hessian-gradient product $\nabla^2 L_i(\theta^t,\mathcal{D}^{s}_i)\nabla L_i(\phi^t_i,\mathcal{D}^{q}_i)$ remains $\mathcal{O}(n^2)$. To further alleviate the computational cost, we replace the Hessian-gradient product with a first-order estimator, \ie,
    \begin{align}
    \label{estimatehessian}
    g^t_i\doteq \frac{\nabla L_i(\theta^t+\delta_{i,t} r^t_i,\mathcal{D}^{s}_i)-\nabla L_i(\theta^t-\delta_{i,t} r^t_i,\mathcal{D}^{s}_i)}{2\delta_{i,t}}
    \end{align}
    where we represent $r^t_i\doteq\nabla L_i(\phi^t_i,\mathcal{D}^{q}_i)$ and $\delta_{i,t}>0$ as the degree of freedom to capture the estimation accuracy. Replacing $\nabla^2 L_i(\theta^t,\mathcal{D}^{s}_i)\nabla L_i(\phi^t_i,\mathcal{D}^{q}_i)$ with $g^t_i$ in \cref{eq:linear_approx} and solving the approximated problem yields the update of the local model parameter $\theta_i^t$ as follows:
    \begin{align}
    \label{eq:update_thetai_inex}
    \theta^t_i=\theta^t-\frac{y^{t-1}_i+w_i(\nabla L_i(\phi^t_i,\mathcal{D}^q_{i})-\alpha g^t_i)}{\rho_i}.
    \end{align}    
    \item \textit{Update dual variable $y_i$:} Based on the global model $\theta^t$ and the updated local parameter $\theta_i^t$, the auxiliary dual variable $y_i^t$ is next updated according to
    \begin{align}
        \label{eq:update_y}
        y^t_i=& y^{t-1}_i + \rho_i(\theta^t_i - \theta^t).
    \end{align}
\end{itemize}

\textbf{2) Global aggregation towards meta-model $\bm{\theta}$.} Each client $i$ sends the updated local parameters $\theta^t_i$ and $y^t_i$ to the server. With the pretrained model $\theta_p$, the server performs a global update of the meta-model $\theta$ based on
\begin{align}\label{eq:update_theta}
    \theta^{t+1}=\frac{\sum_{i\in\mathcal{I}}(y^t_i+\rho_i\theta^t_i)-\lambda\nabla_{\theta^t}R_h(\theta^t,\theta_p)}{\sum_{i\in\mathcal{I}}\rho_i}.
\end{align}
Akin to \cref{eq:update_thetai_inex}, \cref{eq:update_theta} is derived from the optimality of the linearized $\mathcal{L}(\{\theta^t_i,y^t_i\},\theta)$ around $\theta^t$. After that, the server sends $\theta^{t+1}$ back to each client and starts the next round.

After the training phase, the server distributes the learned global meta-model $\theta^T$ (with $T$ the total number of rounds) to all clients. Based on this, each client can perform one or a few steps of stochastic gradient descent using the local dataset to obtain a personalized model suited for its specific domain. 


In summary, the procedure of \name{} is outlined in \cref{alg}. Note that due to linearizing all decomposed subproblems and estimating Hessian by its first-order estimation, we enable the computation complexity of \name{} to be $\mathcal{O}(n)$ in each round, which maintains the lowest among all existing federated meta-learning approaches.

\begin{algorithm}[t]
	\caption{\name{}}
	\label{alg}
	\LinesNumbered
	\KwIn{$\theta_p$, $\alpha$, $\lambda$, $\mathcal{D}^{s}_{1:|\mathcal{I}|}$, $\mathcal{D}^q_{1:|\mathcal{I}|}$, $\rho_{1:|\mathcal{I}|}$}
	Each client $i\in\mathcal{I}$ initializes $y^{-1}_i$\;
	Server initializes $\theta^0$ and sends it to all clients\;
	\For{$t=0$ \KwTo $T$}{
	    \For{$i=1$ \KwTo $|\mathcal{I}|$}{
	        Compute $\phi^t_i\leftarrow\theta^t-\alpha\nabla L_i(\theta^t,\mathcal{D}^{s}_i)$\;
	        Compute $\theta^t_i$ by \cref{eq:update_thetai_inex}\;
	        Compute $ y^t_i\leftarrow y^{t-1}_i + \rho_i(\theta^t_i - \theta^t)$\;
	        Send $\theta^t_i$ and $ y^t_i$ back to the server\;
	        }
	    Server computes $\theta^{t+1}$ by \cref{eq:update_theta} and sends it back to all clients\;
		}
		Server distributes $\theta^T$ to clients for fast adaptation\;
\end{algorithm}

\section{Performance Analysis}
\label{sec:analysis}

In this section, we establish performance guarantees for the proposed method. First, we study the convergence properties and characterize the communication complexity for \name{}. Then, we analyze the adaptation performance and quantify the benefit of knowledge transfer.

\subsection{Convergence Properties}

For convenience, we denote the objective function of \cref{prim_prob} as $F(\theta)$ in this section:
\begin{align}
    \label{F_definition}
    F(\theta)\doteq \sum_{i\in\mathcal{I}}w_i L_i(\phi_i(\theta),\mathcal{D}^{q}_i)+\lambda R_h(\theta,\theta_p)
\end{align}
with $\phi_i(\theta)=\theta-\alpha\nabla L_i(\theta,\mathcal{D}^{s}_i)$. Due to the potential non-convexity of $F(\cdot)$, we characterize the convergence and communication complexity of \name{} for finding a \textit{First-Order Stationary Point} (FOSP) of $F(\theta)$. Formally, the definition of an $\epsilon$-FOSP is given as follows.

\begin{definition}[$\epsilon$-FOSP]
Given $\epsilon>0$, a solution $\theta\in\mathbb{R}^n$ is called an $\epsilon$-approximate first-order stationary point ($\epsilon$-FOSP) of Problem (\ref{prim_prob})-(\ref{eq:phi}), if $\Vert\nabla F(\theta)\Vert\le\epsilon$.
\end{definition}

The above definition implies that if a solution $\theta$ obtained by an algorithm is an $\epsilon$-FOSP, then the gradient norm of the objective function is bounded above by $\epsilon$. 

Note that the first-order estimator of Hessian introduced in \cref{eq:update_thetai_inex} inevitably complicates the convergence analysis of \name{}, making the existing analysis methods of ADMM~\cite{barber2024convergence} not suitable here. Before developing a new technical path to establish the convergence guarantee, we impose the following assumptions. 

\begin{assumption}[Bounded objective]
    \label{lowerbounded}
    $F(\theta)$ is lower-bounded, \ie,  $F(\theta)>-\infty$, for all $\theta\in\mathbb{R}^n$.
\end{assumption}

\begin{assumption}[Smoothness and bounded gradients]
\label{Lsmooth}
Given any $i\in\mathcal{I}$, $\theta_p\in\mathbb{R}^m$ and $\mathcal{D}^{s}_i$, both $L_i(\cdot,\mathcal{D}^{s}_i)$ and $R_h(\cdot,\theta_p)$ are twice continuously differentiable and smooth, \ie, for any $\theta_1,\theta_2\in\mathbb{R}^n$, there exist $\mu_i>0$ and $\mu_r>0$ such that
\begin{align}
    \label{Lsmoothinequality}
    \Vert \nabla L_i(\theta_1,\mathcal{D}^{s}_i)-\nabla L_i(y,\mathcal{D}^{s}_i)\Vert&\le \mu_i\Vert \theta_1-\theta_2\Vert,\\
    \Vert \nabla_{\theta_1} R_h(\theta_1,\theta_p)-\nabla_{\theta_2} R_h(\theta_2,\theta_p)\Vert&\le \mu_r\Vert \theta_1-\theta_2\Vert.
\end{align}
Besides, for each $i\in\mathcal{I}$, the gradient norm of $L_i(\cdot,\mathcal{D}^{s}_i)$ is bounded by a positive constant $\beta_i>0$, \ie, for any $\theta\in\mathbb{R}^n$, the following fact holds:
\begin{align}
    \label{gradientbound}
    &\Vert\nabla L_i(\theta,\mathcal{D}^{s}_i)\Vert\leq \beta_i.
\end{align}
\end{assumption}
\begin{assumption}[Lipschitz continuous Hessian]
\label{HessianLipschitz}
Given $i\in\mathcal{I}$ and $\mathcal{D}^{s}_i$, the Hessian of $L_i(\cdot,\mathcal{D}^{s}_i)$ is $\zeta_i$-Lipschitz continuous, \ie, for any $\theta_1,\theta_2\in\mathbb{R}^n$, we have
\begin{align}
    \label{HessianLipschitzinequality}
    \Vert \nabla^2 L_i(\theta_1,\mathcal{D}^{s}_i)-\nabla^2 L_i(\theta_2,\mathcal{D}^{s}_i)\Vert\le \zeta_i\Vert \theta_1-\theta_2\Vert.
\end{align}
\end{assumption}

\begin{assumption}[Bounded variances]
\label{bounded_var}
For each $i\in\mathcal{I}$ and any $\theta\in\mathbb{R}^n$, the stochastic gradient $\nabla l_i(\theta,(\mathbf{x},\mathbf{y}))$ and Hessian $\nabla^2 l_i(\theta,(\mathbf{x},\mathbf{y}))$ with respect to $(\mathbf{x},\mathbf{y})\in\mathcal{X}_i\times\mathcal{Y}_i$ have bounded variances, \ie,
\begin{align}
    \mathbb{E}_{(\mathbf{x},\mathbf{y})\sim P_i}\{\Vert\nabla l_i(\theta,(\mathbf{x},\mathbf{y}))-\nabla L_i(\theta)\Vert^2\}&\le(\sigma^g_i)^2,\\
    \mathbb{E}_{(\mathbf{x},\mathbf{y})\sim P_i}\{\Vert\nabla^2 l_i(\theta,(\mathbf{x},\mathbf{y}))-\nabla^2 L_i(\theta)\Vert^2\}&\le(\sigma^h_i)^2,
\end{align}
for some positive constants $\sigma^g_i>0$ and $\sigma^h_i>0$.
\end{assumption}

Assumptions \ref{lowerbounded}--\ref{bounded_var} are standard and commonly used in the literature on the analysis of FL algorithms~\cite{lin2020collaborative,fallah2020personalized,zhang2021fedpd}. In particular, \cref{gradientbound} is critical for analyzing the convergence as it allows characterizing the estimation error of Hessian. \cref{HessianLipschitz} implies the high-order smoothness of $L_i(\cdot,\mathcal{D}^{s}_i)$ for dealing with the second-order information in the update steps of \cref{alg}. \cref{bounded_var} provides the upper bounds of the variances of the gradient and Hessian estimations. Next, we impose the assumption on hyper-parameters.

\begin{assumption}
\label{parameterassumption}
For all $i\in\mathcal{I}$, $\rho_i$ is sufficiently large such that the following facts hold:
\begin{align}
    \label{parameter_eq1}
    &\frac{\rho_i}{2}-4w_i\nu_i>0,\\
    \label{parameter_eq2}
    &\frac{\rho_i}{2}-2w^2_i\nu^2_i(\frac{4w_i\nu_i}{\rho^2_i}+\frac{1}{\rho_i})-\frac{\lambda\mu_r}{2|\mathcal{I}|}>0,\\
    \label{parameter_eq3}
    &\rho_i-3\nu_i>0,
\end{align}
where $\nu_i$ is a smooth scalar defined in \cref{lipschtz_nu}. In addition, for each $i\in\mathcal{I}$, the degree of freedom parameter $\{\delta_{i,t}\}$ for the approximation of Hessian-gradient products is chosen to be a monotonically non-increasing positive sequence and satisfies $\sum^\infty_{t=1}\delta_{i,t}<\infty$.
\end{assumption}

\cref{parameterassumption} constrains the values of penalty parameter $\rho_i$ and the degree of freedom parameter $\delta_{i,t}$. Intuitively, a large value of $\rho_i$ is required to balance the error caused by the linear approximation and the Hessian estimation in \cref{eq:update_thetai_inex}.


Denote $F_i(\theta)\doteq L_i(\phi_i(\theta),\mathcal{D}^q_i)$. Based on Assumptions \ref{Lsmooth} and \ref{HessianLipschitz}, we have the following result on the smoothness of $F_i$.

\begin{restatable}{lemma}{fsmoo}
\label{fsmoothlemma}
Given Assumptions \ref{Lsmooth} and \ref{HessianLipschitz}, for each $i\in\mathcal{I}$, $F_i$ is proper and $\nu_i$-smooth, \ie, 
\begin{align}
    \label{fsmooth}
    \Vert \nabla F_i(\theta_1)-\nabla F_i(\theta_2)\Vert\le \nu_i\Vert \theta_1-\theta_2\Vert,~\forall \theta_1,\theta_2\in\mathbb{R}^n
\end{align}
where $\nu_i$ is defined as follows:
\begin{align}
\label{lipschtz_nu}
    \nu_i\doteq (1+\alpha\mu_i)(1+\mu_i)\mu_i+\alpha\beta_i\zeta_i.
\end{align}
\end{restatable}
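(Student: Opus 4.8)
The plan is to compute $\nabla F_i$ explicitly by the chain rule and then bound its increment by decomposing a matrix--vector product. Writing $\phi_i(\theta) = \theta - \alpha \nabla L_i(\theta, \mathcal{D}^{s}_i)$, its Jacobian is the symmetric matrix $I - \alpha \nabla^2 L_i(\theta, \mathcal{D}^{s}_i)$, so that
\begin{align}
\nabla F_i(\theta) = \left(I - \alpha \nabla^2 L_i(\theta, \mathcal{D}^{s}_i)\right) \nabla L_i(\phi_i(\theta), \mathcal{D}^{q}_i).
\end{align}
Abbreviating $A(\theta) \doteq I - \alpha \nabla^2 L_i(\theta, \mathcal{D}^{s}_i)$ and $b(\theta) \doteq \nabla L_i(\phi_i(\theta), \mathcal{D}^{q}_i)$, I would control $\nabla F_i(\theta_1) - \nabla F_i(\theta_2) = A(\theta_1)b(\theta_1) - A(\theta_2)b(\theta_2)$ through the add-and-subtract identity $A(\theta_1)\big(b(\theta_1)-b(\theta_2)\big) + \big(A(\theta_1)-A(\theta_2)\big)b(\theta_2)$, and bound the two resulting terms separately using the triangle inequality and submultiplicativity of the operator norm.

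For the first term, smoothness (\cref{Lsmooth}) gives $\|A(\theta_1)\| \le 1 + \alpha\mu_i$ because $\|\nabla^2 L_i\| \le \mu_i$; moreover $\|\phi_i(\theta_1) - \phi_i(\theta_2)\| \le (1+\alpha\mu_i)\|\theta_1-\theta_2\|$ by the same Lipschitz-gradient bound, so applying smoothness once more to $\nabla L_i(\cdot, \mathcal{D}^{q}_i)$ yields $\|b(\theta_1)-b(\theta_2)\| \le \mu_i(1+\alpha\mu_i)\|\theta_1-\theta_2\|$. For the second term, the Lipschitz-Hessian assumption (\cref{HessianLipschitz}) gives $\|A(\theta_1)-A(\theta_2)\| \le \alpha\zeta_i\|\theta_1-\theta_2\|$, while the gradient bound (\cref{gradientbound}) gives $\|b(\theta_2)\| \le \beta_i$. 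Combining the two estimates produces $\|\nabla F_i(\theta_1) - \nabla F_i(\theta_2)\| \le \big[\mu_i(1+\alpha\mu_i)^2 + \alpha\beta_i\zeta_i\big]\|\theta_1-\theta_2\|$; since the learning rate satisfies $\alpha \le 1$, one factor $(1+\alpha\mu_i)$ relaxes to $(1+\mu_i)$, recovering exactly the constant $\nu_i$ in \cref{lipschtz_nu}. The ``proper'' claim is immediate, as $F_i$ is real-valued and finite everywhere.

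I expect the difficulty here to be bookkeeping rather than conceptual. The key care is in the product decomposition: each factor must be bounded by either a constant or a quantity linear in $\|\theta_1-\theta_2\|$, so that a constant times a linear term stays linear, and the chained smoothness through $\phi_i$ must accumulate the factor $(1+\alpha\mu_i)$ correctly. A second subtlety is that \cref{gradientbound} is stated for the support loss, whereas the argument invokes boundedness of $\nabla L_i(\phi_i(\theta),\mathcal{D}^{q}_i)$ at the adapted point; I would read $\beta_i$ as a uniform bound across both the support and query splits, or equivalently assume the query-loss gradient is bounded by $\beta_i$. No step requires more than the three stated assumptions together with the triangle inequality.
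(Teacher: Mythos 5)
Your proposal is correct and follows essentially the same route as the paper's own proof: the explicit chain-rule formula for $\nabla F_i$, an add-and-subtract decomposition of the matrix--vector product, and the three assumptions, yielding the identical intermediate constant $(1+\alpha\mu_i)^2\mu_i + \alpha\beta_i\zeta_i$ (the paper merely splits off the identity part of $I-\alpha\nabla^2 L_i$ before doing the same add-and-subtract). Your two explicit caveats --- that matching the stated $\nu_i$ requires $\alpha\le 1$, and that \cref{gradientbound} must be read as covering the query-loss gradient at the adapted point --- are both steps the paper performs silently, so they sharpen rather than diverge from its argument.
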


\begin{proof}
The proof can be found in Appendix.
\end{proof}

For conciseness, denote $F_i(\theta)\doteq L_i(\theta-\alpha\nabla L_i(\theta,\mathcal{D}^{s}_i),\mathcal{D}^q_i)$, $f_i(\theta)\doteq L_i(\theta,\mathcal{D}^{s}_i)$, and $f^q_i(\phi_i)\doteq L_i(\phi_i,\mathcal{D}^q_i)$. Building upon \cref{fsmoothlemma}, we next bound the variations of $y^t_i$ via the variations of $\theta^t$ in the following lemma.

\begin{lemma}
\label{dualbound}
Suppose that Assumptions \ref{lowerbounded}--\ref{HessianLipschitz} are satisfied. Then, the following fact holds:
\begin{align}
    \label{dualbound_ineq_1}
    \Vert y^{t+1}_i- y^t_i\Vert\le w_i\nu_i\Vert\theta^{t+1}-\theta^t\Vert + (\delta_{i,t}+\delta_{i,t+1})\alpha w_i\zeta_i\beta^2_i.
\end{align}
\end{lemma}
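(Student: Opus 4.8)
The plan is to collapse the coupled primal–dual updates into a single closed-form expression for $y^t_i$, and then read off the increment $y^{t+1}_i-y^t_i$ as a smoothness term plus a controllable Hessian-estimation error. First I would substitute the local update \cref{eq:update_thetai_inex} into the dual update \cref{eq:update_y}. Since \cref{eq:update_thetai_inex} rearranges to $\rho_i(\theta^t_i-\theta^t)=-[y^{t-1}_i+w_i(\nabla L_i(\phi^t_i,\mathcal{D}^q_i)-\alpha g^t_i)]$, plugging this into $y^t_i=y^{t-1}_i+\rho_i(\theta^t_i-\theta^t)$ cancels $y^{t-1}_i$ exactly and yields the clean identity
\begin{align*}
y^t_i=-w_i\bigl(\nabla L_i(\phi^t_i,\mathcal{D}^q_i)-\alpha g^t_i\bigr).
\end{align*}
This removes all explicit dependence on the dual history and the penalty $\rho_i$, which is the decisive simplification.

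Next I would relate the right-hand side to the true meta-gradient. Recalling $\nabla F_i(\theta^t)=(I-\alpha\nabla^2 L_i(\theta^t,\mathcal{D}^s_i))\nabla L_i(\phi^t_i,\mathcal{D}^q_i)$, I write $r^t_i\doteq\nabla L_i(\phi^t_i,\mathcal{D}^q_i)$ and introduce the estimation error $e^t_i\doteq g^t_i-\nabla^2 L_i(\theta^t,\mathcal{D}^s_i)r^t_i$, so the identity becomes $y^t_i=-w_i\nabla F_i(\theta^t)+\alpha w_i e^t_i$. Differencing across rounds and applying the triangle inequality gives
\begin{align*}
\Vert y^{t+1}_i-y^t_i\Vert\le w_i\Vert\nabla F_i(\theta^{t+1})-\nabla F_i(\theta^t)\Vert+\alpha w_i\bigl(\Vert e^{t+1}_i\Vert+\Vert e^t_i\Vert\bigr).
\end{align*}
The first term is immediately bounded by $w_i\nu_i\Vert\theta^{t+1}-\theta^t\Vert$ through the $\nu_i$-smoothness of $F_i$ established in \cref{fsmoothlemma}, which supplies precisely the first term of \cref{dualbound_ineq_1}.

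The remaining and main task is to bound the finite-difference error $\Vert e^t_i\Vert$. Writing $g^t_i$ as the central difference \cref{estimatehessian} and applying Taylor's theorem to $\nabla L_i(\theta^t\pm\delta_{i,t}r^t_i,\mathcal{D}^s_i)$ about $\theta^t$, the first-order terms combine to reproduce $\nabla^2 L_i(\theta^t,\mathcal{D}^s_i)r^t_i$ exactly, while the $\zeta_i$-Lipschitz-Hessian bound of \cref{HessianLipschitz} controls the second-order remainder, giving $\Vert e^t_i\Vert\le\zeta_i\delta_{i,t}\Vert r^t_i\Vert^2$ (up to a constant $\le 1$ that only tightens the estimate). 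Invoking the gradient bound \cref{gradientbound} to get $\Vert r^t_i\Vert\le\beta_i$ then yields $\Vert e^t_i\Vert\le\zeta_i\delta_{i,t}\beta^2_i$, and substituting the $t$ and $t+1$ versions produces the claimed error term $(\delta_{i,t}+\delta_{i,t+1})\alpha w_i\zeta_i\beta^2_i$.

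The hard part will be the finite-difference analysis: one must verify that the central difference is first-order accurate so that the leading Hessian contribution is cancelled out and only the $O(\delta_{i,t})$ remainder survives, which is exactly where \cref{HessianLipschitz} is needed. A minor technical point is that $\Vert r^t_i\Vert$ is a query-loss gradient, so the gradient bound \cref{gradientbound}, stated for the support loss, must be understood as applying uniformly to $\nabla L_i(\cdot,\mathcal{D}^q_i)$ as well; granting this, the two bounds assemble directly into \cref{dualbound_ineq_1}.
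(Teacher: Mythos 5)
Your proposal is correct and takes essentially the same route as the paper: both reduce the updates to the identity $y^t_i=-w_i\bigl(\nabla L_i(\phi^t_i,\mathcal{D}^q_i)-\alpha g^t_i\bigr)$ (the paper obtains it from the optimality condition of \cref{eq:update_thetai_inex} combined with \cref{eq:update_y}), then split the dual increment by the triangle inequality into the $\nu_i$-smoothness term from \cref{fsmoothlemma} plus the two Hessian-estimation errors at rounds $t$ and $t+1$. The only difference is that the paper simply cites the error bound $\Vert\nabla^2 L_i(\theta^t,\mathcal{D}^s_i)r^t_i-g^t_i\Vert\le\delta_{i,t}\zeta_i\beta_i^2$ from prior work (Fallah et al.) whereas you derive it via Taylor expansion under \cref{HessianLipschitz}, and the query-gradient caveat you flag about \cref{gradientbound} applies equally to the paper's cited bound.
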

\begin{proof}
\label{proof_dualbound}
The proof can be found in Appendix.
\end{proof}

To bound the successive difference of the augmented Lagrangian function $\mathcal{L}(\{\theta^t_i,y^t_i\},\theta^t)$ (defined in \cref{eq:lagrangian}), we first bound the successive difference of $\mathcal{L}_i(\theta,\theta^{t+1}, y^t_i)$, defined as 
\begin{align*}
    \mathcal{L}_i(\theta,\theta^{t+1}, y^t_i)\doteq w_i F_i(\theta)+\langle y^t _i,\theta-\theta^{t+1}\rangle+\frac{\rho_i}{2}\Vert \theta-\theta^{t+1}\Vert^2,
\end{align*}
in the following lemma.

\begin{lemma}
\label{l_descent}
Suppose that Assumptions \ref{lowerbounded}-\ref{HessianLipschitz} are satisfied. The following fact holds:
\begin{align*}
    \nonumber
    &\mathcal{L}_i(\theta^{t+1}_i,\theta^{t+1}, y^t_i)- \mathcal{L}_i(\theta^t_i,\theta^{t+1}, y^t_i)\\
     \le\,& \frac{2(1+w_i)\nu_i}{\rho^2_i}\Vert  y^{t+1}_i- y^t_i\Vert^2
     -\frac{\rho_i-(3+4w_i)\nu_i}{2}\Vert\theta^{t+1}_i-\theta^t_i\Vert^2\nonumber\\
     +\,&\frac{2\alpha w_i\zeta_i\beta^2_i\delta_{i,t+1}}{\rho_i}\Vert  y^{t+1}_i- y^t_i\Vert+\alpha w_i\zeta_i\beta^2_i\delta_{i,t+1}\Vert \theta^t_i-\theta^{t+1}_i\Vert.
\end{align*}
\end{lemma}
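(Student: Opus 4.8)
The plan is to exploit that $\theta^{t+1}_i$ is the \emph{exact} minimizer of the quadratic surrogate produced by the linearization and Hessian estimation in \cref{eq:update_thetai_inex}, and to bound the discrepancy between this surrogate and the true local Lagrangian $\mathcal{L}_i(\cdot,\theta^{t+1},y^t_i)$ through the two sources of approximation error. Writing $\Psi(\theta)\doteq\mathcal{L}_i(\theta,\theta^{t+1},y^t_i)$ and letting $\widetilde{\nabla F}_i\doteq\nabla L_i(\phi^{t+1}_i,\mathcal{D}^q_i)-\alpha g^{t+1}_i$ denote the approximate gradient of $F_i$ at $\theta^{t+1}$ used in \cref{eq:update_thetai_inex}, the point $\theta^{t+1}_i$ exactly minimizes the $\rho_i$-strongly convex surrogate $\widehat\Psi(\theta)\doteq w_iF_i(\theta^{t+1})+\langle w_i\widetilde{\nabla F}_i+y^t_i,\theta-\theta^{t+1}\rangle+\frac{\rho_i}{2}\Vert\theta-\theta^{t+1}\Vert^2$, since the condition $\nabla\widehat\Psi(\theta^{t+1}_i)=0$ reproduces exactly \cref{eq:update_thetai_inex}.

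First I would use strong convexity together with $\nabla\widehat\Psi(\theta^{t+1}_i)=0$ to obtain $\widehat\Psi(\theta^{t+1}_i)-\widehat\Psi(\theta^t_i)\le-\frac{\rho_i}{2}\Vert\theta^{t+1}_i-\theta^t_i\Vert^2$. Since $\Psi-\widehat\Psi=w_ie$ with the remainder $e(\theta)\doteq F_i(\theta)-F_i(\theta^{t+1})-\langle\widetilde{\nabla F}_i,\theta-\theta^{t+1}\rangle$, this reduces the claim to controlling $e(\theta^{t+1}_i)-e(\theta^t_i)$, because
\[
\Psi(\theta^{t+1}_i)-\Psi(\theta^t_i)\le-\tfrac{\rho_i}{2}\Vert\theta^{t+1}_i-\theta^t_i\Vert^2+w_i\big(e(\theta^{t+1}_i)-e(\theta^t_i)\big).
\]

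To bound the remainder I would expand $F_i$ to first order about $\theta^{t+1}$, so that $e(\theta^{t+1}_i)-e(\theta^t_i)$ equals $\langle\nabla F_i(\theta^{t+1})-\widetilde{\nabla F}_i,\theta^{t+1}_i-\theta^t_i\rangle$ plus a second-order remainder controlled by the $\nu_i$-smoothness of $F_i$ (\cref{fsmoothlemma}). The leading vector $\nabla F_i(\theta^{t+1})-\widetilde{\nabla F}_i=\alpha\big(g^{t+1}_i-\nabla^2 L_i(\theta^{t+1},\mathcal{D}^s_i)r^{t+1}_i\big)$ is exactly the bias of the centered finite-difference estimator \cref{estimatehessian}; via its integral representation, the $\zeta_i$-Lipschitz Hessian (\cref{HessianLipschitz}), and $\Vert r^{t+1}_i\Vert\le\beta_i$ (\cref{Lsmooth}), its norm is $\mathcal{O}(\zeta_i\beta^2_i\delta_{i,t+1})$. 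I would then insert the exact dual relation $\theta^{t+1}_i-\theta^{t+1}=(y^{t+1}_i-y^t_i)/\rho_i$ from \cref{eq:update_y}: splitting the primal increment $\theta^{t+1}_i-\theta^t_i$ about $\theta^{t+1}$ lets the bias generate both the $\delta_{i,t+1}\Vert y^{t+1}_i-y^t_i\Vert$ term and the $\delta_{i,t+1}\Vert\theta^t_i-\theta^{t+1}_i\Vert$ term, while the second-order remainder produces a cross term $\tfrac{\nu_i}{\rho_i}\Vert y^{t+1}_i-y^t_i\Vert\,\Vert\theta^{t+1}_i-\theta^t_i\Vert$. Applying Young's inequality to this cross term yields the $\Vert y^{t+1}_i-y^t_i\Vert^2$ contribution and a positive multiple of $\Vert\theta^{t+1}_i-\theta^t_i\Vert^2$, which I would absorb against $-\tfrac{\rho_i}{2}\Vert\theta^{t+1}_i-\theta^t_i\Vert^2$ to leave the coefficient $-\tfrac{\rho_i-(3+4w_i)\nu_i}{2}$.

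The main obstacle is the bookkeeping of these two heterogeneous error sources. The smoothness/linearization error couples to the constraint residual and hence to $\Vert y^{t+1}_i-y^t_i\Vert$, whereas the finite-difference bias is a deterministic $\mathcal{O}(\delta_{i,t+1})$ quantity; both have to be allocated, via Young's inequality, between the dual increment $\Vert y^{t+1}_i-y^t_i\Vert^2$ and the primal descent term $\Vert\theta^{t+1}_i-\theta^t_i\Vert^2$ while keeping the coefficient of the latter negative. This is exactly where the largeness of $\rho_i$ in \cref{parameterassumption} (notably \cref{parameter_eq1}) is needed, and it is the precise reason the usual exact-ADMM descent lemma does not apply: the surrogate gradient does not vanish at $\theta^{t+1}_i$ for the true Lagrangian $\Psi$, so the customary sufficient-decrease property must be reconstructed from the smoothness and Hessian-Lipschitz estimates rather than taken for granted.
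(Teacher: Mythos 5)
Your proposal is correct and follows essentially the same route as the paper's own proof: your surrogate $\widehat\Psi$ is exactly the paper's $\tilde{\mathcal{L}}_i$, and the key steps coincide — strong convexity plus the optimality of $\theta^{t+1}_i$ for the surrogate, control of the gap $\mathcal{L}_i-\tilde{\mathcal{L}}_i$ via the $\nu_i$-smoothness of $F_i$ (\cref{fsmoothlemma}) and the $\alpha\zeta_i\beta^2_i\delta_{i,t+1}$ finite-difference bias, then substitution of $\theta^{t+1}_i-\theta^{t+1}=(y^{t+1}_i-y^t_i)/\rho_i$ from \cref{eq:update_y} and a Young-type splitting of the cross terms. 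The only difference is bookkeeping: the paper routes the approximation error through two intermediate Lagrangians $\hat{\mathcal{L}}_i$ and $\tilde{\mathcal{L}}_i$ rather than your single remainder function $e(\theta)$, which is an equivalent organization of the same estimates.
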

\begin{proof}
\label{proof_l_descent}
The proof can be found in Appendix.
\end{proof}

Based on Lemma \ref{l_descent}, we proceed to derive the successive difference of the augmented Lagrangian function $\mathcal{L}(\{\theta^t_i,y^t_i\},\theta^t)$ in the following lemma. Notably, due to the error induced by linear approximation and first-order Hessian estimation, the \textit{sufficient descent} property does not hold therein. 

\begin{lemma}
\label{L_descent}
Under Assumptions \ref{lowerbounded}--\ref{HessianLipschitz}, the following fact holds:
\begin{align}
    \label{L_descent_ineq}
    \nonumber
    &\,\mathcal{L}(\{\theta^{t+1}_i,y^{t+1}_i\},\theta^{t+1})-\mathcal{L}(\{\theta^t_i,y^t_i\},\theta^t)\\
    \nonumber
    \le&-\sum\nolimits_{i\in\mathcal{I}}(a_{i,e}\Vert\theta^{t+1}_i-\theta^t_i\Vert^2+a_{i,p}\Vert\theta^{t+1}-\theta^t\Vert^2\\
    &-b^{t+1}_{i,e}\Vert\theta^{t+1}_i-\theta^t_i\Vert-b^{t+1}_{i,p}\Vert\theta^{t+1}-\theta^t\Vert-c^{t+1}_i),
\end{align}
where $a_{i,e}$ and $a_{i,p}$ are defined in \cref{parameter_eq1,parameter_eq2}, respectively. $b^{t+1}_{i,e}$, $b^{t+1}_{i,e}$, and $c^{t+1}_i$ are defined as follows:
\begin{align}
    \label{parameter_bp}
    b^{t+1}_{i,p}\doteq \;&\frac{2\alpha\nu_i w^2_i\zeta_i\beta^2_i\delta_{i,t+1}}{\rho_i},\\
    \label{parameter_c}
    c^{t+1}_i\doteq\;&2(\delta_{i,t}+\delta_{i,t+1})^2(\alpha w_i\zeta_i\beta^2_i)^2(4w_i\nu_i/\rho^2_i+\frac{1}{\rho_i})\nonumber\\
    &+\frac{2(\alpha w_i\zeta_i\beta^2_i)^2\delta_{i,t+1}}{\rho_i}(\delta_{i,t}+\delta_{i,t+1}),\\
    \label{parameter_be}
    b^{t+1}_{i,e}\doteq\;& \alpha w_i\zeta_i\beta^2_i\delta_{i,t+1}.
\end{align}
\end{lemma}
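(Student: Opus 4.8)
The plan is to bound the one-round change of the augmented Lagrangian in \cref{eq:lagrangian} by inserting the two intermediate iterates that correspond to the three blocks of the update, namely
\begin{align*}
&\mathcal{L}(\{\theta^{t+1}_i,y^{t+1}_i\},\theta^{t+1})-\mathcal{L}(\{\theta^t_i,y^t_i\},\theta^t)\\
=\;&\underbrace{\mathcal{L}(\{\theta^{t+1}_i,y^{t+1}_i\},\theta^{t+1})-\mathcal{L}(\{\theta^{t+1}_i,y^{t}_i\},\theta^{t+1})}_{\text{dual step}}\\
&+\underbrace{\mathcal{L}(\{\theta^{t+1}_i,y^{t}_i\},\theta^{t+1})-\mathcal{L}(\{\theta^{t}_i,y^{t}_i\},\theta^{t+1})}_{\text{local primal step}}\\
&+\underbrace{\mathcal{L}(\{\theta^{t}_i,y^{t}_i\},\theta^{t+1})-\mathcal{L}(\{\theta^{t}_i,y^{t}_i\},\theta^{t})}_{\text{global step}},
\end{align*}
and to control each block separately. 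First I would handle the dual step: since the only $y_i$-dependence of $\mathcal{L}$ is through $\langle y_i,\theta_i-\theta\rangle$, this difference equals $\sum_{i}\langle y^{t+1}_i-y^t_i,\theta^{t+1}_i-\theta^{t+1}\rangle$, and substituting \cref{eq:update_y} in the form $\theta^{t+1}_i-\theta^{t+1}=\frac{1}{\rho_i}(y^{t+1}_i-y^t_i)$ turns it into the \emph{positive} quantity $\sum_i\frac{1}{\rho_i}\Vert y^{t+1}_i-y^t_i\Vert^2$. The local primal step is separable across clients and coincides exactly with $\sum_i[\mathcal{L}_i(\theta^{t+1}_i,\theta^{t+1},y^t_i)-\mathcal{L}_i(\theta^{t}_i,\theta^{t+1},y^t_i)]$, so \cref{l_descent} applies verbatim.

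For the global step, only the $\theta$-dependent part of $\mathcal{L}$ moves, and I would use that $\theta^{t+1}$ in \cref{eq:update_theta} is the exact minimizer of the surrogate obtained by \emph{linearizing} $R_h$ around $\theta^t$ while keeping the quadratic and linear terms in $\theta$ intact. That surrogate has Hessian $(\sum_i\rho_i)I$, so optimality gives a drop of at least $\frac{\sum_i\rho_i}{2}\Vert\theta^{t+1}-\theta^t\Vert^2$; adding back the linearization error of $R_h$, bounded by $\frac{\mu_r}{2}\Vert\theta^{t+1}-\theta^t\Vert^2$ via the $\mu_r$-smoothness in \cref{Lsmooth}, yields a net decrease of $-\frac{\sum_i\rho_i-\lambda\mu_r}{2}\Vert\theta^{t+1}-\theta^t\Vert^2$, which I distribute across clients as $-\sum_i\frac{\rho_i-\lambda\mu_r/|\mathcal{I}|}{2}\Vert\theta^{t+1}-\theta^t\Vert^2$.

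The crux — and the reason the usual sufficient-descent property fails — is the positive dual contribution $\sum_i\frac{1}{\rho_i}\Vert y^{t+1}_i-y^t_i\Vert^2$ together with the $\Vert y^{t+1}_i-y^t_i\Vert^2$ and $\Vert y^{t+1}_i-y^t_i\Vert$ terms inherited from \cref{l_descent}. My plan is to collapse all of these back onto $\Vert\theta^{t+1}-\theta^t\Vert$ through \cref{dualbound}, i.e.\ $\Vert y^{t+1}_i-y^t_i\Vert\le w_i\nu_i\Vert\theta^{t+1}-\theta^t\Vert+(\delta_{i,t}+\delta_{i,t+1})\alpha w_i\zeta_i\beta^2_i$, squaring via $(a+b)^2\le 2a^2+2b^2$. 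The $\Vert\theta^{t+1}-\theta^t\Vert^2$ pieces so produced carry the coefficient $2w_i^2\nu_i^2(4w_i\nu_i/\rho_i^2+1/\rho_i)$, which I subtract from the global-step decrease; requiring the remainder to stay negative is exactly \cref{parameter_eq2}, producing $a_{i,p}$, while the $\Vert\theta^{t+1}_i-\theta^t_i\Vert^2$ coefficient supplied by \cref{l_descent} is matched to $a_{i,e}$ of \cref{parameter_eq1} (with positivity secured by \cref{parameter_eq3}). The surviving $\delta_{i,t}$-dependent residuals then collect into the linear terms $b^{t+1}_{i,e},b^{t+1}_{i,p}$ of \cref{parameter_be,parameter_bp} and the constant $c^{t+1}_i$ of \cref{parameter_c}. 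I expect the main obstacle to be precisely this bookkeeping: guaranteeing that, after absorbing the dual-induced $\Vert\theta^{t+1}-\theta^t\Vert^2$ mass into the global decrease, every quadratic coefficient remains positive — which is what forces the largeness conditions on $\rho_i$ in \cref{parameterassumption} and why the clean descent is necessarily replaced by the inequality carrying the extra $b^{t+1}$ and $c^{t+1}$ terms.
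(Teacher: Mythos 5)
Your proposal is correct and follows essentially the same route as the paper's proof: the same dual/local-primal/global three-way splitting, the same exact identity $\sum_{i}\rho_i^{-1}\Vert y_i^{t+1}-y_i^t\Vert^2$ for the dual step, \cref{l_descent} applied verbatim to the local step, a linearized-surrogate strong-convexity argument for the global step (which the paper only sketches as ``derived in the similar way as in \cref{l_descent}'' but you spell out correctly), and the same absorption of the dual increments via \cref{dualbound} and $(a+b)^2\le 2a^2+2b^2$ into the coefficients $a_{i,p}$, $b^{t+1}_{i,e}$, $b^{t+1}_{i,p}$, $c^{t+1}_i$. Two immaterial nits: positivity of $a_{i,e}$ is guaranteed by \cref{parameter_eq1} rather than \cref{parameter_eq3} (and is not even needed for this lemma, only later for convergence), and the linearization error of the regularizer carries the factor $\lambda$, i.e.\ $\tfrac{\lambda\mu_r}{2}\Vert\theta^{t+1}-\theta^t\Vert^2$, which your final expression in fact reflects.
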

\begin{proof}
\label{proof_L_descent}
The proof can be found in Appendix.
\end{proof}

In the next lemma, we show that the augmented Lagrangian function $\mathcal{L}(\{\theta^t_i,y^t_i\},\theta^t)$ is lower bounded for $t\in\mathbb{N}$.

\begin{lemma}
\label{lagrangian_lowerbound}
Suppose that Assumptions \ref{lowerbounded}--\ref{parameterassumption} hold. Under \cref{alg}, the augmented Lagrangian function defined in \cref{eq:lagrangian} is lower bounded.
\end{lemma}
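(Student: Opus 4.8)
The plan is to show that the augmented Lagrangian is bounded below by relating it to the objective $F$, which is itself lower-bounded by \cref{lowerbounded}. The key algebraic observation is that the dual variable $y_i^t$ admits an explicit expression in terms of the primal quantities. Specifically, combining the optimality condition that defines $\theta_i^t$ in \cref{eq:update_thetai_inex} with the dual update \cref{eq:update_y}, one sees that $y_i^t = -w_i(\nabla L_i(\phi_i^t,\mathcal{D}^q_i)-\alpha g_i^t)$, i.e.\ the dual variable equals (the negative of) the inexact gradient of the local objective $w_i F_i$ evaluated at $\theta^t$. This lets me eliminate the inner-product term $\langle y_i^t,\theta_i^t-\theta^t\rangle$ in \cref{eq:lagrangian} and rewrite it purely in terms of $F_i$, a gradient, and the quadratic penalty.

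First I would substitute this identity into the definition \cref{eq:lagrangian}. The term $\langle y_i^t,\theta_i^t-\theta^t\rangle$ becomes an inner product of the (inexact) gradient of $w_iF_i$ with the displacement $\theta_i^t-\theta^t$. Using the $\nu_i$-smoothness of $F_i$ from \cref{fsmoothlemma}, I would apply the standard descent-lemma lower bound
\begin{align*}
    w_iF_i(\theta^t_i)\ge w_iF_i(\theta^t)+\langle w_i\nabla F_i(\theta^t),\theta^t_i-\theta^t\rangle-\frac{w_i\nu_i}{2}\Vert\theta^t_i-\theta^t\Vert^2,
\end{align*}
and then account for the gap between the true gradient $w_i\nabla F_i(\theta^t)$ and the inexact one $-y_i^t$ — this discrepancy is exactly the Hessian-estimation error, which \cref{fsmoothlemma}, \cref{gradientbound}, and \cref{HessianLipschitz} bound by a multiple of $\delta_{i,t}$. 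After collecting terms, each client's contribution to $\mathcal{L}$ is bounded below by $w_iF_i(\theta^t)$ plus a quadratic form in $\Vert\theta_i^t-\theta^t\Vert$ whose leading coefficient is $(\rho_i-w_i\nu_i)/2$ minus lower-order corrections; under \cref{parameterassumption} (in particular \cref{parameter_eq1,parameter_eq3}) this coefficient is strictly positive, so the quadratic is bounded below, and the $\delta_{i,t}$-dependent error terms are summable by the condition $\sum_t\delta_{i,t}<\infty$.

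Summing over $i\in\mathcal{I}$ and adding the regularizer $\lambda R_h(\theta^t,\theta_p)$, the reconstructed sum $\sum_i w_iF_i(\theta^t)+\lambda R_h(\theta^t,\theta_p)$ is precisely $F(\theta^t)$ as defined in \cref{F_definition}, which is lower-bounded by \cref{lowerbounded}. What remains are the positive quadratic terms (bounded below trivially by their minimum over $\theta_i^t-\theta^t$) and the accumulated $\delta_{i,t}$ corrections. Because the $\{\delta_{i,t}\}$ are summable, the total correction across all rounds $t$ is uniformly bounded, so $\mathcal{L}(\{\theta_i^t,y_i^t\},\theta^t)\ge F(\theta^t)-C\ge\inf_\theta F(\theta)-C>-\infty$ for a finite constant $C$ independent of $t$.

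The main obstacle I anticipate is controlling the inexactness cleanly: the dual identity holds only for the \emph{approximate} gradient involving $g_i^t$, not the true Hessian-gradient product, so every step that reintroduces $F_i$ or its true gradient must carry an explicit estimation-error term. The delicate accounting is to verify that all such errors are proportional to $\delta_{i,t}$ (using \cref{HessianLipschitz} to bound $\Vert g_i^t-\nabla^2 L_i(\theta^t,\mathcal{D}^s_i)r_i^t\Vert$ by $O(\delta_{i,t}\zeta_i\beta_i^2)$) and hence summable, so that their cumulative effect is a finite constant rather than something growing with $t$. This is where \cref{parameterassumption}'s requirement $\sum_t\delta_{i,t}<\infty$ is indispensable, and keeping track of which terms are genuinely summable versus merely bounded per-round will be the crux of the argument.
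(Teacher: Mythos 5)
Your proposal is correct and follows essentially the same route as the paper's proof: eliminate the dual variable via the identity $-y_i^t=w_i(\nabla L_i(\phi_i^t,\mathcal{D}^q_i)-\alpha g_i^t)$ coming from the optimality condition of \cref{eq:update_thetai_inex} and the update \cref{eq:update_y}, apply the $\nu_i$-smoothness descent bound from \cref{fsmoothlemma}, control the true-versus-inexact gradient gap by $O(\alpha w_i\zeta_i\beta_i^2\delta_{i,t})$, and conclude from the positive quadratic coefficient guaranteed by \cref{parameterassumption} together with the lower bound on $F$ from \cref{lowerbounded}. One small imprecision: no accumulation of corrections "across all rounds" is needed, since $\mathcal{L}(\{\theta_i^t,y_i^t\},\theta^t)$ depends only on the round-$t$ iterates; per-round boundedness of $\delta_{i,t}$ (immediate from monotonicity) already makes the residual $-b^2/(4a)$-type term uniform in $t$, and the summability $\sum_t\delta_{i,t}<\infty$ is only needed later in the convergence theorem.
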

\begin{proof}
\label{proof_lagrangian_lowerbound}
The proof can be found in Appendix.
\end{proof}

Now, we are ready to establish the convergence and characterize the communication complexity for \cref{alg}.

\begin{restatable}[Convergence]{theorem}{conv}
\label{convergence}
Under Assumptions \ref{lowerbounded}--\ref{parameterassumption}, the following facts hold for \cref{alg}:
\begin{enumerate}[(i)]
    \item The sequence of $\theta_t$ has at least one limit point and each limit point $\theta^*$ is a stationary solution of Problem (\ref{prim_prob})-(\ref{eq:phi}), that is, $\Vert \nabla F(\theta^*)\Vert=0$.
    \item \cref{alg} finds an $\epsilon$-FOSP of Problem (\ref{prim_prob})-(\ref{eq:phi}) after at most $\mathcal{O}(1/\epsilon^2)$ communication rounds.
\end{enumerate}
\end{restatable}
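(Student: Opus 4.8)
The plan is to turn the broken \emph{sufficient descent} of \cref{L_descent} into a telescoping bound with \emph{finite accumulated error}, and then read off both the stationarity in (i) and the $\mathcal{O}(1/\epsilon^2)$ rate in (ii) from a single estimate on $\Vert\nabla F(\theta^t)\Vert$.

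First I would extract an exact expression for $\nabla F(\theta^t)$ from the update rules. Combining the inexact local update \cref{eq:update_thetai_inex} with the dual update \cref{eq:update_y} eliminates $\theta_i^t-\theta^t$ and yields the clean identity $y_i^t=-w_i(\nabla L_i(\phi_i^t,\mathcal{D}_i^q)-\alpha g_i^t)$, i.e.\ each dual variable equals the negated inexact meta-gradient of client $i$. Since the true meta-gradient is $w_i\nabla F_i(\theta^t)=w_i(\nabla L_i(\phi_i^t,\mathcal{D}_i^q)-\alpha\nabla^2 L_i(\theta^t,\mathcal{D}_i^s)\nabla L_i(\phi_i^t,\mathcal{D}_i^q))$, a central-difference Taylor estimate using \cref{HessianLipschitz} and the gradient bound \cref{gradientbound} gives $\Vert g_i^t-\nabla^2 L_i(\theta^t,\mathcal{D}_i^s)\nabla L_i(\phi_i^t,\mathcal{D}_i^q)\Vert\le\zeta_i\beta_i^2\delta_{i,t}$, hence $\Vert y_i^t+w_i\nabla F_i(\theta^t)\Vert\le\alpha w_i\zeta_i\beta_i^2\delta_{i,t}$. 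Substituting this and the server optimality \cref{eq:update_theta} into $\nabla F(\theta^t)=\sum_i w_i\nabla F_i(\theta^t)+\lambda\nabla_{\theta^t}R_h(\theta^t,\theta_p)$, the dual variables cancel and I obtain
\begin{align*}
\nabla F(\theta^t)=&\sum_{i\in\mathcal{I}}\alpha w_i\big(g_i^t-\nabla^2 L_i(\theta^t,\mathcal{D}_i^s)\nabla L_i(\phi_i^t,\mathcal{D}_i^q)\big)\\
&+\sum_{i\in\mathcal{I}}\rho_i(\theta_i^t-\theta^{t+1}),
\end{align*}
so that $\Vert\nabla F(\theta^t)\Vert\le\sum_i\alpha w_i\zeta_i\beta_i^2\delta_{i,t}+\sum_i\rho_i(\Vert\theta_i^t-\theta^t\Vert+\Vert\theta^{t+1}-\theta^t\Vert)$. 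Thus the stationarity residual is controlled entirely by the primal increments, the consensus gaps $\Vert\theta_i^t-\theta^t\Vert$, and the vanishing sequence $\delta_{i,t}$.

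Next I would prove these quantities are square-summable. Starting from \cref{L_descent}, the obstruction is the extra non-negative terms $b^{t+1}_{i,e},b^{t+1}_{i,p},c^{t+1}_i$, which destroy monotone descent; the key is that each is tied to $\delta_{i,t}$. I would apply Young's inequality to the linear terms, e.g.\ $b^{t+1}_{i,e}\Vert\theta^{t+1}_i-\theta^t_i\Vert\le\tfrac{a_{i,e}}{2}\Vert\theta^{t+1}_i-\theta^t_i\Vert^2+\tfrac{(b^{t+1}_{i,e})^2}{2a_{i,e}}$, absorbing half of the quadratic coefficients (positive by \cref{parameter_eq1,parameter_eq2}) and leaving a residual of order $\delta_{i,t+1}^2$. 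Because $\{\delta_{i,t}\}$ is non-increasing and summable (\cref{parameterassumption}), the sequences $\delta_{i,t+1}^2,(\delta_{i,t}+\delta_{i,t+1})^2,\delta_{i,t+1}(\delta_{i,t}+\delta_{i,t+1})$ appearing in the $b^2$ and $c^{t+1}_i$ terms are all summable. Telescoping the resulting inequality over $t$ and invoking the lower bound of \cref{lagrangian_lowerbound} then forces $\sum_t\Vert\theta^{t+1}_i-\theta^t_i\Vert^2<\infty$ and $\sum_t\Vert\theta^{t+1}-\theta^t\Vert^2<\infty$. Feeding the primal increments through \cref{dualbound} bounds $\Vert y^{t+1}_i-y^t_i\Vert$, and since $\theta_i^t-\theta^t=(y_i^t-y_i^{t-1})/\rho_i$ by \cref{eq:update_y}, the consensus gaps are square-summable as well.

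Finally I assemble the pieces. For (i), square-summability forces all three controlling quantities to zero, so the displayed bound gives $\Vert\nabla F(\theta^t)\Vert\to0$ along the \emph{entire} sequence; boundedness of $\{\theta^t\}$ (from \cref{lagrangian_lowerbound} together with \cref{lowerbounded}) furnishes a limit point $\theta^*$, and continuity of $\nabla F$ (Lipschitz by \cref{fsmoothlemma} and \cref{Lsmooth}) yields $\Vert\nabla F(\theta^*)\Vert=0$. For (ii), squaring the displayed bound and summing over $t=0,\dots,T-1$, every term on the right is a convergent series, so $\sum_{t=0}^{T-1}\Vert\nabla F(\theta^t)\Vert^2\le B$ with $B$ independent of $T$; hence $\min_{0\le t<T}\Vert\nabla F(\theta^t)\Vert^2\le B/T$, and requiring this to be at most $\epsilon^2$ gives the $\mathcal{O}(1/\epsilon^2)$ round count. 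The main obstacle is precisely the failure of sufficient descent in \cref{L_descent}: the entire argument hinges on bounding the \emph{accumulated} error rather than each per-step error, which is why the summability $\sum_t\delta_{i,t}<\infty$ and the careful Young-inequality bookkeeping that keeps the residual quadratic coefficients positive (via \cref{parameterassumption}) are indispensable.
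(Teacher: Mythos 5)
Your proposal is correct: it uses the same supporting ingredients as the paper (\cref{L_descent}, \cref{dualbound}, \cref{lagrangian_lowerbound}, and the Hessian-estimation error bound $\Vert g^t_i-\nabla^2 L_i(\theta^t,\mathcal{D}^s_i)\nabla L_i(\phi^t_i,\mathcal{D}^q_i)\Vert\le\zeta_i\beta^2_i\delta_{i,t}$), and your key identity $\nabla F(\theta^t)=\sum_i\alpha w_i\bigl(g^t_i-\nabla^2 L_i(\theta^t,\mathcal{D}^s_i)\nabla L_i(\phi^t_i,\mathcal{D}^q_i)\bigr)+\sum_i\rho_i(\theta^t_i-\theta^{t+1})$ does follow from $y^t_i=-w_i\tilde{\nabla}F_i(\theta^t)$ (via \cref{eq:update_thetai_inex} and \cref{eq:update_y}) together with the optimality condition behind \cref{eq:update_theta}. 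However, you assemble these pieces along a genuinely different route. Where the paper handles the broken sufficient descent by a case analysis on the sign of the right-hand side of \cref{L_descent} (arguing that the positive roots of the per-step quadratics shrink to zero, then treating the subsequence where the quantity is nonpositive separately), you absorb the linear error terms $b^{t+1}_{i,e},b^{t+1}_{i,p}$ by Young's inequality into the quadratic terms whose coefficients \cref{parameterassumption} keeps positive, leaving a summable residual, and then telescope against \cref{lagrangian_lowerbound} to get square-summability of the increments in one stroke; this is cleaner and avoids the somewhat informal roots-of-quadratics limit argument. For stationarity, the paper works through the block gradients of the Lagrangian (\cref{convergence_eq7}, \cref{convergence_eq9}, \cref{convergence_eq13}) and passes to limit points, which yields the extra information that all primal-dual residuals of the ADMM iterates vanish; your direct bound on $\Vert\nabla F(\theta^t)\Vert$ is more economical and gives the stronger conclusion that the gradient norm vanishes along the \emph{entire} sequence, not just at limit points. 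For the rate, the paper defines hitting times $T^2(\epsilon)$, $T(\epsilon)$ and chains them, while you use the standard $\min_{0\le t<T}\Vert\nabla F(\theta^t)\Vert^2\le B/T$ argument from a $T$-independent bound on the summed squared norms--both give $\mathcal{O}(1/\epsilon^2)$. One caveat you share with the paper: the existence of a limit point is asserted from boundedness of the augmented Lagrangian without an explicit coercivity (or bounded-iterates) argument, so your proposal is no more rigorous than the paper on that single step, but it is not a new gap either.
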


\begin{proof}
First, we prove part (i). Recall that the RHS of \cref{L_descent_ineq} is the sum of independent quadratic functions of $\Vert\theta^{t+1}_i-\theta^t_i\Vert$ and $\Vert\theta^{t+1}-\theta^t\Vert$. Due to \cref{parameterassumption} and \cref{L_descent}, for each $i\in\mathcal{I}$, based on the form of the roots of quadratic functions, it is easy to see that there exist $\sigma^{t+1}_i$ and $\gamma^{t+1}_i$ such that
\begin{align}
\label{convergence_eq1}
    \lim_{t\rightarrow\infty}\sigma^{t+1}_i=0,\quad\lim_{t\rightarrow\infty}\gamma^{t+1}_i=0.
\end{align}
When $\Vert\theta^{t+1}_i-\theta^t_i\Vert>\sigma^{t+1}_i$, we have
\begin{align}
\label{convergence_eq11}
    a_{i,e}\Vert\theta^{t+1}_i-\theta^t_i\Vert^2-b^{t+1}_{i,e}\Vert\theta^{t+1}_i-\theta^t_i\Vert-c^{t+1}_i>0;
\end{align}
and when $\Vert\theta^{t+1}-\theta^t\Vert>\gamma^{t+1}_i$, the following holds:
\begin{align}
\label{convergence_eq12}
    a_{i,p}\Vert\theta^{t+1}-\theta^t\Vert^2-b^{t+1}_{i,p}\Vert\theta^{t+1}-\theta^t\Vert>0.
\end{align}
We show $\lim_{t\rightarrow\infty}\Vert\theta^{t+1}_i-\theta^t_i\Vert=0$, $\lim_{t\rightarrow\infty}\Vert\theta^{t+1}-\theta^t\Vert=0$ in the following two steps.
\begin{enumerate}[1)]
    \item Suppose that there exists $T\ge 0$ such that for all $t\ge T$, the following is true: 
    \begin{align*}
        \sum\nolimits_{i\in\mathcal{I}}(a_{i,e}\Vert\theta^{t+1}_i-\theta^t_i\Vert^2+a_{i,p}\Vert\theta^{t+1}-\theta^t\Vert^2-c^{t+1}_i\nonumber\\
        -b^{t+1}_{i,e}\Vert\theta^{t+1}_i-\theta^t_i\Vert-b^{t+1}_{i,p}\Vert\theta^{t+1}-\theta^t\Vert)>0.
    \end{align*}
    Then, under \cref{parameterassumption}, using Lemmas \ref{L_descent}--\ref{lagrangian_lowerbound}, the value of the augmented Lagrangian function $\mathcal{L}(\{\theta^{t+1}_i,y^{t+1}_i\},\theta^{t+1})$ will monotonically decrease and converges. Thus, we have
    \begin{align*}
        \lim_{t\rightarrow\infty}\sum\nolimits_{i\in\mathcal{I}}(a_{i,e}\Vert\theta^{t+1}_i-\theta^t_i\Vert^2-b^{t+1}_{i,e}\Vert\theta^{t+1}_i-\theta^t_i\Vert-c^{t+1}_i\\
        +a_{i,p}\Vert\theta^{t+1}-\theta^t\Vert^2-b^{t+1}_{i,p}\Vert\theta^{t+1}-\theta^t\Vert)=0,
    \end{align*}
    which implies that $\Vert\theta^{t+1}_i-\theta^t_i\Vert$ and $\Vert\theta^{t+1}-\theta^t\Vert$ converge to the positive roots of corresponding quadratic functions, \ie, LHS of \cref{convergence_eq11,convergence_eq12}; otherwise, the limitation will not be 0. Due to \cref{convergence_eq1}, the positive roots of the above quadratic function converge to 0, which implies
    \begin{align}
        \label{convergence_eq2}
        &\lim_{t\rightarrow\infty}\Vert\theta^{t+1}_i-\theta^t_i\Vert=0,~\forall~i\in\mathcal{I},\\
        \label{convergence_eq3}
        &\lim_{t\rightarrow\infty}\Vert\theta^{t+1}-\theta^t\Vert=0.
    \end{align}
    By \cref{dualbound} and \cref{eq:update_y}, we obtain
    \begin{align}
        \label{convergence_eq4}
        \lim_{t\rightarrow\infty}\Vert y^{t+1}_i- y^t_i\Vert&=0,~\forall~i\in\mathcal{I},\\
        \label{convergence_eq5}
        \lim_{t\rightarrow\infty}\Vert\theta^{t+1}_i-\theta^{t+1}\Vert&=0,~\forall~i\in\mathcal{I}.
    \end{align}
    \item Suppose that there exists a sequence $\{t_j\mid j\in\mathbb{N}\}$ such that
    \begin{align}
        \label{convergence_eq14}
        \sum\nolimits_{i\in\mathcal{I}}(a_{i,e}\Vert\theta^{t_j+1}_i-\theta^{t_j}_i\Vert^2+a_{i,p}\Vert\theta^{t_j+1}-\theta^{t_j}\Vert^2-c^{t_j+1}_i\nonumber\\
        -b^{t_j+1}_{i,e}\Vert\theta^{t_j+1}_i-\theta^{t_j}_i\Vert-b^{t_j+1}_{i,p}\Vert\theta^{t_j+1}-\theta^{t_j}\Vert)\le0.
    \end{align}
    Due to \cref{parameterassumption}, the minimum value of the above quadratic function converges to 0, which implies
    \begin{align*}
        \lim_{t\rightarrow\infty}\sum\nolimits_{i\in\mathcal{I}}(a_{i,e}\Vert\theta^{t_j+1}_i-\theta^{t_j}_i\Vert^2+a_{i,p}\Vert\theta^{t_j+1}-\theta^{t_j}\Vert^2\\
        -c^{t_j+1}_i-b^{t_j+1}_{i,e}\Vert\theta^{t_j+1}_i-\theta^{t_j}_i\Vert-b^{t_j+1}_{i,p}\Vert\theta^{t_j+1}-\theta^{t_j}\Vert)=0.
    \end{align*}
    Analogously to \cref{convergence_eq2,convergence_eq3}, for all $i\in\mathcal{I}$, we have 
    \begin{align*}
        \lim_{t\rightarrow\infty}\Vert\theta^{t_j+1}_i-\theta^{t_j}_i\Vert=0,\quad\lim_{t\rightarrow\infty}\Vert\theta^{t_j+1}-\theta^{t_j}\Vert=0.
    \end{align*}
    Slightly abusing notation, we define the auxiliary  sequence $\{t_q\mid q\in\mathbb{N}\}\doteq\mathbb{N}-\{t_j\mid j\in\mathbb{N}\}$. Note that
    \begin{align}
        \label{convergence_eq15}
        \sum\nolimits_{i\in\mathcal{I}}(a_{i,e}\Vert\theta^{t_q+1}_i-\theta^{t_q}_i\Vert^2+a_{i,p}\Vert\theta^{t_q+1}-\theta^{t_q}\Vert^2-c^{t_q+1}_i\nonumber\\
        -b^{t_q+1}_{i,e}\Vert\theta^{t_q+1}_i-\theta^{t_q}_i\Vert-b^{t_q+1}_{i,p}\Vert\theta^{t_q+1}-\theta^{t_q}\Vert)>0.
    \end{align}
    Similar to 1), for all $i\in\mathcal{I}$, we have
    \begin{align*}
        \lim_{t\rightarrow\infty}\Vert\theta^{t_q+1}-\theta^{t_q}\Vert=0,\quad\lim_{t\rightarrow\infty}\Vert\theta^{t_q+1}_i-\theta^{t_q}_i\Vert=0.
    \end{align*}
    Based on the above observations, for any $\eta>0$, there exists $k\ge 0$ such that when $j>k$ and $q>k$, the following facts hold for all $i\in\mathcal{I}$:
    \begin{align*}
        &\Vert\theta^{t_j+1}_i-\theta^{t_j}_i\Vert\le\eta,\quad
        \Vert\theta^{t_j+1}-\theta^{t_j}\Vert\le\eta,\\
        &\Vert\theta^{t_q+1}_i-\theta^{t_q}_i\Vert\le\eta,\quad\Vert\theta^{t_q+1}-\theta^{t_q}\Vert\le\eta.
    \end{align*}
    Thus, for any $t>t_k$ and $i\in\mathcal{I}$, we can write
    \begin{align*}
        \Vert\theta^{t+1}_i-\theta^t_i\Vert\le\eta,\quad\Vert\theta^{t+1}-\theta^t\Vert\le\eta,
    \end{align*}
    which implies that 
    \begin{align*}
        \lim_{t\rightarrow\infty}\Vert\theta^{t+1}_i-\theta^t_i\Vert=0,\quad\lim_{t\rightarrow\infty}\Vert\theta^{t+1}-\theta^t\Vert=0,
    \end{align*}
    and therefore \cref{convergence_eq2,convergence_eq5} hold.
\end{enumerate}

Using the optimality condition of \cref{eq:update_thetai_inex} leads to
\begin{align}
    \label{convergence_eq6}
    w_i\tilde{\nabla}F_i(\theta^{t+1})+ y^{t+1}_i=0,
\end{align}
where $\tilde{\nabla}F_i(\theta^{t+1})\doteq \nabla f^q_i(\phi^{t+1}_i)-\alpha g^{t+1}_i$. For each $i\in\mathcal{I}$, we derive an upper bound of $\Vert\nabla_{\theta^{t+1}_i}\mathcal{L}(\{\theta^{t+1}_i,y^{t+1}_i\},\theta^{t+1})\Vert$ as
\begin{align}
    \label{convergence_eq7}
    \nonumber
    &\Vert\nabla_{\theta^{t+1}_i}\mathcal{L}(\{\theta^{t+1}_i,y^{t+1}_i\},\theta^{t+1})\Vert\\
    \nonumber
    =\,&\Vert w_i\nabla F_i(\theta^{t+1}_i)+ y^{t+1}_i+\rho_i(\theta^{t+1}_i-\theta^{t+1})\Vert\\
    \nonumber
    \le\,&\Vert w_i\nabla  F_i(\theta^{t+1}_i)-w_i\tilde{\nabla}F_i(\theta^{t+1})\Vert+\rho_i\Vert\theta^{t+1}_i-\theta^{t+1}\Vert\\
    \nonumber
    &+\Vert w_i\tilde{\nabla}F_i(\theta^{t+1})+ y^{t+1}_i\Vert\\
    \nonumber
    \le\,& \Vert w_i\nabla  F_i(\theta^{t+1}_i)-w_i\nabla F_i(\theta^{t+1})\Vert+\Vert w_i\nabla F_i(\theta^{t+1})\\
    \nonumber
    &-w_i\tilde{\nabla}F_i(\theta^{t+1})\Vert+\rho_i\Vert\theta^{t+1}_i-\theta^{t+1}\Vert\\
    \nonumber
    \le\,&\Vert w_i\nabla  F_i(\theta^{t+1}_i)-w_i\nabla F_i(\theta^{t+1})\Vert+\rho_i\Vert\theta^{t+1}_i-\theta^{t+1}\Vert\\
    &+\alpha w_i\zeta_i\beta^2_i\delta_{i,t+1}.
\end{align}
Taking limitation of $t\rightarrow\infty$ on both sides of \cref{convergence_eq7} and using \cref{parameterassumption} and \cref{convergence_eq5} yields
\begin{align}
    \label{convergence_eq8}
    \Vert\nabla_{\theta^*_i}\mathcal{L}(\{\theta^*_i,y^*_i\},\theta^*)\Vert=0,~\forall~i\in\mathcal{I}.
\end{align}
Note that
\begin{align}
\label{convergence_eq9}
    \Vert\nabla_{\theta^{t+1}}\mathcal{L}(\{\theta^{t+1}_i,y^{t+1}_i\},\theta^{t+1})\Vert\le \sum\nolimits_{i\in\mathcal{I}}(\rho_i\Vert\theta^{t+1}_i-\theta^t_i\Vert\nonumber\\
    +w_i\nu_i\Vert\theta^{t+1}-\theta^t\Vert+2\alpha w_i\zeta_i\beta^2_i\delta_{i,t}).
\end{align}
Similarly, we obtain
\begin{align}
\label{convergence_eq10}
    \nabla_{\theta^*}\mathcal{L}(\{\theta^*_i,y^*_i\},\theta^*)=0.
\end{align}
Finally, we bound $\Vert \sum_{i\in\mathcal{I}}w_i\nabla F_i(\theta^{t+1})+\lambda \nabla R_h(\theta^{t+1},\theta_p)\Vert$:
\begin{align}
    \label{convergence_eq13}
    \nonumber
    &\Vert \sum\nolimits_{i\in\mathcal{I}}w_i\nabla F_i(\theta^{t+1})+\lambda \nabla R_h(\theta^{t+1},\theta_p)\Vert\\
    \nonumber
    \le\,& \Vert \sum\nolimits_{i\in\mathcal{I}}w_i\nabla F_i(\theta^{t+1})+\lambda \nabla R_h(\theta^{t+1},\theta_p)\\
    \nonumber
    &-\sum\nolimits_{i\in\mathcal{I}}\nabla_{\theta^{t+1}_i}\mathcal{L}(\{\theta^{t+1}_i,y^t_i\},\theta^{t+1})\Vert\\
    \nonumber
    &+\Vert\sum\nolimits_{i\in\mathcal{I}}\nabla_{\theta^{t+1}_i}\mathcal{L}(\{\theta^{t+1}_i,y^t_i\},\theta^{t+1}) \Vert\\
    \nonumber
    =\,&\Vert\sum\nolimits_{i\in\mathcal{I}}w_i(\nabla F_i(\theta^{t+1})-\nabla F_i(\theta^{t+1}_i))+\lambda \nabla R_h(\theta^{t+1},\theta_p)\\
    \nonumber
    &-\sum\nolimits_{i\in\mathcal{I}}(y^t_i+\rho_i(\theta^{t+1}_i-\theta^{t+1}))\Vert\\
    \nonumber
    &+\Vert\sum\nolimits_{i\in\mathcal{I}}\nabla_{\theta^{t+1}_i}\mathcal{L}(\{\theta^{t+1}_i,y^t_i\},\theta^{t+1}) \Vert\\
    \nonumber
    \le\,& \sum\nolimits_{i\in\mathcal{I}}w_i\nu_i\Vert\theta^{t+1}-\theta^{t+1}_i\Vert+\Vert \nabla_{\theta^{t+1}}\mathcal{L}(\{\theta^{t+1}_i,y^t_i\},\theta^{t+1})\Vert\\
    &+\sum\nolimits_{i\in\mathcal{I}}\Vert\nabla_{\theta^{t+1}_i}\mathcal{L}(\{\theta^{t+1}_i,y^t_i\},\theta^{t+1})\Vert.
\end{align}
Taking limitation on the both sides of \cref{convergence_eq13} by $t\rightarrow\infty$ and combining \cref{convergence_eq5,convergence_eq8,convergence_eq10} yields part (i).

Next, we proceed to prove part (ii). Summing up \cref{L_descent_ineq} from $t=0$ to $T$ and taking a limitation on $T$, there exist some positive constants $a_2$ and $a_1$ corresponding to $\rho_i$ such that
\begin{align}
    \sum\nolimits^\infty_{t=0}z^t\le \mathcal{L}(\{\theta^0_i,y^0_i\},\theta^0)-\mathcal{L}(\{\theta^*_i,y^*_i\},\theta^*)<\infty
\end{align}
where $z^t$ is denoted by 
\begin{align*}
    z^t&\doteq \underbrace{a_2\sum\nolimits_{i\in\mathcal{I}}(\Vert\theta^{t+1}_i-\theta^t_i\Vert^2+\Vert\theta^{t+1}-\theta^t\Vert^2)}_{z^t_2}\\
    &-\underbrace{a_1\sum\nolimits_{i\in\mathcal{I}}(\delta_{i,t+1}(\Vert\theta^{t+1}_i-\theta^t_i\Vert+\Vert\theta^{t+1}-\theta^t\Vert)+\delta^2_{i,t})}_{z^t_1}.
\end{align*}
$z^t_2$, $z^t_1$ are denoted as the first and second sum terms, respectively. From \cref{parameterassumption} and \cref{convergence}, it is easy to see that there exists some positive constant $a_3$ such that the following holds:
\begin{align*}
    \sum^{\infty}_{t=0}z^t_1    &=a_1\sum^{\infty}_{t=0}\sum_{i\in\mathcal{I}}(\delta_{i,t+1}(\Vert\theta^{t+1}_i-\theta^t_i\Vert+\Vert\theta^{t+1}-\theta^t\Vert)+\delta^2_{i,t})\\
    &\le a_1\sum_{i\in\mathcal{I}}(\sum^{\infty}_{t=0}2\delta_{i,t+1}+\sum^{\infty}_{t=0}\delta^2_{i,t})+a_3\\
    &< \infty.
\end{align*}
Hence, we have
\begin{align}
    \sum^{\infty}_{t=1}z^t_2\le b<\infty,~\text{for some constant}~b>0.
\end{align}
Due to $\sum^{\infty}_{t=0}z^t_1<\infty$, it is clear that the augmented Lagrangian function is upper bounded and $\theta^t$ is finite, implying that $\{\theta^t\}$ has at least one limit point. Denote
\begin{align*}
    T^2(\epsilon)&\doteq\min\{t\mid \Vert\theta^{t+1}-\theta^t\Vert^2\le\epsilon,t\ge0\},\\
    T^2_i(\epsilon)&\doteq\min\{t\mid \Vert\theta^{t+1}_i-\theta^t_i\Vert^2\le\epsilon,t\ge0\}.
\end{align*}
Then, we have
\begin{align}
    \label{complexity_eq1}
    &a_2 T^2(\epsilon)\epsilon\le\sum^{\infty}_{t=1}z^t_2\le b,\\
    \label{complexity_eq2}
    &a_2 T^2_i(\epsilon)\epsilon\le\sum^{\infty}_{t=1}z^t_2\le b.
\end{align}
That is, $T^2(\epsilon)=\mathcal{O}(1/\epsilon)$ and $T^2_i(\epsilon)=\mathcal{O}(1/\epsilon)$ hold. Further, we represent
\begin{align*}
    T(\epsilon)&\doteq\min\{t\mid\Vert\theta^{t+1}-\theta^t\Vert\le\epsilon,t\ge0\},\\
    T_i(\epsilon)&\doteq\min\{t\mid\Vert\theta^{t+1}_i-\theta^t_i\Vert\le\epsilon,t\ge0\}.
\end{align*}
Based on \cref{complexity_eq1,complexity_eq2}, we have $T(\epsilon)=\mathcal{O}(1/\epsilon^2)$ and $T_i(\epsilon)=\mathcal{O}(1/\epsilon^2)$. Due to \cref{parameterassumption}, combining \cref{dualbound_ineq_1,convergence_eq7} yields
\begin{align*}
    \Vert\nabla_{\theta^{t+1}_i}\mathcal{L}(\{\theta^{t+1}_i,y^{t+1}_i\},\theta^{t+1})\Vert
    \le(w_i\nu_i+\rho_i)w_i\nu_i/\rho_i\\
    \cdot\Vert\theta^{t+1}-\theta^t\Vert+(2w_i\nu_i+3\rho_i)\alpha w_i\zeta_i\beta^2_i\delta_{i,t}/\rho_i.
\end{align*}
Similarly, it is easy to see that the convergence rate of $\delta_{i,t}$ is $\mathcal{O}(1/\epsilon)$. Therefore, for any $\epsilon>0$, \cref{alg} finds a solution $(\{\theta_i\},\theta, y)$ with $\Vert\nabla_{\theta_i}\mathcal{L}(\{\theta_i,y_i\},\theta)\Vert\le\epsilon$ after at most $\mathcal{O}(1/\epsilon^2)$ communication rounds. In the same way, it can be shown that 
\begin{align*}
     \Vert\nabla_{\theta^{t+1}}\mathcal{L}(\{\theta^{t+1}_i,y^{t+1}_i\},\theta^{t+1})\Vert\le \sum\nolimits_{i\in\mathcal{I}}(\rho_i\Vert\theta^{t+1}_i-\theta^t_i\Vert\\
     +w_i\nu_i\Vert\theta^{t+1}-\theta^t\Vert+2\alpha w_i\zeta_i\beta^2_i\delta_{i,t}).
\end{align*}
It implies the complexity w.r.t.  $\Vert\nabla_{\theta^{t+1}}\mathcal{L}(\{\theta^{t+1}_i,y^{t+1}_i\},\theta^{t+1})\Vert$ is consistent with  $\Vert\nabla_{\theta^{t+1}_i}\mathcal{L}(\{\theta^{t+1}_i,y^{t+1}_i\},\theta^{t+1})\Vert$. Moreover, it can be easily shown that $\Vert\nabla_{y_i}\mathcal{L}(\{\theta_i,y_i\},\theta)\Vert$ also enjoys the same communication complexity. Combining with \cref{convergence_eq13}, it completes the proof.
\end{proof}

\cref{convergence} indicates that \cref{alg} always converges to a stationary point of Problem (\ref{prim_prob})-(\ref{eq:phi}), and it requires $\mathcal{O}(1/\epsilon^2)$ communication rounds between the clients and the server to find an $\epsilon$-FOSP of the problem. It is worth noting that in contrast to the previous methods \cite{lin2020collaborative,fallah2020personalized,fallah2020convergence}, \name{} can converge under mild conditions, \ie, not depending on the similarity assumptions (\cref{similarity}) across different clients. This implies that \cref{alg} is well-suited for unbalanced local datasets, revealing its great potential in dealing with the inherent heterogeneity in FL.

Further, to characterize the impact of sample sizes on the expected performance, we provide the following corollary.

\begin{corollary}
\label{coro:data_impact}
Given Assumptions \ref{lowerbounded}--\ref{parameterassumption}, the $\epsilon$-FOSP solution $\theta_{\epsilon}$ found by \cref{alg} satisfies that
\begin{align}
    \nonumber
    &\mathbb{E}\Big\{\Big\Vert \sum\nolimits_{i\in\mathcal{I}}w_i L_i(\theta_{\epsilon}-\alpha\nabla L_i(\theta_{\epsilon}))+\lambda R_h(\theta_{\epsilon},\theta_p)\Big\Vert\Big\}\\
    &\le\epsilon+\sum\nolimits_{i\in\mathcal{I}}w_i\sigma^g_i(\frac{\alpha\mu_i}{\sqrt{D^{s}_i}}+\frac{1}{\sqrt{D^{q}_i}})
\end{align}
where $w_i=1/|\mathcal{I}|$ and $L_i(\cdot)$ is the expected loss denoted by:
\begin{align}
    \label{eq:expected_loss}
    L_i(\theta)\doteq\mathbb{E}_{(\mathbf{x},\mathbf{y})\sim P_i}\left\{l_i(\theta,(\mathbf{x},\mathbf{y}))\right\}.
\end{align}
\end{corollary}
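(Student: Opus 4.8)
The plan is to compare the gradient of the population meta-objective against the empirical one controlled by \cref{convergence}, and to bound their discrepancy through the bounded-variance assumption. Write $\tilde F(\theta)\doteq\sum_{i\in\mathcal{I}}w_i L_i(\phi_i(\theta))+\lambda R_h(\theta,\theta_p)$ for the population objective, with $L_i$ the expected loss from \cref{eq:expected_loss} and $\phi_i(\theta)=\theta-\alpha\nabla L_i(\theta)$, and let $F$ denote the empirical objective built from $\mathcal{D}^s_i,\mathcal{D}^q_i$. Since $\theta_\epsilon$ is an $\epsilon$-FOSP of $F$, we have $\Vert\nabla F(\theta_\epsilon)\Vert\le\epsilon$, so by the triangle inequality it suffices to bound $\mathbb{E}\Vert\nabla\tilde F(\theta_\epsilon)-\nabla F(\theta_\epsilon)\Vert$. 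First I would observe that the regularizer term $\lambda\nabla R_h(\theta,\theta_p)$ is deterministic and identical in both gradients, hence cancels, leaving only the per-client meta-gradient discrepancies to control.

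Next, for each client I would decompose the meta-gradient difference into an inner (support-set) contribution and an outer (query-set) contribution. Writing $\hat\phi_i=\theta_\epsilon-\alpha\nabla L_i(\theta_\epsilon,\mathcal{D}^s_i)$ for the empirical inner-updated parameter, I would insert the intermediate term $\nabla L_i(\hat\phi_i)$ and split
\begin{align*}
\Vert\nabla L_i(\phi_i)-\nabla L_i(\hat\phi_i,\mathcal{D}^q_i)\Vert\le\Vert\nabla L_i(\phi_i)-\nabla L_i(\hat\phi_i)\Vert+\Vert\nabla L_i(\hat\phi_i)-\nabla L_i(\hat\phi_i,\mathcal{D}^q_i)\Vert.
\end{align*}
The first term I would bound via the smoothness in \cref{Lsmooth}, giving $\mu_i\Vert\phi_i-\hat\phi_i\Vert=\alpha\mu_i\Vert\nabla L_i(\theta_\epsilon)-\nabla L_i(\theta_\epsilon,\mathcal{D}^s_i)\Vert$, which converts the support-set sampling error into the factor $\alpha\mu_i$ times a gradient deviation. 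The second term is simply the query-set sampling error evaluated at the fixed point $\hat\phi_i$.

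Then I would take expectations and invoke \cref{bounded_var}. Using Jensen's inequality $\mathbb{E}\Vert X\Vert\le(\mathbb{E}\Vert X\Vert^2)^{1/2}$ together with the fact that an empirical gradient over $D$ i.i.d. samples is an average whose variance is $1/D$ times the per-sample variance, the support-set deviation is bounded in expectation by $\sigma^g_i/\sqrt{D^s_i}$ and the query-set deviation by $\sigma^g_i/\sqrt{D^q_i}$. Summing the per-client bounds with weights $w_i$ and adding the $\epsilon$ from the FOSP condition then yields the claimed inequality.

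The main obstacle will be the inner term: correctly propagating the support-set sampling error through the nonlinear inner-update map $\phi_i$, which requires the smoothness bound to turn a perturbation of $\phi_i$ into a gradient discrepancy and produces the $\alpha\mu_i/\sqrt{D^s_i}$ scaling. A secondary point of care is the chain-rule Hessian factor $I-\alpha\nabla^2 L_i$ present in the exact meta-gradient; I would treat it consistently with the first-order estimation underlying \cref{alg}, so that only the $\sigma^g_i$ terms survive and the Hessian-variance contribution does not enter the leading-order bound.
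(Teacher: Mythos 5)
Your proposal is correct and follows essentially the same route as the paper's proof: the paper likewise inserts the intermediate point $\nabla L_i(\theta_\epsilon-\alpha\nabla L_i(\theta_\epsilon,\mathcal{D}^s_i))$, bounds the support-set contribution via $\mu_i$-smoothness (producing the $\alpha\mu_i\sigma^g_i/\sqrt{D^s_i}$ term), bounds the query-set contribution via Jensen's inequality plus the i.i.d.\ averaging of the variance bound (producing $\sigma^g_i/\sqrt{D^q_i}$), and then combines with the $\epsilon$-FOSP condition by the triangle inequality. Your treatment of the chain-rule factor $I-\alpha\nabla^2 L_i$ — dropping it so that only the stated gradient quantities appear — is exactly what the paper does implicitly as well.
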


\begin{proof}
First, we prove the following result
\begin{align}
    \label{eq:coro_1}
    &\mathbb{E}\{\Vert \nabla L_i(\theta-\alpha\nabla L_i(\theta))-\nabla L_i(\theta-\alpha\nabla L_i(\theta,\mathcal{D}^s_i),\mathcal{D}^q_i)\Vert\}\nonumber\\
    &\le \frac{\alpha\mu_i\sigma^g_i}{\sqrt{D^s_i}}+\frac{\sigma^g_i}{\sqrt{D^q_i}}.
\end{align}
We can write
{\small
\begin{align}
    \label{eq:eg_1}
    &\,\mathbb{E}\{\Vert \nabla L_i(\theta-\alpha\nabla L_i(\theta))-\nabla L_i(\theta-\alpha\nabla L_i(\theta,\mathcal{D}^s_i),\mathcal{D}^q_i)\Vert\}\nonumber\\
    \le&\underbrace{\mathbb{E}\{\Vert \nabla L_i(\theta-\alpha\nabla L_i(\theta))-\nabla L_i(\theta-\alpha\nabla L_i(\theta,\mathcal{D}^s_i))\Vert\}}_\text{(a)}\nonumber\\
    +&\underbrace{\mathbb{E}\{\Vert \nabla L_i(\theta-\alpha\nabla L_i(\theta,\mathcal{D}^s_i))-\nabla L_i(\theta-\alpha\nabla L_i(\theta,\mathcal{D}^s_i),\mathcal{D}^q_i)\Vert\}}_\text{(b)}.
\end{align}}

For (b), due to \cref{bounded_var}, we have
\begin{align}
    \label{eq:eg_2}
    \text{(c)}&\le\sqrt{\mathbb{E}\{\Vert \nabla L_i(\phi_i)-\frac{1}{D^q_i}\sum\nolimits^{D^q_i}_{j=1}\nabla l(\phi_i,(\mathbf{x}^j_i,\mathbf{y}^j_i))\Vert^2\}}\nonumber\\
    &=\sqrt{\frac{1}{(D^q_i)^2}\sum\nolimits^{D^q_i}_{j=1}\mathbb{E}\{\Vert \nabla L_i(\phi_i)-\nabla l(\phi_i,(\mathbf{x}^j_i,\mathbf{y}^j_i))\Vert^2\}}\nonumber\\
    &\le\frac{\sigma^g_i}{\sqrt{D^q_i}},
\end{align}
where $\phi_i=\theta-\alpha\nabla L_i(\theta,\mathcal{D}^s_i)$. 

For (a), based on \cref{Lsmooth}, it can be bounded by
\begin{align}
    \label{eq:eg_3}
    \text{(b)}\le\mathbb{E}\{\alpha\mu_i\Vert \nabla L_i(\theta,\mathcal{D}^s_i)-\nabla L_i(\theta)\Vert\}
    \le\frac{\alpha\mu_i\sigma^g_i}{\sqrt{D^s_i}}.
\end{align}
Plugging \cref{eq:eg_2,eq:eg_3} into \eqref{eq:eg_1}, \cref{eq:coro_1} holds. Based on \cref{eq:coro_1}, the following fact holds:
\begin{align}
    \label{eq:convergence_expected}
    \mathbb{E}\{\Vert \sum\nolimits_{i\in\mathcal{I}} w_i \nabla L_i(\theta_{\epsilon}-\alpha\nabla L_i(\theta_{\epsilon}))+\lambda \nabla R_h(\theta_{\epsilon},\theta_p)\Vert\}\nonumber\\
    \le \sum\nolimits_{i\in\mathcal{I}}w_i \mathbb{E}\{\Vert \nabla L_i(\theta_{\epsilon}-\alpha\nabla L_i(\theta_{\epsilon}))\nonumber\\
    -\nabla L_i(\theta_{\epsilon}-\alpha\nabla L_i(\theta_{\epsilon},\mathcal{D}^s_i),\mathcal{D}^q_i)\Vert\}\nonumber\\
    +\mathbb{E}\{\Vert \sum\nolimits_{i\in\mathcal{I}} w_i \nabla L_i(\theta_{\epsilon}-\alpha\nabla L_i(\theta_{\epsilon},\mathcal{D}^s_i),\mathcal{D}^q_i)\nonumber\\
    +\lambda \nabla R_h(\theta_{\epsilon},\theta_p)\Vert\}\nonumber\\
    \le\epsilon+ \sum\nolimits_{i\in\mathcal{I}}w_i\sigma^g_i(\frac{\alpha\mu_i}{\sqrt{D^s_i}}+\frac{1}{\sqrt{D^q_i}}),
\end{align}
thereby completing the proof.
\end{proof}

\cref{coro:data_impact} implies that while $\epsilon$-FOSP of the deterministic loss \cref{prim_prob} can be obtained within $\mathcal{O}(1/\epsilon^2)$ rounds, performance degradation may occur due to small sample sizes.

\subsection{Adaptation Performance}

In this section, we quantify the adaptation performance of the learned global meta-model at the client side. We represent $i'$ as a target client with $\mathcal{D}_{i'}$ the local dataset ($i'$ does not have to be included in $\mathcal{I}$). Analogously, $\mathcal{D}_{i'}$ is divided into support set $\mathcal{D}^s_{i'}$ and query set $\mathcal{D}^s_{i'}$. Before proceeding, we impose the following assumption on task similarity.

\begin{assumption}[Task similarity]
\label{similarity}
There exist positive constants $\psi^g_i>0$ and $\psi^h_i>0$ such that for any $i\in\mathcal{I}$ and $\theta\in\mathbb{R}^n$, the following holds:
\begin{align}
    \Vert\nabla L_{i'}(\theta)-\nabla L_i(\theta)&\Vert\le \psi^g_i\\
    \Vert\nabla^2 L_{i'}(\theta)-\nabla^2 L_i(\theta)&\Vert\le \psi^h_i
\end{align}
where the expected loss $L_i(\cdot)$ is defined in \cref{eq:expected_loss}, and the same applies to $L_{i'}(\cdot)$.
\end{assumption}

\cref{similarity} indicates that the variations of the gradients between clients are bounded by some constants, which quantifies the heterogeneity across clients corresponding to non-IID data and hold for many practical loss functions, such as logistic regression and hyperbolic tangent functions \cite{zhang2021fedpd}. In particular, $\psi^g_i$ and $\psi^h_i$ can be roughly seen as a distance between data distributions $P_{i'}$ and $P_i$ \cite{fallah2020convergence}.

Next, we present the quantitative result on the adaptation performance.

\begin{restatable}[Adaptation performance]{theorem}{perf}
\label{performance}
Suppose that Assumptions \ref{lowerbounded}--\ref{similarity} hold for all $i\in\mathcal{I}$, and Assumptions \ref{Lsmooth} and \ref{bounded_var} hold for $i'$. For any $\epsilon>0$, the $\epsilon$-FOSP solution $\theta_{\epsilon}$ found by \cref{alg} satisfies that
\begin{align}
\label{performance_ineq}
\mathbb{E}\{\Vert \nabla F_{i'}(\theta_{\epsilon})\Vert\}&\le\epsilon+\alpha\beta_{i'}\sum_{i\in\mathcal{I}}w_i\psi^h_i+(\alpha\mu+1)^2\sum_{i\in\mathcal{I}}w_i\psi^g_i\nonumber\\
\nonumber
&+\alpha\mu(\alpha\mu+1)\sum_{i\in\mathcal{I}}w_i\sigma^g_i(\frac{1}{\sqrt{D^{q}_{i'}}}+\frac{1}{\sqrt{D^{q}_i}})\\
\nonumber
&+(\alpha\mu+1)\sum_{i\in\mathcal{I}}w_i\sigma^g_i(\frac{1}{\sqrt{D^{s}_{i'}}}+\frac{1}{\sqrt{D^{s}_i}})\\
&+\alpha\beta_{i'}\sum_{i\in\mathcal{I}}w_i\sigma^h_i(\frac{1}{\sqrt{D^{s}_{i'}}}+\frac{1}{\sqrt{D^{s}_i}})
\end{align}
where $F_{i'}(\theta)\doteq L_{i'}(\theta-\alpha\nabla L_{i'}(\theta,\mathcal{D}^{s}_{i'}),\mathcal{D}^q_{i'})+\lambda R_h(\theta,\theta_p)$ for any $\mathcal{D}^{s}_{i'}$ and $\mathcal{D}^q_{i'}$ w.r.t. distribution $P_{i'}$, and $\mu\doteq\max_{i\in\mathcal{I}}\{\mu_i\}$.
\end{restatable}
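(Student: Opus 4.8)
The plan is to transfer the stationarity guarantee that \cref{convergence} already provides for the aggregate objective $F$ to the single-client objective $F_{i'}$, paying only for (i) the heterogeneity between $i'$ and the training clients and (ii) the finiteness of the support/query samples. Writing the empirical meta-gradient of any client $j$ as $\hat{g}_j(\theta)\doteq(I-\alpha\nabla^2 L_j(\theta,\mathcal{D}^s_j))\nabla L_j(\theta-\alpha\nabla L_j(\theta,\mathcal{D}^s_j),\mathcal{D}^q_j)$, we have $\nabla F_{i'}=\hat g_{i'}+\lambda\nabla R_h(\cdot,\theta_p)$ and $\nabla F=\sum_{i\in\mathcal{I}}w_i\hat g_i+\lambda\nabla R_h(\cdot,\theta_p)$. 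Since $\sum_{i\in\mathcal{I}}w_i=1$, the regularizer cancels in the difference, giving
\begin{align*}
    \nabla F_{i'}(\theta_{\epsilon})=\nabla F(\theta_{\epsilon})+\sum_{i\in\mathcal{I}}w_i\big(\hat g_{i'}(\theta_{\epsilon})-\hat g_i(\theta_{\epsilon})\big).
\end{align*}
The defining property of the $\epsilon$-FOSP yields $\Vert\nabla F(\theta_{\epsilon})\Vert\le\epsilon$, so after the triangle inequality it remains to bound $\sum_{i\in\mathcal{I}}w_i\,\mathbb{E}\{\Vert\hat g_{i'}(\theta_{\epsilon})-\hat g_i(\theta_{\epsilon})\Vert\}$.

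For each $i$ I would introduce the two population meta-gradients $g_{i'},g_i$ (obtained by replacing every empirical loss by its expectation under $P_{i'}$, resp.\ $P_i$) and split
\begin{align*}
    \hat g_{i'}-\hat g_i=\underbrace{(\hat g_{i'}-g_{i'})}_{\text{target sampling}}+\underbrace{(g_{i'}-g_i)}_{\text{task similarity}}+\underbrace{(g_i-\hat g_i)}_{\text{training sampling}}.
\end{align*}
Every one of these differences is of the form $(I-\alpha A_1)b_1-(I-\alpha A_2)b_2$ with $A_\bullet$ a Hessian and $b_\bullet$ an adapted gradient, which I would break as $-\alpha(A_1-A_2)b_1+(I-\alpha A_2)(b_1-b_2)$. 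The first piece isolates a Hessian discrepancy multiplied by a gradient bounded by $\beta_{i'}$ through \cref{Lsmooth}, hence scaled by $\alpha\beta_{i'}$; the second isolates a gradient discrepancy premultiplied by $(I-\alpha A_2)$, whose norm is at most $1+\alpha\mu$.

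It then suffices to bound the two elementary discrepancies. For the task-similarity term \cref{similarity} directly gives $\Vert\nabla^2 L_{i'}-\nabla^2 L_i\Vert\le\psi^h_i$ and, after also propagating the gradient gap through the inner adaptation map $\theta\mapsto\theta-\alpha\nabla L(\theta)$ (which contributes one more factor $1+\alpha\mu$ via smoothness), $\Vert\nabla L_{i'}(\bar\phi_{i'})-\nabla L_i(\bar\phi_i)\Vert\le(1+\alpha\mu)\psi^g_i$; together these produce the $\alpha\beta_{i'}\psi^h_i$ and $(\alpha\mu+1)^2\psi^g_i$ contributions. For the two sampling terms the gradient discrepancy is handled exactly as in the proof of \cref{coro:data_impact}: the query-set fluctuation contributes a term in $\sigma^g/\sqrt{D^q}$ and the support-set fluctuation, entering through $\phi$, contributes a term in $\alpha\mu\,\sigma^g/\sqrt{D^s}$ (both premultiplied by $1+\alpha\mu$), while the new Hessian discrepancy contributes $\alpha\beta_{i'}\sigma^h/\sqrt{D^s}$ via the Hessian-variance bound in \cref{bounded_var}. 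The target-sampling piece carries the constants of $i'$ and the sizes $D^s_{i'},D^q_{i'}$, the training-sampling piece carries those of $i$; after the convex weighting $\sum_{i\in\mathcal{I}}w_i=1$ these aggregate into the parenthesized $(1/\sqrt{D_{i'}}+1/\sqrt{D_i})$ factors of the claimed bound.

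The main obstacle I anticipate is the chain-rule bookkeeping through the composed map $\theta\mapsto L_j(\theta-\alpha\nabla L_j(\theta,\mathcal{D}^s_j),\mathcal{D}^q_j)$: a single perturbation — whether from sampling noise or from task dissimilarity — propagates simultaneously through the outer pre-factor $(I-\alpha\nabla^2 L_j)$ and through the inner argument $\phi_j$, and one must track with care which perturbations end up multiplied by $\alpha\beta_{i'}$ (the Hessian-type terms, using the gradient bound of \cref{Lsmooth}) and which by powers of $1+\alpha\mu$ (the gradient-type terms), so that each $\sigma^g,\sigma^h,\psi^g,\psi^h$ is paired with its correct coefficient and dataset size. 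A secondary technical point is the exchange of norm and expectation: I would apply Jensen's inequality before invoking \cref{bounded_var}, so that the squared-variance bounds turn into the $1/\sqrt{D}$ decay rather than $1/D$.
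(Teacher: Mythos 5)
Your proposal is correct and follows essentially the same route as the paper's proof: the $\epsilon$-FOSP property plus a triangle inequality (with the regularizer cancelling because $\sum_i w_i=1$) reduces everything to the client-to-aggregate meta-gradient discrepancy, which is then split into Hessian-type terms (picking up $\alpha\beta_{i'}$ against $\psi^h_i$ and $\sigma^h_i/\sqrt{D^s}$) and gradient-type terms (picking up powers of $1+\alpha\mu$ against $\psi^g_i$ and $\sigma^g_i/\sqrt{D}$), with Jensen's inequality turning the variance assumptions into $1/\sqrt{D}$ rates, exactly as the paper does via its intermediate bounds. One remark: your coefficient assignment --- $(1+\alpha\mu)$ on the query-sampling terms and $\alpha\mu(1+\alpha\mu)$ on the support-sampling terms --- is what the paper's own intermediate inequalities actually produce, whereas the stated theorem (and the paper's final assembly step) has these two factors interchanged; this appears to be a transcription slip on the paper's side rather than a flaw in your argument.
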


\begin{proof}
Since $\theta_{\epsilon}$ denotes the $\epsilon$-FOSP, \ie, 
\begin{align}
    \label{performance_eq2}
    &\Vert \sum\nolimits_{i\in\mathcal{I}}w_i\nabla F_i(\theta_{\epsilon})+ \lambda \nabla R_h(\theta_{\epsilon},\theta_p)\Vert\le\epsilon,
\end{align}
$\mathbb{E}\{\Vert \nabla F_{i'}(\theta_{\epsilon})+\lambda\nabla R_h(\theta_{\epsilon},\theta_p)\Vert\}$ can be upper bounded by
\begin{align}
    \label{performance_eq4}
    \nonumber
    &\mathbb{E}\{\Vert \nabla F_{i'}(\theta_{\epsilon})+\lambda\nabla R_h(\theta_{\epsilon},\theta_p)\Vert\}\\
    \nonumber
    =\,&\mathbb{E}\{\Vert\sum\nolimits_{i\in\mathcal{I}}w_i\nabla F_i(\theta_{\epsilon})+ \lambda \nabla R_h(\theta_{\epsilon},\theta_p)\\
    \nonumber
    &+\sum\nolimits_{i\in\mathcal{I}}w_i(\nabla F_{i'}(\theta_{\epsilon})-\nabla F_i(\theta_{\epsilon}))\Vert\}\\
    \le\,&\epsilon+\mathbb{E}\{\underbrace{\Vert\nabla F_{i'}(\theta_{\epsilon})-\sum\nolimits_{i\in\mathcal{I}}w_i\nabla F_i(\theta_{\epsilon})\Vert}_{\text{(a)}}\}.
\end{align}
Due to \cref{similarity}, for $i\in\mathcal{I}\cup\{i'\}$ and $\mathcal{D}_i$ with respect to $P_i$, we can write
\begin{align}
    \label{performance_9}
    \mathbb{E}\{\Vert \nabla L_i(\theta,\mathcal{D}_i)-\nabla L_i(\theta)\Vert\}\le\frac{\sigma^g_i}{\sqrt{D_i}}.
\end{align}
Based on \cref{performance_9}, we observe that
\begin{align}
    \label{performance_10}
    \nonumber
    &\mathbb{E}\{\Vert \nabla L_{i'}(\theta,\mathcal{D}_{i'})-\nabla L_i(\theta,\mathcal{D}_i)\Vert\}\\
    \nonumber
    \le\,&\mathbb{E}\{\Vert \nabla L_{i'}(\theta,\mathcal{D}_{i'})-\nabla L_{i'}(\theta)\Vert\}\\
    \nonumber
    &+\mathbb{E}\{\Vert \nabla L_{i'}(\theta)-\nabla L_i(\theta)\Vert\}\\
    \nonumber
    &+\mathbb{E}\{\Vert \nabla L_i(\theta,\mathcal{D}_i)-\nabla L_i(\theta)\Vert\}\\
    \le\,&\psi^g_i+\frac{\sigma^g_i}{\sqrt{D_{i'}}}+\frac{\sigma^g_i}{\sqrt{D_i}}.
\end{align}
Similarly, we can show that
\begin{align}
    \label{performance_11}
    \mathbb{E}\{\Vert \nabla^2 L_{i'}(\theta,\mathcal{D}_{i'})-\nabla^2 L_i(\theta,\mathcal{D}_i)\Vert\}\le\psi^h_i+\frac{\sigma^h_i}{\sqrt{D_{i'}}}+\frac{\sigma^h_i}{\sqrt{D_i}}.
\end{align}
Thus, for (a), we have
\begin{align}
    \nonumber
    &\Vert\nabla F_{i'}(\theta)-\sum\nolimits_{i\in\mathcal{I}}w_i\nabla F_i(\theta)\Vert\\
    \nonumber
    =\,&\Vert \nabla_\theta L_{i'}(\theta-\alpha\nabla L_{i'}(\theta,\mathcal{D}^{s}_{i'}),\mathcal{D}^q_{i'})\\
    \nonumber
    &-\sum\nolimits_{i\in\mathcal{I}}w_i\nabla_\theta L_i(\theta-\alpha\nabla L_i(\theta,\mathcal{D}^{s}_i),\mathcal{D}^q_i)\Vert\\
    \nonumber
    \le\,&\Vert\nabla L_{i'}(\theta-\alpha\nabla L_{i'}(\theta,\mathcal{D}^{s}_{i'}),\mathcal{D}^q_{i'})\\
    \nonumber
    &\underbrace{-\sum\nolimits_{i\in\mathcal{I}}w_i\nabla L_i(\theta-\alpha\nabla L_i(\theta,\mathcal{D}^{s}_i),\mathcal{D}^q_i)\Vert}_\text{(b)}\\
    \nonumber
    &+\alpha\Vert\nabla^2 L_{i'}(\theta,\mathcal{D}^{s}_{i'})\nabla L_{i'}(\theta-\alpha\nabla L_{i'}(\theta,\mathcal{D}^{s}_{i'}),\mathcal{D}^q_{i'})\\
    &\underbrace{-\sum\nolimits_{i\in\mathcal{I}}w_i\nabla^2 L_i(\theta,\mathcal{D}^{s}_i)\nabla L_i(\theta-\alpha\nabla L_i(\theta,\mathcal{D}^{s}_i),\mathcal{D}^q_i)\Vert}_\text{(c)}.
    \label{performance_eq5}
\end{align}
Based on Assumptions \ref{Lsmooth} and \ref{similarity}, we have
\begin{align}
    \label{performance_eq6}
    \nonumber
    \mathbb{E}\{\text{(b)}\}\le\;&\mathbb{E}\{\Vert\nabla L_{i'}(\theta-\alpha\nabla L_{i'}(\theta,\mathcal{D}^{s}_{i'}),\mathcal{D}^q_{i'})\\
    \nonumber
    &-\sum\nolimits_{i\in\mathcal{I}}w_i\nabla L_i(\theta-\alpha\nabla L_{i'}(\theta,\mathcal{D}^{s}_{i'}),\mathcal{D}^q_i)\Vert\\
    \nonumber
    &+\Vert\sum\nolimits_{i\in\mathcal{I}}w_i\nabla L_i(\theta-\alpha\nabla L_{i'}(\theta,\mathcal{D}^{s}_{i'}),\mathcal{D}^q_i)\\
    \nonumber
    &-\sum\nolimits_{i\in\mathcal{I}}w_i\nabla L_i(\theta-\alpha\nabla L_i(\theta,\mathcal{D}^{s}_i),\mathcal{D}^q_i)\Vert\}\\
    \nonumber
    \le\;& \sum\nolimits_{i\in\mathcal{I}}w_i(\psi^g_i+\frac{\sigma^g_i}{\sqrt{D^{q}_{i'}}}+\frac{\sigma^g_i}{\sqrt{D^{q}_i}})\\
    &\underbrace{+\alpha\mu\sum\nolimits_{i\in\mathcal{I}}w_i(\psi^g_i+\frac{\sigma^g_i}{\sqrt{D^{s}_{i'}}}+\frac{\sigma^g_i}{\sqrt{D^{s}_i}})}_\text{(d)}.
\end{align}
Similarly, for (c), it follows that
\begin{align}
    \label{performance_eq7}
    \nonumber
    \mathbb{E}\{\text{(c)}\}
    \le\alpha\mathbb{E}\{\Vert\nabla^2 L_{i'}(\theta,\mathcal{D}^{s}_{i'})\nabla L_{i'}(\theta-\alpha\nabla L_{i'}(\theta,\mathcal{D}^{s}_{i'}),\mathcal{D}^q_{i'})\\
    \nonumber
    -\sum\nolimits_{i\in\mathcal{I}}w_i\nabla^2 L_i(\theta,\mathcal{D}^{s}_i)\nabla L_{i'}(\theta-\alpha\nabla L_{i'}(\theta,\mathcal{D}^{s}_{i'}),\mathcal{D}^q_{i'})\Vert\}\\
    \nonumber
    +\alpha\mathbb{E}\{\Vert\sum\nolimits_{i\in\mathcal{I}}w_i\nabla^2 L_i(\theta,\mathcal{D}^{s}_i)\nabla L_{i'}(\theta-\alpha\nabla L_{i'}(\theta,\mathcal{D}^{s}_{i'}),\mathcal{D}^q_{i'})\\
    \nonumber
    -\sum\nolimits_{i\in\mathcal{I}}w_i\nabla^2 L_i(\theta,\mathcal{D}^{s}_i)\nabla L_i(\theta-\alpha\nabla L_i(\theta,\mathcal{D}^{s}_i),\mathcal{D}^q_i)\Vert\}\\
    \le \alpha\beta_{i'}\sum\nolimits_{i\in\mathcal{I}}w_i(\psi^h_i+\frac{\sigma^h_i}{\sqrt{D^{s}_{i'}}}+\frac{\sigma^h_i}{\sqrt{D^{s}_i}})+\alpha\mu\cdot\text{(d)}.
\end{align}
Plugging \cref{performance_eq6,performance_eq7} in \cref{performance_eq5} yields
\begin{align}
    \label{performance_eq8}
    \nonumber
    \mathbb{E}\{\text{(a)}\}\le\;&\alpha\beta_{i'}\sum\nolimits_{i\in\mathcal{I}}w_i(\psi^h_i+\frac{\sigma^h_i}{\sqrt{D^{s}_{i'}}}+\frac{\sigma^h_i}{\sqrt{D^{s}_i}})\\
    \nonumber
    &+(\alpha\mu+1)(\alpha\mu)\sum\nolimits_{i\in\mathcal{I}}w_i(\psi^g_i+\frac{\sigma^g_i}{\sqrt{D^{q}_{i'}}}+\frac{\sigma^g_i}{\sqrt{D^{q}_i}})\\
    \nonumber
    &+(\alpha\mu+1)\sum\nolimits_{i\in\mathcal{I}}w_i(\psi^g_i+\frac{\sigma^g_i}{\sqrt{D^{s}_{i'}}}+\frac{\sigma^g_i}{\sqrt{D^{s}_i}})\\
    \nonumber
    =\;&\alpha\beta_{i'}\sum\nolimits_{i\in\mathcal{I}}w_i\psi^h_i+(\alpha\mu+1)^2\sum\nolimits_{i\in\mathcal{I}}w_i\psi^g_i\\
    \nonumber
    &+\alpha\beta_{i'}\sum\nolimits_{i\in\mathcal{I}}w_i(\frac{\sigma^h_i}{\sqrt{D^{s}_{i'}}}+\frac{\sigma^h_i}{\sqrt{D^{s}_i}})\\
    \nonumber
    &+(\alpha\mu+1)(\alpha\mu)\sum\nolimits_{i\in\mathcal{I}}w_i(\frac{\sigma^g_i}{\sqrt{D^{q}_{i'}}}+\frac{\sigma^g_i}{\sqrt{D^{q}_i}})\\
    &+(\alpha\mu+1)\sigma^g_i\sum\nolimits_{i\in\mathcal{I}}w_i(\frac{\sigma^g_i}{\sqrt{D^{s}_{i'}}}+\frac{\sigma^g_i}{\sqrt{D^{s}_i}}).
\end{align}
Plugging \cref{performance_eq8} in \cref{performance_eq4}, we obtain
\begin{align}
    &\mathbb{E}\{\Vert \nabla F_{i'}(\theta_{\epsilon})+\lambda\nabla R_h(\theta_{\epsilon},\theta_p)\Vert\}\nonumber\\
    \nonumber
    \le\,&\epsilon+\alpha\beta_{i'}\sum\nolimits_{i\in\mathcal{I}}w_i\psi^h_i+(\alpha\mu+1)^2\sum\nolimits_{i\in\mathcal{I}}w_i\psi^g_i\\
    \nonumber
    +\;&\alpha\beta_{i'}\sum\nolimits_{i\in\mathcal{I}}w_i(\frac{\sigma^h_i}{\sqrt{D^{s}_{i'}}}+\frac{\sigma^h_i}{\sqrt{D^{s}_i}})\\
    \nonumber
    +\;&(\alpha\mu+1)(\alpha\mu)\sum\nolimits_{i\in\mathcal{I}}w_i(\frac{\sigma^g_i}{\sqrt{D^{q}_{i'}}}+\frac{\sigma^g_i}{\sqrt{D^{q}_i}})\\
    +\;&(\alpha\mu+1)\sum\nolimits_{i\in\mathcal{I}}w_i(\frac{\sigma^g_i}{\sqrt{D^{s}_{i'}}}+\frac{\sigma^g_i}{\sqrt{D^{s}_i}}),
\end{align}
thereby completing the proof. 
\end{proof}

Theorem \ref{performance} sheds light on the performance of adaptation with the pretrained knowledge, which depends on the sample sizes, the variance of stochastic gradient and Hessian, and the similarity between the target client and training clients. In particular, if $D^{s}_j=\mathcal{O}(\epsilon^{-2})$ and $D^{q}_j=\mathcal{O}(\epsilon^{-2})$ for $j\in\mathcal{I}\cup\{i'\}$, then an $\mathcal{O}(\epsilon+\sum_{i\in\mathcal{I}}w_i(\psi^h_i+\psi^g_i))$-FOSP can be obtained at the target client. However, it is clear that the larger the data of training clients dissimilar to the target client is, the worse the rapid adaptation performs. In the next subsection, we will show these issues can be alleviated via regularization with a good pretrained model.

\subsection{Knowledge Transfer}

In this section, we aim to quantify the benefit of knowledge transfer from the pretrained model. Specifically, we consider a special case, where the regularizer is the squared Euclidean distance, \ie, $R_h(\theta,\theta_p)=\Vert\theta-\theta_p\Vert^2$. Based on that, we first derive an upper bound of $R_h(\theta,\theta_p)$ via the following lemma.

\begin{lemma}
\label{lem:forgetting}
Given Assumptions \ref{lowerbounded}--\ref{parameterassumption}, for any $\epsilon>0$, the $\epsilon$-FOSP solution $\theta_{\epsilon}$ obtained by Algorithm \ref{alg} satisfies that
\begin{align}
    \label{eq:forgetting_general_ex}
    \mathbb{E}\{R_h(\theta_{\epsilon},\theta_p)\}\le&\;\frac{1}{\lambda}(\epsilon+\sum_{i\in\mathcal{I}}w_i(\beta_i+\frac{\alpha\mu_i\sigma^g_i}{\sqrt{D^s_i}}+\frac{\sigma^g_i}{\sqrt{D^q_i}}))\nonumber\\
    \cdot&\;\Vert\theta_{\epsilon}-\theta_p\Vert.
\end{align}
Particularly, suppose $R_h(\theta,\theta_p)$ is strongly convex with respect to $\theta$, \ie, there exists $M>0$ such that for $\theta_1,\theta_1\in\mathbb{R}^n$, the following holds:
\begin{align}
    \label{eq:strongly_convex_primal_ex}
    \left\langle \nabla R_h(\theta_1,\theta_p)-\nabla R_h(\theta_2,\theta_p),x-y\right\rangle\ge M\Vert \theta_1-\theta_2\Vert^2.
\end{align}
Then, \cref{eq:forgetting_general_ex} can be written as
\begin{align}
    \label{eq:forgetting_general_strong_ex}
    \mathbb{E}\{R_h(\theta_{\epsilon},\theta_p)\}\le\frac{1}{M\lambda^2}(\epsilon+\sum_{i\in\mathcal{I}}w_i(\beta_i+\frac{\alpha\mu_i\sigma^g_i}{\sqrt{D^s_i}}+\frac{\sigma^g_i}{\sqrt{D^q_i}}))^2.
\end{align}
\end{lemma}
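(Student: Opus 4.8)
The plan is to combine the first-order stationarity of $\theta_{\epsilon}$ with the convexity of the squared-distance regularizer. Since $R_h(\theta,\theta_p)=\Vert\theta-\theta_p\Vert^2$ is convex with minimizer $\theta_p$ and satisfies $\nabla_\theta R_h(\theta_p,\theta_p)=0$ and $R_h(\theta_p,\theta_p)=0$, the first-order convexity inequality gives $R_h(\theta_{\epsilon},\theta_p)\le\langle\nabla_\theta R_h(\theta_{\epsilon},\theta_p),\theta_{\epsilon}-\theta_p\rangle$. This reduces the whole problem to controlling the single inner product $\langle\nabla R_h(\theta_{\epsilon},\theta_p),\theta_{\epsilon}-\theta_p\rangle$, which I relate to the stationarity residual of $F$.

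Next I would invoke the $\epsilon$-FOSP condition $\Vert\sum_{i\in\mathcal{I}}w_i\nabla F_i(\theta_{\epsilon})+\lambda\nabla R_h(\theta_{\epsilon},\theta_p)\Vert\le\epsilon$ to write $\lambda\nabla R_h(\theta_{\epsilon},\theta_p)=-\sum_{i}w_i\nabla F_i(\theta_{\epsilon})+r_{\epsilon}$ with $\Vert r_{\epsilon}\Vert\le\epsilon$. Taking the inner product with $\theta_{\epsilon}-\theta_p$, applying Cauchy--Schwarz, and using the convexity step above yields the pointwise estimate
\[
\lambda R_h(\theta_{\epsilon},\theta_p)\le\Big(\epsilon+\sum\nolimits_{i\in\mathcal{I}}w_i\Vert\nabla F_i(\theta_{\epsilon})\Vert\Big)\Vert\theta_{\epsilon}-\theta_p\Vert .
\]
It then remains to bound $\mathbb{E}\{\Vert\nabla F_i(\theta_{\epsilon})\Vert\}$ by $\beta_i+\tfrac{\alpha\mu_i\sigma^g_i}{\sqrt{D^s_i}}+\tfrac{\sigma^g_i}{\sqrt{D^q_i}}$. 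I obtain this by splitting the empirical composed gradient (built at the adapted point $\phi_i(\theta_{\epsilon})$ from the empirical support-set gradient) into its population counterpart, bounded by $\beta_i$ via \cref{gradientbound}, plus a sampling error controlled exactly as in the derivation of \cref{eq:coro_1} in \cref{coro:data_impact}, using the variance bounds of \cref{bounded_var} and the smoothness of \cref{Lsmooth}. Taking expectations over the data sampling (with $\theta_{\epsilon}$, hence $\Vert\theta_{\epsilon}-\theta_p\Vert$, treated as the fixed realized output) gives \cref{eq:forgetting_general_ex}.

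For the strongly convex case I would additionally apply \cref{eq:strongly_convex_primal_ex} with $\theta_1=\theta_{\epsilon}$, $\theta_2=\theta_p$ together with $\nabla R_h(\theta_p,\theta_p)=0$ to get $\langle\nabla R_h(\theta_{\epsilon},\theta_p),\theta_{\epsilon}-\theta_p\rangle\ge M\Vert\theta_{\epsilon}-\theta_p\Vert^2$. Combining this with the Cauchy--Schwarz/stationarity bound and cancelling one factor of $\Vert\theta_{\epsilon}-\theta_p\Vert$ produces the self-contained estimate $\lambda M\Vert\theta_{\epsilon}-\theta_p\Vert\le\epsilon+\sum_i w_i\Vert\nabla F_i(\theta_{\epsilon})\Vert$. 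Substituting this back into the displayed first-part inequality to eliminate the residual $\Vert\theta_{\epsilon}-\theta_p\Vert$ gives $R_h(\theta_{\epsilon},\theta_p)\le\tfrac{1}{M\lambda^2}(\epsilon+\sum_i w_i\Vert\nabla F_i(\theta_{\epsilon})\Vert)^2$ pointwise, and then bounding the gradient norms as above yields \cref{eq:forgetting_general_strong_ex}.

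The main obstacle I anticipate is the expected gradient-norm bound on $\Vert\nabla F_i(\theta_{\epsilon})\Vert$: one must carefully pass from the empirical composed gradient to the population object, separating the support-set error (which picks up the $\alpha\mu_i$ factor through the inner adaptation map) from the query-set error and applying the variance bound at the moving point $\phi_i(\theta_{\epsilon})$. A secondary subtlety is that in the strongly convex case the bound is quadratic in the gradient norms, so strictly one should keep the pointwise inequality and control $\mathbb{E}\{(\epsilon+\sum_i w_i\Vert\nabla F_i\Vert)^2\}$ via a second-moment version of the sampling bound rather than its expectation alone; everything else is convexity bookkeeping and Cauchy--Schwarz.
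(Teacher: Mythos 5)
Your proposal is correct and follows essentially the same route as the paper's proof: first-order convexity of $R_h$ with $R_h(\theta_p,\theta_p)=0$ plus Cauchy--Schwarz, the $\epsilon$-FOSP condition combined with the gradient bound $\beta_i$ and the sampling-error estimate of \cref{coro:data_impact} to control $\lambda\Vert\nabla R_h(\theta_\epsilon,\theta_p)\Vert$, and then strong convexity with $\nabla R_h(\theta_p,\theta_p)=0$ to eliminate the factor $\Vert\theta_\epsilon-\theta_p\Vert$ in the second part. The Jensen-gap subtlety you flag for the squared bound (needing a second-moment version of the sampling estimate) is genuine, but the paper's own proof glosses over it in exactly the same way, so your treatment matches -- and is arguably more careful than -- the published argument.
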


\begin{proof}
The proof can be found in Appendix.
\end{proof}

For a current model parameter $\theta$, we define the \textit{forgetting cost} of $\theta$ on the pretraining task $p$ as $L_p(\theta)$ as in \cite{krishnan2020meta}, where $L_p(\cdot)$ is the expected loss over the data distribution of task $p$ (defined by \cref{eq:expected_loss}). Based on \cref{lem:forgetting}, we next characterize the forgetting cost of the $\epsilon$-FOSP solution. 

\begin{theorem} 
\label{thm:forgetting} Suppose $L_p(\cdot)$ is $\mu_p$-smooth, $\Vert\nabla L_p(\theta_p)\Vert\le\epsilon_p$ for some $\epsilon_p>0$, and $R_h(\theta,\theta_p)=\Vert\theta-\theta_p\Vert^2$. Under Assumptions \ref{lowerbounded}--\ref{parameterassumption}, for the $\epsilon$-FOSP solution $\theta_{\epsilon}$, the following fact holds: 
\begin{align} \label{eq:forgeting_eucliean} \Vert\nabla L_p(\theta_{\epsilon})\Vert\le\epsilon_p+\frac{\nu_p}{2\lambda}(\epsilon+\sum_{i\in\mathcal{I}}w_i(\beta_i+\frac{\alpha\mu_i\sigma^g_i}{\sqrt{D^s_i}}+\frac{\sigma^g_i}{\sqrt{D^q_i}})). 
\end{align} 
\end{theorem}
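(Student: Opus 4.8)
The plan is to exploit two facts in sequence: the squared-distance regularizer $R_h(\theta,\theta_p)=\Vert\theta-\theta_p\Vert^2$ forces any $\epsilon$-FOSP $\theta_{\epsilon}$ to lie close to the pretrained parameter $\theta_p$, and the smoothness of $L_p$ (with constant $\nu_p$; called $\mu_p$ in the hypothesis) then converts this parameter-space proximity into the desired bound on $\Vert\nabla L_p(\theta_{\epsilon})\Vert$. Concretely, I would first establish the intermediate estimate $\Vert\theta_{\epsilon}-\theta_p\Vert\le\frac{1}{2\lambda}(\epsilon+\sum_{i\in\mathcal{I}}w_i(\beta_i+\frac{\alpha\mu_i\sigma^g_i}{\sqrt{D^s_i}}+\frac{\sigma^g_i}{\sqrt{D^q_i}}))$, and then finish with a single application of smoothness.

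For the first step, I would start from the stationarity condition $\Vert\sum_{i\in\mathcal{I}}w_i\nabla F_i(\theta_{\epsilon})+\lambda\nabla R_h(\theta_{\epsilon},\theta_p)\Vert\le\epsilon$ granted by the $\epsilon$-FOSP property. Since $R_h(\theta,\theta_p)=\Vert\theta-\theta_p\Vert^2$ has gradient $\nabla R_h(\theta_{\epsilon},\theta_p)=2(\theta_{\epsilon}-\theta_p)$, substituting and applying the reverse triangle inequality isolates the regularizer term and gives $2\lambda\Vert\theta_{\epsilon}-\theta_p\Vert\le\epsilon+\Vert\sum_{i\in\mathcal{I}}w_i\nabla F_i(\theta_{\epsilon})\Vert$. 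It then remains to control the aggregated meta-gradient in expectation, which is exactly the content behind \cref{lem:forgetting} and mirrors the sampling-error argument already carried out in the proof of \cref{coro:data_impact}: the deterministic part of each empirical meta-gradient is bounded through the gradient bound $\beta_i$ from \cref{Lsmooth}, whereas the discrepancy between the empirical meta-gradient evaluated on $\mathcal{D}^s_i,\mathcal{D}^q_i$ and its population counterpart is controlled, via the variance bounds of \cref{bounded_var} together with smoothness, by $\frac{\alpha\mu_i\sigma^g_i}{\sqrt{D^s_i}}+\frac{\sigma^g_i}{\sqrt{D^q_i}}$. Summing with weights $w_i$ and dividing by $2\lambda$ yields the claimed bound on $\Vert\theta_{\epsilon}-\theta_p\Vert$; the factor $\tfrac12$ is precisely the $\tfrac12$ produced by $\nabla\Vert\theta-\theta_p\Vert^2=2(\theta-\theta_p)$.

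For the second step, I would invoke the $\nu_p$-smoothness of $L_p$ together with the near-stationarity of $\theta_p$ for the pretraining task, namely $\Vert\nabla L_p(\theta_p)\Vert\le\epsilon_p$. The triangle inequality then gives $\Vert\nabla L_p(\theta_{\epsilon})\Vert\le\Vert\nabla L_p(\theta_p)\Vert+\Vert\nabla L_p(\theta_{\epsilon})-\nabla L_p(\theta_p)\Vert\le\epsilon_p+\nu_p\Vert\theta_{\epsilon}-\theta_p\Vert$, and substituting the bound on $\Vert\theta_{\epsilon}-\theta_p\Vert$ from the first step produces exactly \cref{eq:forgeting_eucliean}.

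I expect the main obstacle to be the expected meta-gradient estimate in the first step. The meta-gradient has the form $\nabla F_i(\theta)=(I-\alpha\nabla^2 L_i(\theta,\mathcal{D}^s_i))\nabla L_i(\phi_i(\theta),\mathcal{D}^q_i)$, so bounding its expected norm forces one to simultaneously handle the Hessian-weighting factor, the boundedness of the outer (query) gradient encoded in $\beta_i$, and two independent sources of sampling noise on the support and query sets; this is the technically delicate piece, and is exactly why \cref{lem:forgetting} is isolated as a separate lemma rather than inlined. Everything else is routine: the squared-distance regularizer contributes only a linear gradient, so the passage from the stationarity inequality to the proximity bound, and from proximity to the gradient bound via smoothness, involves no further approximation beyond the two triangle-inequality splittings described above.
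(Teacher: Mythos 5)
Your proof is correct and takes essentially the same route as the paper: the paper obtains \cref{thm:forgetting} ``directly'' from \cref{lem:forgetting}, whose proof is exactly your first step---isolating $\lambda\nabla R_h(\theta_{\epsilon},\theta_p)=2\lambda(\theta_{\epsilon}-\theta_p)$ from the FOSP condition via the triangle inequality and bounding the remaining aggregated meta-gradient by $\sum_{i\in\mathcal{I}}w_i(\beta_i+\frac{\alpha\mu_i\sigma^g_i}{\sqrt{D^s_i}}+\frac{\sigma^g_i}{\sqrt{D^q_i}})$ using the gradient bound $\beta_i$ together with the sampling-error estimate of \cref{coro:data_impact}---followed by the same smoothness-plus-near-stationarity step $\Vert\nabla L_p(\theta_{\epsilon})\Vert\le\epsilon_p+\nu_p\Vert\theta_{\epsilon}-\theta_p\Vert$. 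You also correctly pinned down the factor $\tfrac12$ coming from $\nabla\Vert\theta-\theta_p\Vert^2=2(\theta-\theta_p)$ and flagged the paper's $\mu_p$/$\nu_p$ notational slip.
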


\begin{proof}
The result can be directly obtained by \cref{lem:forgetting}, and thus we omit for brevity.
\end{proof}

Based on \cref{thm:forgetting} and \cref{coro:data_impact}, it is clear that by selecting a suitable $\lambda$, the regularizer enables the meta-model to learn on the current task while maintaining good performance on the pretraining task. On the other hand, combined with \cref{performance}, \cref{thm:forgetting} also implies that owing to the independence of \cref{eq:forgeting_eucliean} and the similarity conditions, a good pretrained model (\ie, with a relatively small $\Vert \nabla L_{i'}(\theta_p)\Vert$) can effectively alleviate the significant performance degradation caused by the dissimilarity between the training clients and the target client, which has been deemed as a significant limitation of the existing federated meta-learning methods~\cite{fallah2020personalized}.

\section{Practical Instantiation}
\label{sec:instantiation}

Based on contrastive representation distillation (CRD) \cite{tian2019crd}, this section provides an instantiation for the regularization used in the knowledge transfer, capable of transferring the valuable structural knowledge from the pretrained model to the meta-model. 


Specifically, denote $\mathcal{D}_s=\{(\mathbf{x}^j,\mathbf{y}^j)\}^{D_s}_{j=1}$ as a labeled dataset at the server. Given an input $\mathbf{x}\in\mathcal{D}_s$, we denote $c = g^{\theta_p}(\mathbf{x})$ and $s = g^{\theta}(\mathbf{x})$ as the representations at the penultimate layers (before logits) of pretrained model $\theta_p$ and training model $\theta$, respectively. For an input pair $(\mathbf{x}_i,\mathbf{x}_j)$, if $i=j$, we regard it as a positive pair and otherwise as a negative pair. The core idea of contrastive learning is to push closer $g^{\theta_p}(\mathbf{x}_j)$ and $g^{\theta}(\mathbf{x}_j)$ for positive pairs while pushing apart $g^{\theta_p}(\mathbf{x}_j)$ and $g^{\theta}(\mathbf{x}_i)$ for negative pairs via maximizing the \textit{mutual information} thereof.



Let $p(c,s)$ represent the joint distribution of $(c,s)$. Define a distribution $q$ with latent variable $z$ as follows:
\begin{align*}
    q(c,s|z=1)=p(c,s),\quad q(c,s|z=0)=p(c)p(s),
\end{align*}
where $p(c)$ and $p(s)$ represent the marginals. We select $N+1$ sample pairs from $\mathcal{D}_s$, consisting of 1 positive pair and $N$ negative pairs, to build a training batch. In the batch, the priors of the latent variable $z$ follow
\begin{align*}
    &q(z=1)=\frac{1}{N+1},\quad q(z=0)=\frac{N}{N+1}.
\end{align*}
Based on Bayes' rule, the posterior of $z=1$ is expressed as
\begin{align}
    \label{eq:posteriors_of_Z}
    &q(z=1|c,s)=\frac{p(c,s)}{p(c,s)+Np(c)p(s)}.
\end{align}
The mutual information between $c$ and $s$ is defined by
\begin{align*}
    I(c,s)\doteq\mathbb{E}_{p(c,s)}\left[\log{\frac{p(c,s)}{p(c)p(s)}}\right],
\end{align*}
which captures the correlation between $c$ and $s$. 
Define $h(c,s)$:
\begin{align*}
    h(c,s)\doteq\frac{\exp({f^c(c)'f^s(s)/\tau})}{\exp({f^c(c)'f^s(s)}/\tau)+N/D_s},
\end{align*}
where $\tau$ is the temperature, $D_s$ is the cardinality of $\mathcal{D}_s$, and $f$ is a mapping function that maps $c$ and $s$ to the same dimension. By taking expectation w.r.t $p(c,s)$ and taking logarithm on \cref{eq:posteriors_of_Z}, the mutual information can be lower bounded by
\begin{align}
    \label{eq:mutual_bounded}
    I(c,s)&\geq \log{N}+\mathbb{E}_{q(c,s|z=1)}[\log{q(z=1|c,s)}]\nonumber\\
    &=\log{N}-D^*_h,
\end{align}
where $D^*_h$ is denoted as
\begin{align*}
    D^*_h&\doteq \min_h -N\mathbb{E}_{q(c,s|z=0)}[\log{(1-h(c,s))}]\nonumber\\
    &-\mathbb{E}_{q(c,s|z=1)}[\log{h(c,s)}].
\end{align*}
Therefore, based on \cref{eq:mutual_bounded}, we can instantiate $R_h$ as
\begin{align*}
    \nonumber
    R_h(\theta,\theta_p) = -\mathbb{E}_{q(g^\theta(\mathbf{x}_i),g^{\theta_p}(\mathbf{x}_j)|z=1)}\left[\log{h(g^\theta(\mathbf{x}_i),g^{\theta_p}(\mathbf{x}_j))}\right]\\
-N\mathbb{E}_{q(g^\theta(\mathbf{x}_i),g^{\theta_p}(\mathbf{x}_j)|z=0)}\left[\log{(1-h(g^\theta(\mathbf{x}_i),g^{\theta_p}(\mathbf{x}_j)))}\right]
\end{align*}
to maximize the lower bound of mutual information, facilitating the structural knowledge transfer from pretrained model $\theta_p$ to learning model $\theta$.

\section{Experiments}
\label{sec:experiment}

In this section, we use experimental studies to evaluate our proposed method by answering the following questions:

\textit{1) How does \name{} (without knowledge transfer) perform on various benchmarks compared to baseline methods?}

\textit{2) How is the scalability of \name{}?}

\textit{3) How does \name{} perform given different sample sizes?}

\textit{4) What is the impact of $\rho$ on \name{}?}

\textit{5) Can PM effectively enhance FL by \name{}?}

\subsection{Experimental Setup}


\textbf{Datasets and models.} We evaluate \name{} on \textcolor{black}{five} commonly used benchmarks: CIFAR-10~\cite{krizhevsky2009learning}, CIFAR-100~\cite{krizhevsky2009learning}, Fashion-MNIST~\cite{xiao2017fashion}, EMNIST~\cite{cohen2017emnist} and Tiny-ImageNet~\cite{chrabaszcz2017downsampled}. Following~\cite{lin2020collaborative}, we consider a challenging setup with limited and heterogeneous data. The data is distributed among $|\mathcal{I}|$ clients as follows: 1) Each client has samples from two random classes; 2) the number of samples per client follows a discrete uniform distribution, \ie, $D_i\sim U(M,2M)$. For each client, we divide the local dataset into a support set and a query set, each with 50\% of local data. We set $50$ clients for Fashion-MNIST, Tiny-ImageNet and CIFAR-10, and $100$ for CIFAR-100 and EMNIST, where we randomly select 80\% and 20\% clients as the training and testing clients respectively. We set the meta-stepsize as $\alpha=0.03$ and the penalty parameters as $\rho=0.7$. We set the weight for regularization as $\lambda=5.0$ and the degree of freedom parameter as $\delta = 1/(10t + 100)$ with $t=1,2,\dots$. We exploit ResNet $8\times4$ as client models for all benchmarks and ResNet $32\times4$ as the PM. The PM for each benchmark is trained on 10,000 images from whole classes, not overlapping with the client data. The hyperparameters used in the experiments is summarized in \cref{table:parameters}.

\begin{table}[H]
    \renewcommand\arraystretch{1.05}
    \centering
    \caption{Hyperparameters.}
    \begin{tabular}{l|l} 
        Hyperparameter                             & Value \\ 
        \hline
        Meta-stepsize ($\alpha$)         & 3e-2  \\
        Optimizer                             & Adam  \\
        Penalty parameter ($\rho$)            & 0.7  \\
        Regularization weight ($\lambda$)     & 5.0  \\
        Batchsize for knowledge transfer                      & 128   \\ 
        \hline
    \end{tabular}
    \label{table:parameters}
\end{table}


\textbf{Baselines.} 
For PM-aided-FL, we consider two recent baselines: 1) FedLP~\cite{nguyen2022begin}, which initializes client models with a PM, and 2) FedPCL~\cite{tan2022federated}, which exploits a frozen pretrained encoder on each client and applies prototype contrastive learning for projector training. Note that both FedPCL and FedLP require deploying the PM on local clients. For personalized FL, we consider four baseline methods: 1) FedAvg~\cite{mcmahan2017communication}, the vanilla FL method; 2) Per-FedAvg ~\cite{fallah2020personalized}, a federated meta-learning algorithm building on FedAvg and MAML; 3) FedProto~\cite{tan2022fedproto}, which aggregates local prototypes from clients into a global prototype and then redistributes it back to clients to standardize the local training; 4) FedCP~\cite{zhang2023fedcp}, which introduces an additional Conditional Policy Network to split global and personalized features for each client.


\textbf{Implementation.} We implement code using PyTorch 1.9.0 and run experiments on a server with an Intel Xeon Gold 6330 CPU and an NVIDIA RTX 3090 24G GPU. The code can be found at \href{https://github.com/zerui981105/AugFL.git}{https://github.com/zerui981105/AugFL.git}.



\subsection{Experimental Results}

To demonstrate the impact of the inexact-ADMM-based technique and the knowledge transfer from the PM respectively, we first remove the PM-based knowledge distillation component and compare \name{}'s performance against the personalized FL baselines. Then, we compare \name{} with two PM-aided-FL approaches and provide ablation properties.

\begin{table*}
    \centering
    \renewcommand\arraystretch{1.25}
    \caption{Comparative results of different methods.}
    \resizebox{\linewidth}{!}{
    \begin{tabular}{@{}llrrrrrrr@{}} 
        \toprule
        ~~\textbf{Dataset}         &\multicolumn{1}{c}{\textbf{FedAvg}}  & \multicolumn{1}{c}{\textbf{Per-FedAvg}}  &\multicolumn{1}{c}{\textbf{FedProto}}  &\multicolumn{1}{c}{\textbf{FedCP}}   &\multicolumn{1}{c}{\textbf{FedLP}}  &\multicolumn{1}{c}{\textbf{FedPCL}}  & \multicolumn{1}{c}
        {\textbf{\name{} (w/o PM)}} & \multicolumn{1}{c}
        {\textbf{\name{} (w/ PM)}}\\ \midrule
        ~~CIFAR-10                   & $54.41\%\pm0.64\%$ & $60.96\%\pm7.19\%$ & $59.67\%\pm4.81\%$ & $68.67\%\pm3.34\%$ & $56.36\%\pm1.56\%$ & $59.71\%\pm0.41\%$ & $\bm{74.49\%\pm2.16\%}$ & $\bm{76.53\%\pm2.0\%}$~~    \\
        ~~CIFAR-100           &$46.18\%\pm2.92\%$ & $48.25\%\pm8.27\%$ & $49.54\%\pm7.15\%$ & $53.07\%\pm5.31\%$ & $48.46\%\pm1.90\%$ & $58.95\%\pm1.70\%$ & $\bm{57.35\%\pm1.76\%}$ & $\bm{61.06\%\pm3.30\%}$~~    \\
        ~~EMNIST               &  $54.34\%\pm1.62\%$ & $69.74\%\pm1.58\%$ & $72.13\%\pm1.57\%$ & $73.68\%\pm3.19\%$ & $79.45\%\pm0.24\%$ & $84.67\%\pm1.55\%$ & $\bm{79.89\%\pm1.69\%}$ & $\bm{89.44\%\pm0.42\%}$~~    \\
        ~~Fashion-MNIST              & $73.95\%\pm3.68\%$ & $90.23\%\pm3.23\%$ & $88.32\%\pm1.73\%$ & $90.24\%\pm1.10\%$ & $91.23\%\pm0.96\%$ & $91.07\%\pm0.76\%$ & $\bm{94.04\%\pm1.16\%}$ & $\bm{97.30\%\pm0.70\%}$~~    \\
        ~~Tiny-ImageNet              & $47.66\%\pm2.04\%$ & $52.41\%\pm1.68\%$ & $48.67\%\pm4.23\%$ & $55.50\%\pm0.43\%$ & $50.19\%\pm4.08\%$ & $57.84\%\pm2.92\%$ & $\bm{58.11\%\pm2.63\%}$ & $\bm{64.45\%\pm1.01\%}$~~    \\
         \bottomrule
    \end{tabular}}
    \label{table:performance}
\end{table*}

\begin{figure*}[t]
    \centering
    \begin{minipage}{0.199\linewidth}
    \centering
    \includegraphics[width=1.0\linewidth]{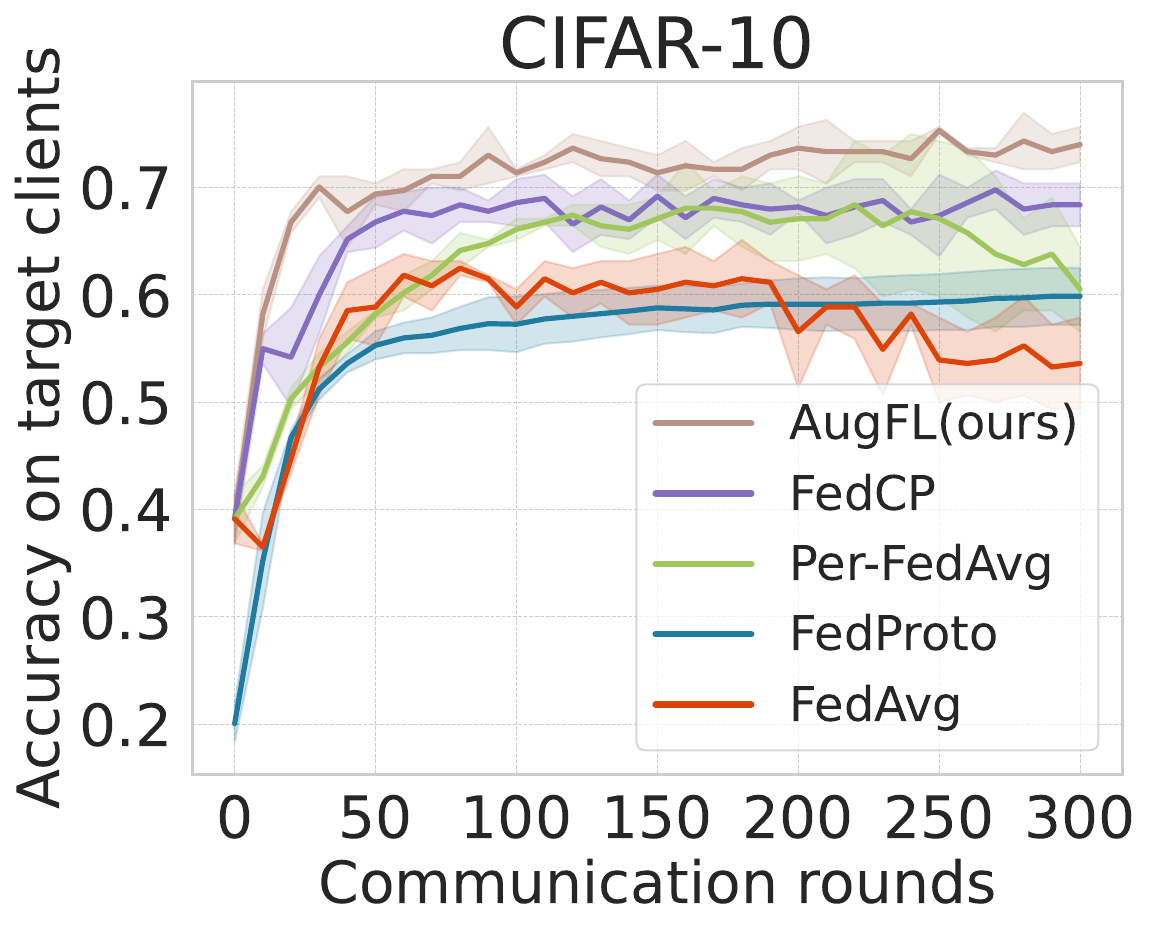}
    \end{minipage}
    \begin{minipage}{0.19\linewidth}
    \centering
    \includegraphics[width=1.0\linewidth]{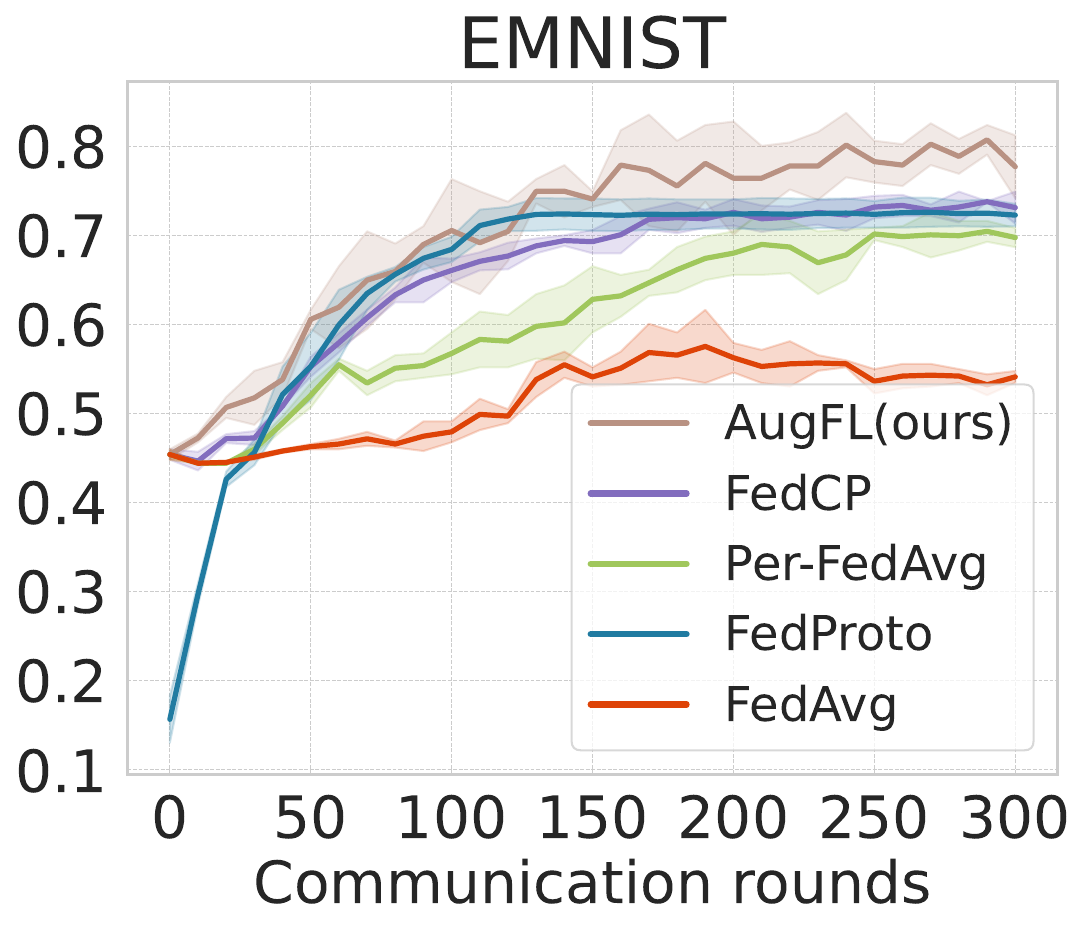}
    \end{minipage}
    \begin{minipage}{0.19\linewidth}
    \centering
    \includegraphics[width=1.0\linewidth]{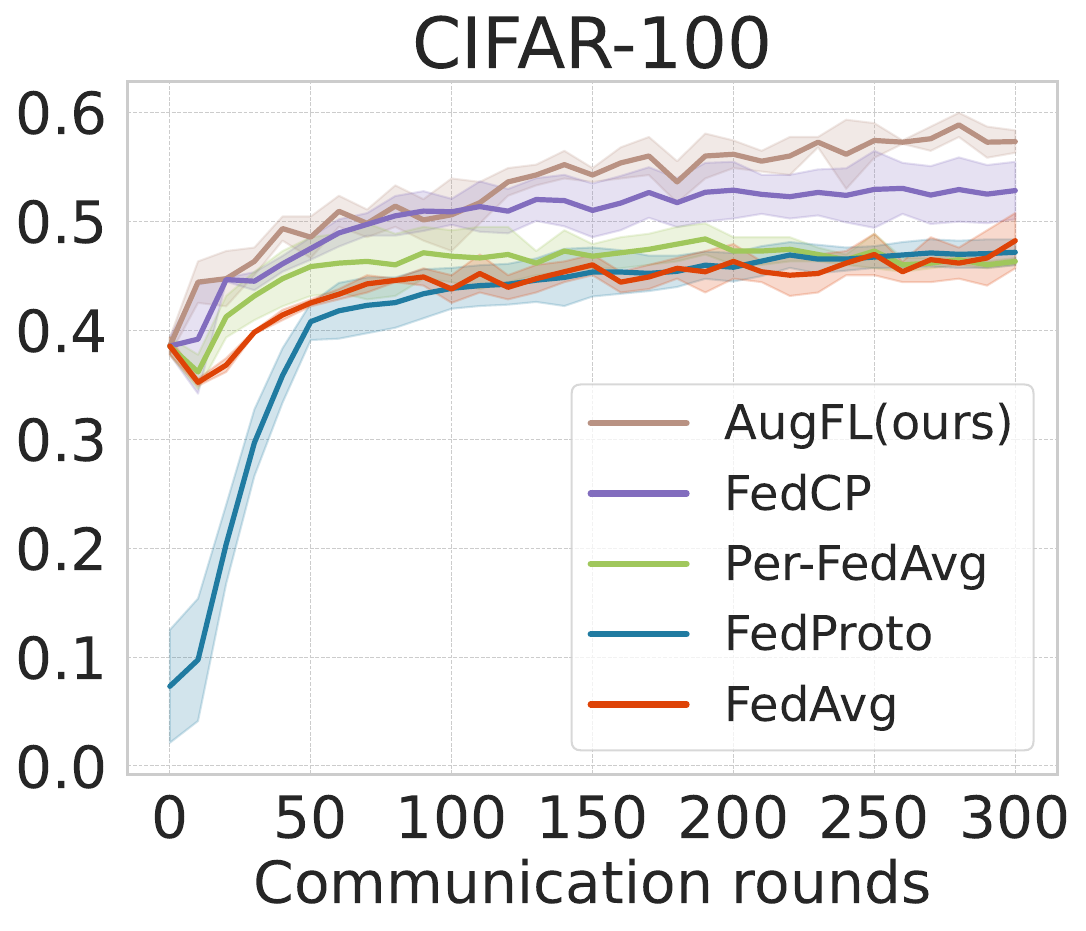}
    \end{minipage}
    \begin{minipage}{0.19\linewidth}
    \centering
    \includegraphics[width=1.0\linewidth]{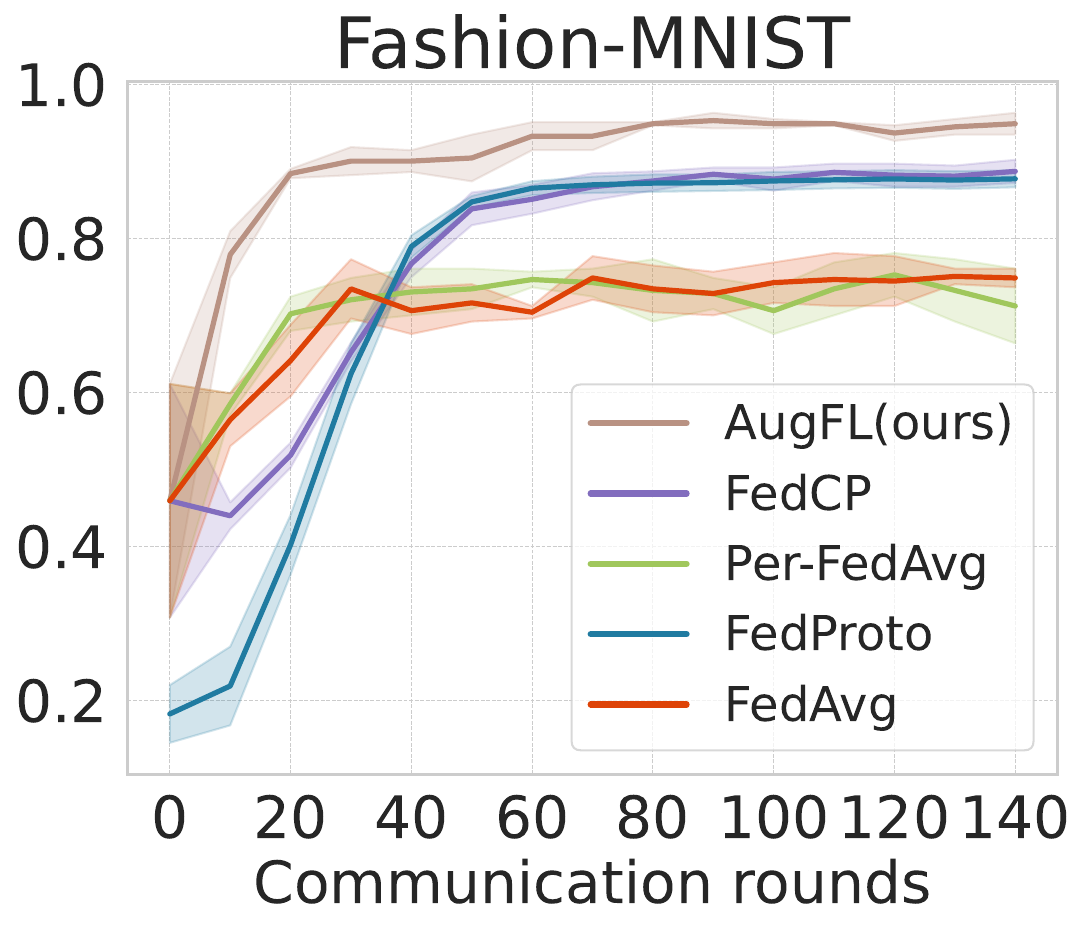}
    \end{minipage}
    \begin{minipage}{0.19\linewidth}
    \centering
    \includegraphics[width=1.0\linewidth]{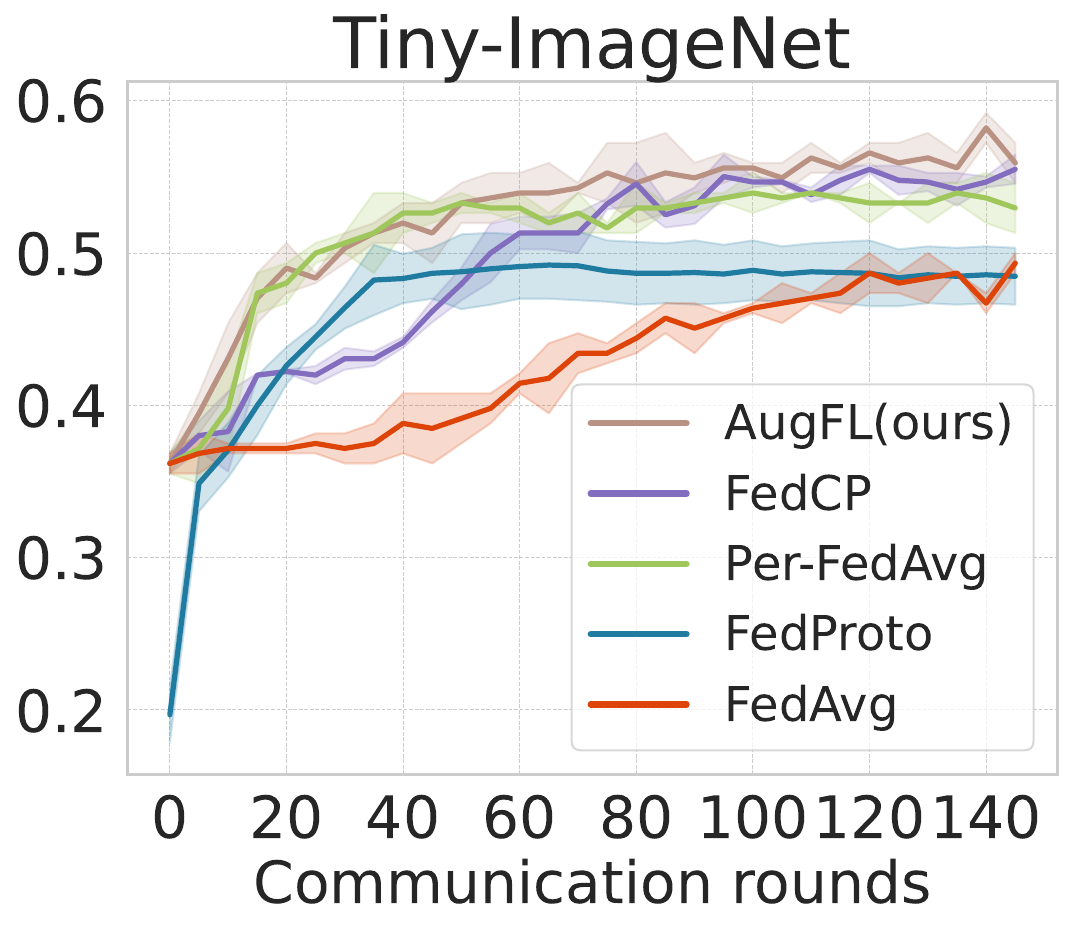}
    \end{minipage}
    \caption{Comparative results between different personalized FL methods.}
    \label{fig:convergence}
    \vspace{-0.2in}
\end{figure*}

\textbf{Results of personalized FL.} We first compare \name{} with personalized FL approaches. To be fair,  we set $\lambda=0$ to remove the impact of using the PM. We set $M=20$ and repeat the experiments 5 times. As shown in \cref{table:performance}, \name{} outperforms the baselines, validating its superior capability in capturing the data heterogeneity. In addition, we depict the learning curves in \cref{fig:convergence}. \name{} converges fast in most benchmarks, indicating it can achieve great communication efficiency.



\begin{figure}[t]
	\centering
	\begin{minipage}{0.49\linewidth}
		\centering
		\includegraphics[width=1.0\linewidth]{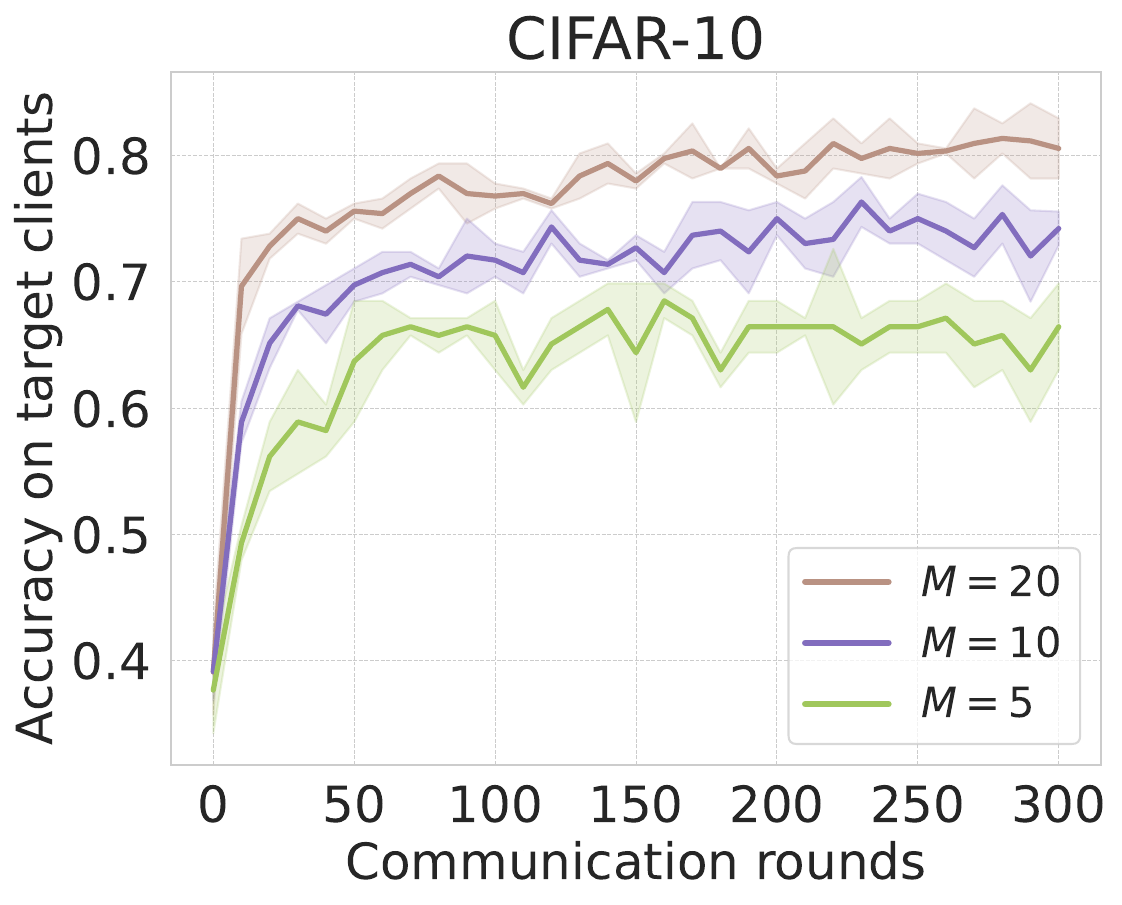}
	\end{minipage}
	\begin{minipage}{0.49\linewidth}
		\centering
		\includegraphics[width=1.0\linewidth]{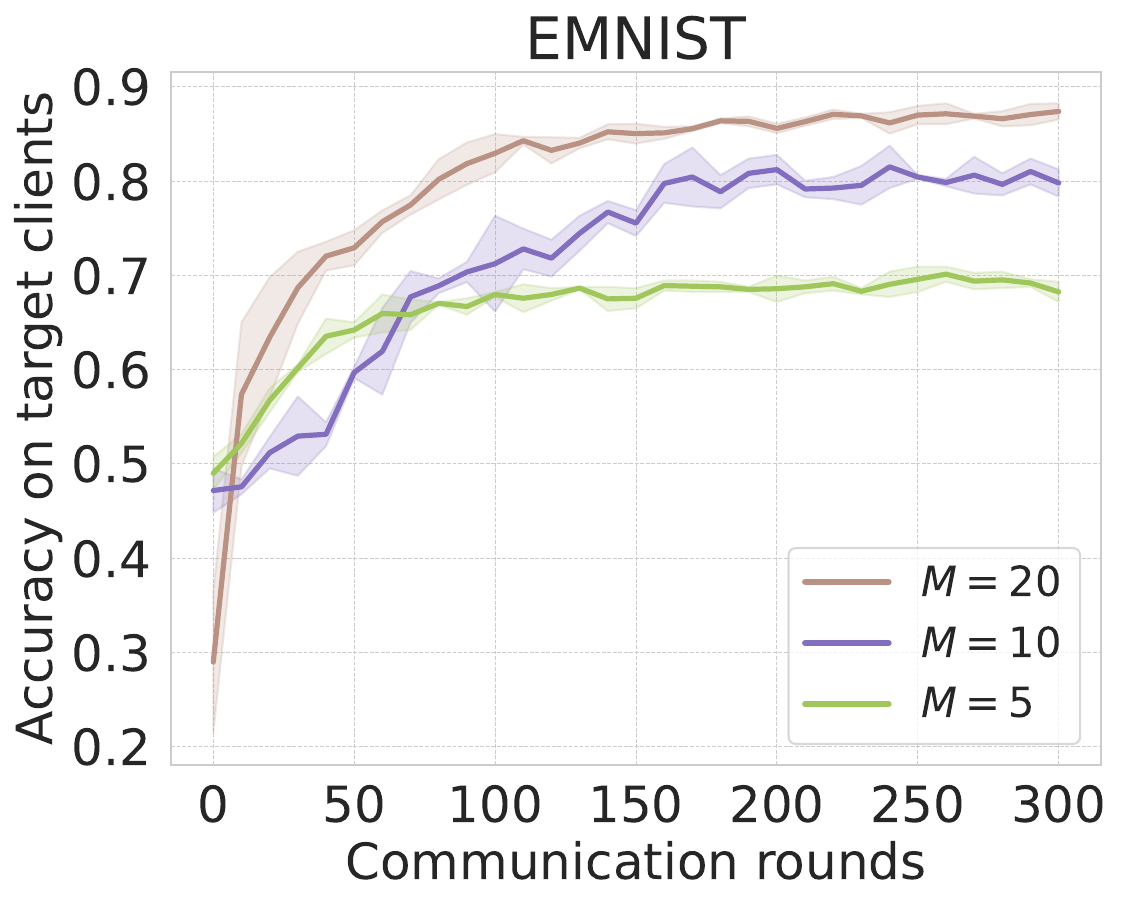}
	\end{minipage}
	
	\begin{minipage}{0.49\linewidth}
		\centering
		\includegraphics[width=1.0\linewidth]{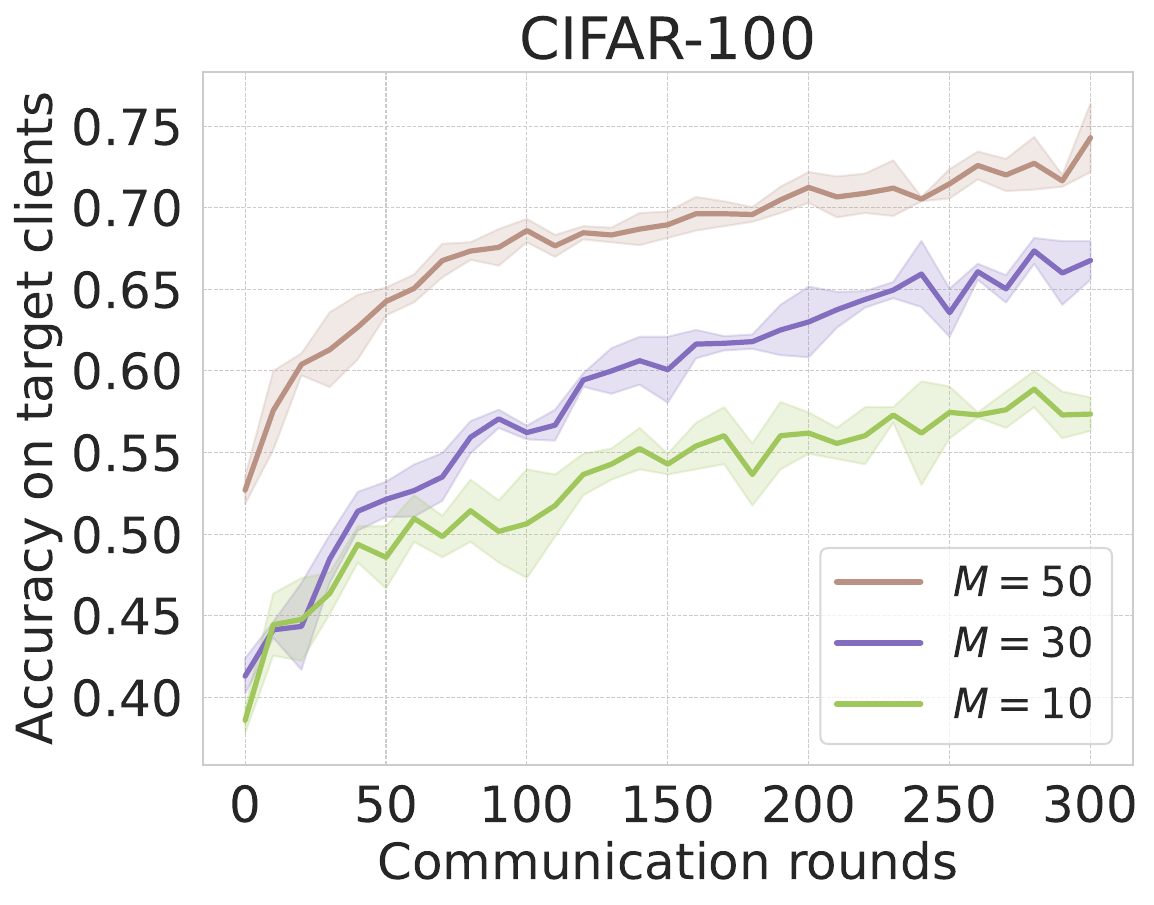}
	\end{minipage}
	\begin{minipage}{0.49\linewidth}
		\centering
		\includegraphics[width=1.0\linewidth]{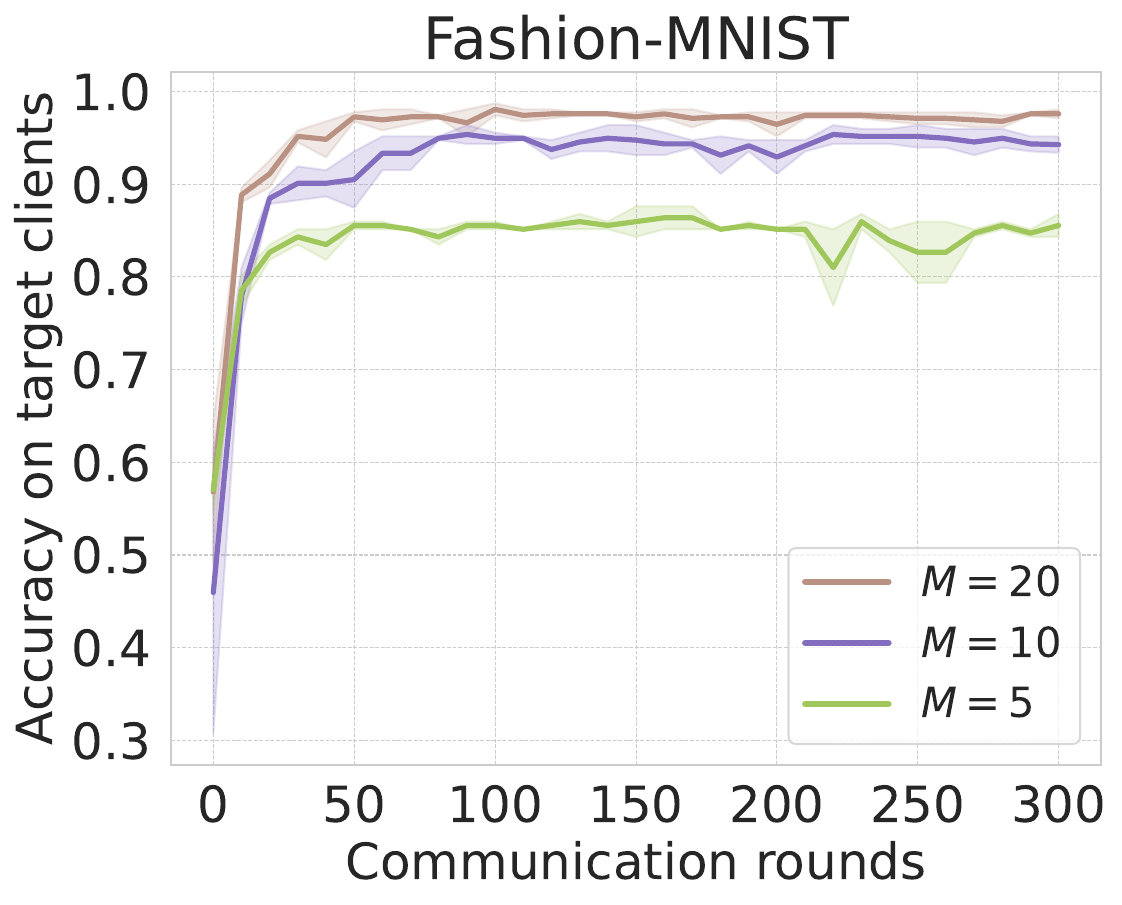}
	\end{minipage}
        \caption{Impact of dataset sizes on performance.}
        \label{fig:datasize}
        
\end{figure}

\textbf{Results under different dataset sizes.} To investigate the impact of dataset sizes on the performance, we fix the number of clients to 40 for Fashion-MNIST, CIFAR-10, and 80 for EMNIST, CIFAR-100. We vary parameter $M$ in the uniform distribution $U(M,2M)$, ranging from 5 to 20 for EMNIST, CIFAR-10, and Fashion-MNIST and 0 to 50 for CIFAR-100. Not surprisingly, the performance of \name{} consistently improves as the amount of local data increases. In particular, albeit with scarce local data, \name{} achieves good performance, demonstrating its great sample efficiency.


	
        

\begin{figure}[t]
	\centering
	\begin{minipage}{0.49\linewidth}
		\centering
		\includegraphics[width=1.0\linewidth]{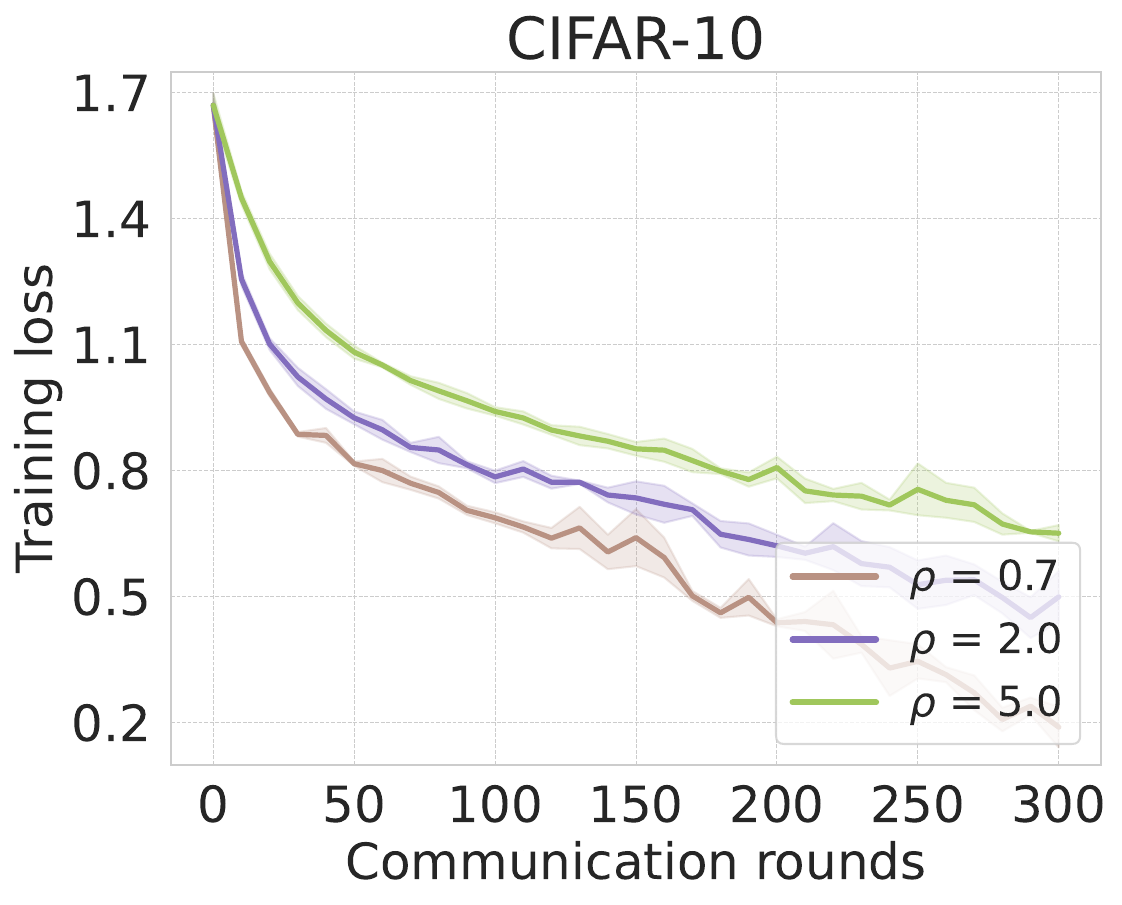}
	\end{minipage}
	\begin{minipage}{0.49\linewidth}
		\centering
		\includegraphics[width=1.0\linewidth]{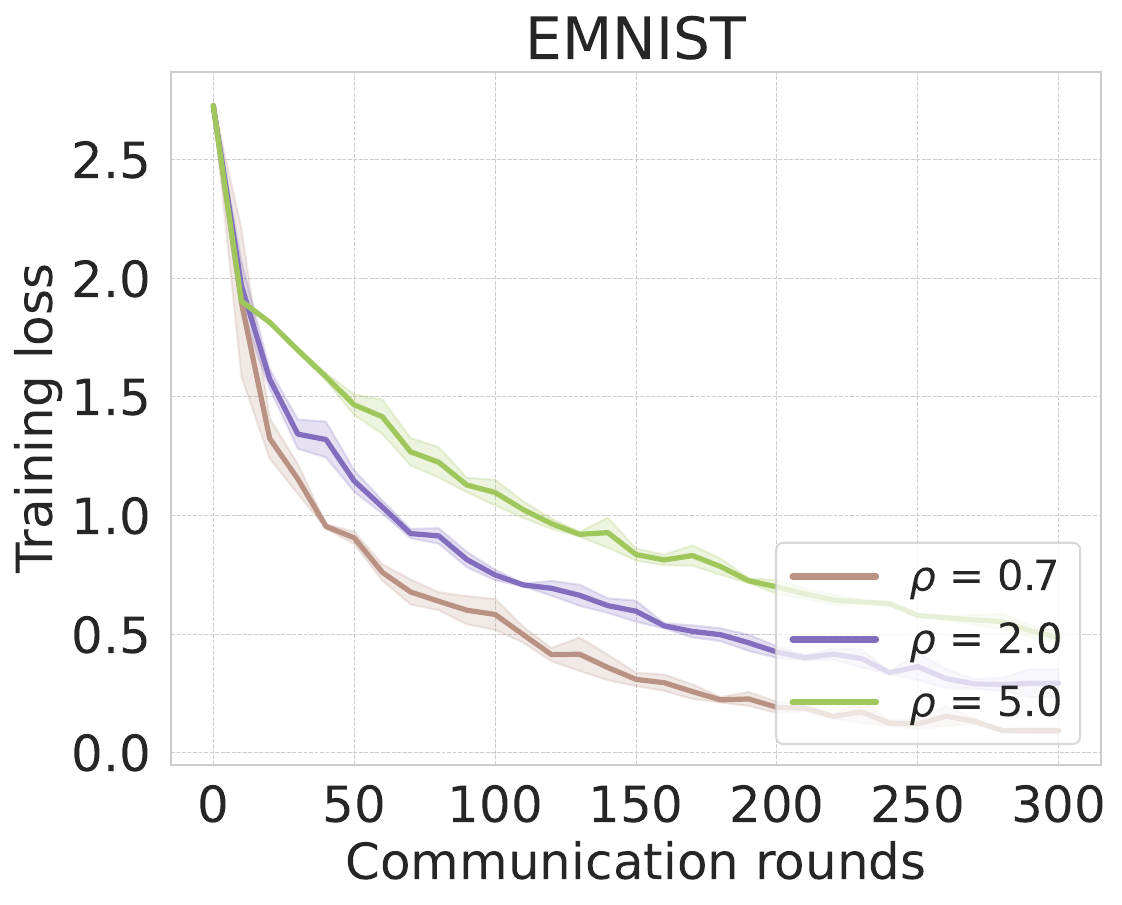}
	\end{minipage}
	
	\begin{minipage}{0.49\linewidth}
		\centering
		\includegraphics[width=1.0\linewidth]{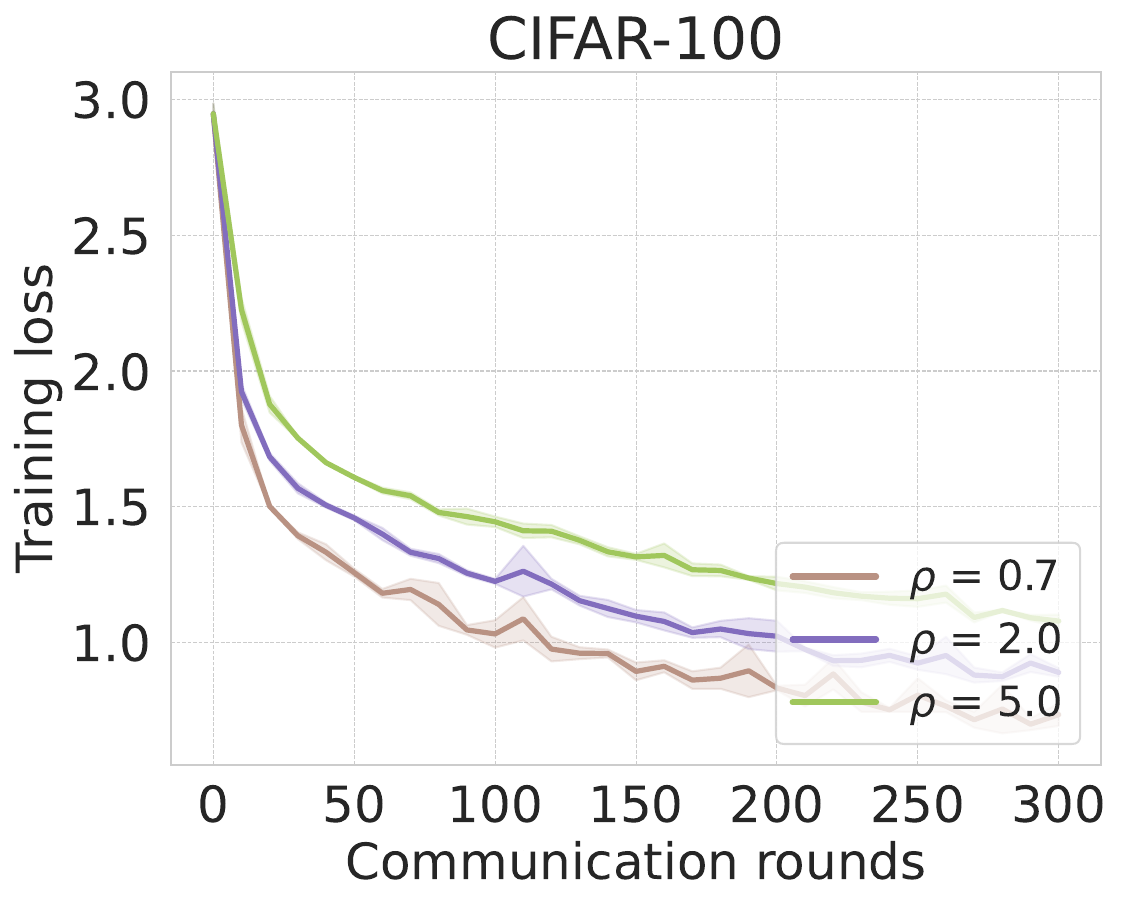}
	\end{minipage}
	\begin{minipage}{0.49\linewidth}
		\centering
		\includegraphics[width=1.0\linewidth]{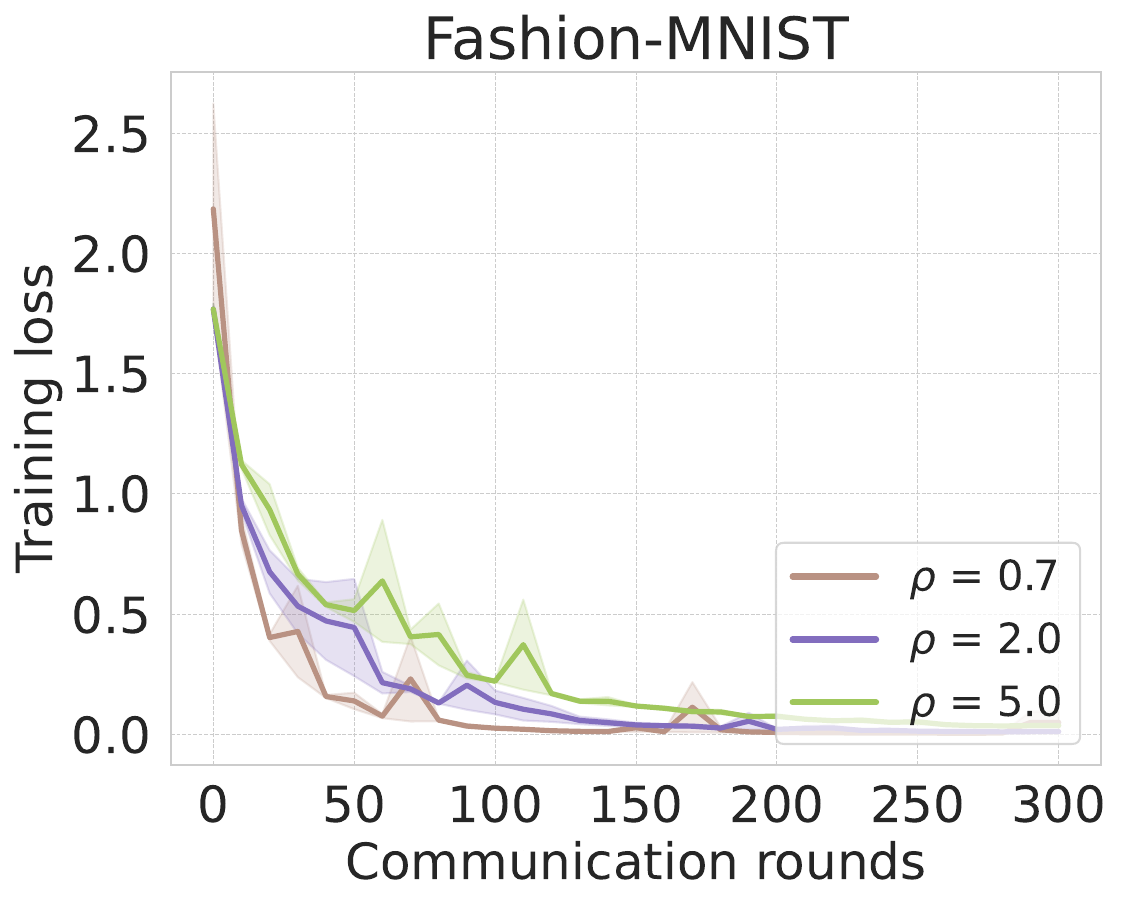}
	\end{minipage}
        \caption{Impact of penalty parameter $\rho$.}
        \label{fig:rho}
        
\end{figure}

\textbf{Impact of penalty parameter $\bm{\rho}$.} We investigate the impact of the penalty parameter $\rho$ on the convergence of \name{} ($\rho_i$ is identical across different clients). It can be seen from \cref{fig:rho} that \name{} has a relatively faster convergence speed with a smaller $\rho$ in terms of the training loss, aligning with our theoretical findings. In particular, a small change of $\rho$ does not greatly affect the convergence properties of the algorithm, which implies that \name{} is robust to the penalty parameter.

\textbf{Results under different numbers of clients.} To show the scalability of \name{}, we fix the dataset size of clients and vary the number of clients from 10 to 80 across EMNIST and CIFAR-10. The results, as depicted in \cref{fig:node_number}, demonstrate that the performance of our algorithm improves with an increasing number of clients. This improvement is attributed to \name{}'s effective aggregation of distributed heterogeneous information, corroborating its scalability in unbalanced environments.


\begin{figure*}[t]
    \centering
    \begin{minipage}{0.199\linewidth}
    \centering
    \includegraphics[width=1.0\linewidth]{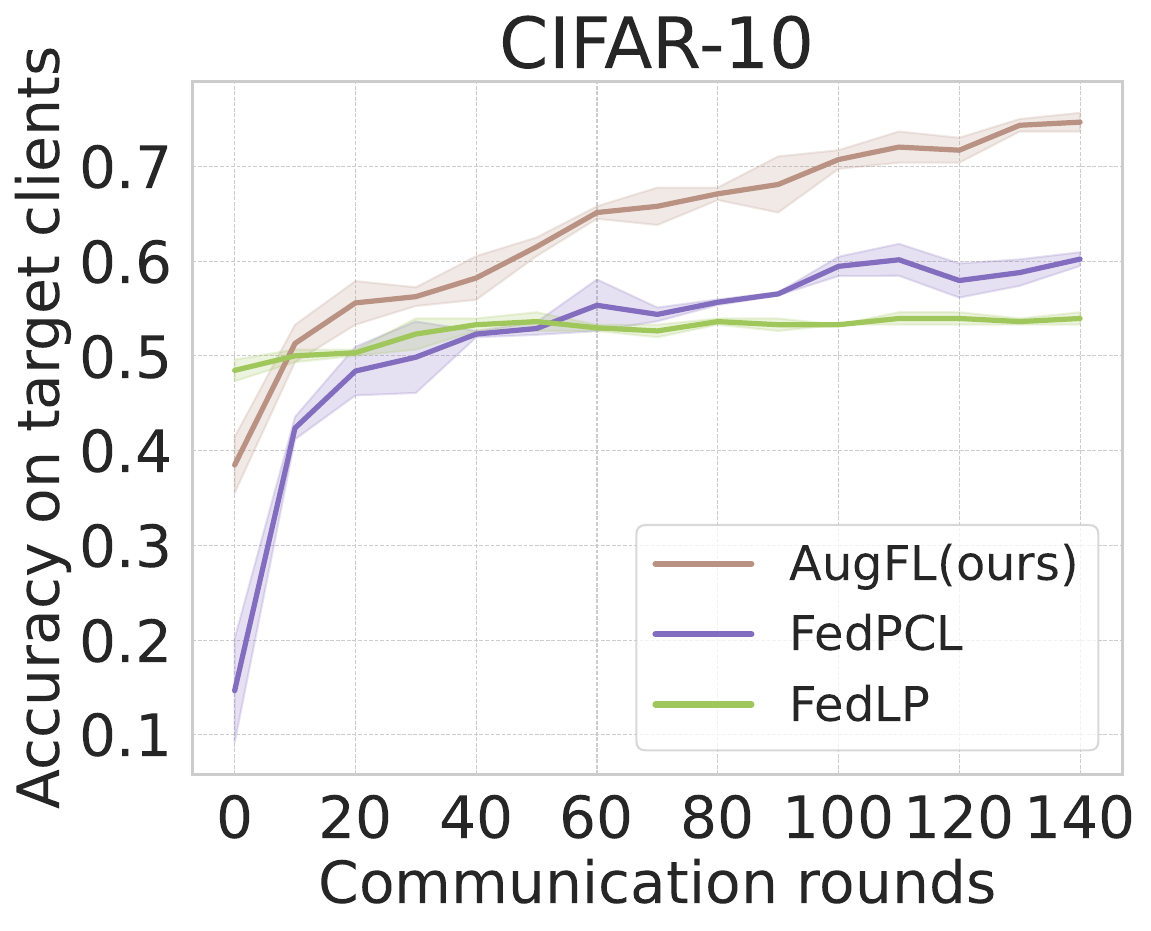}
    \end{minipage}
    \begin{minipage}{0.19\linewidth}
    \centering
    \includegraphics[width=1.0\linewidth]{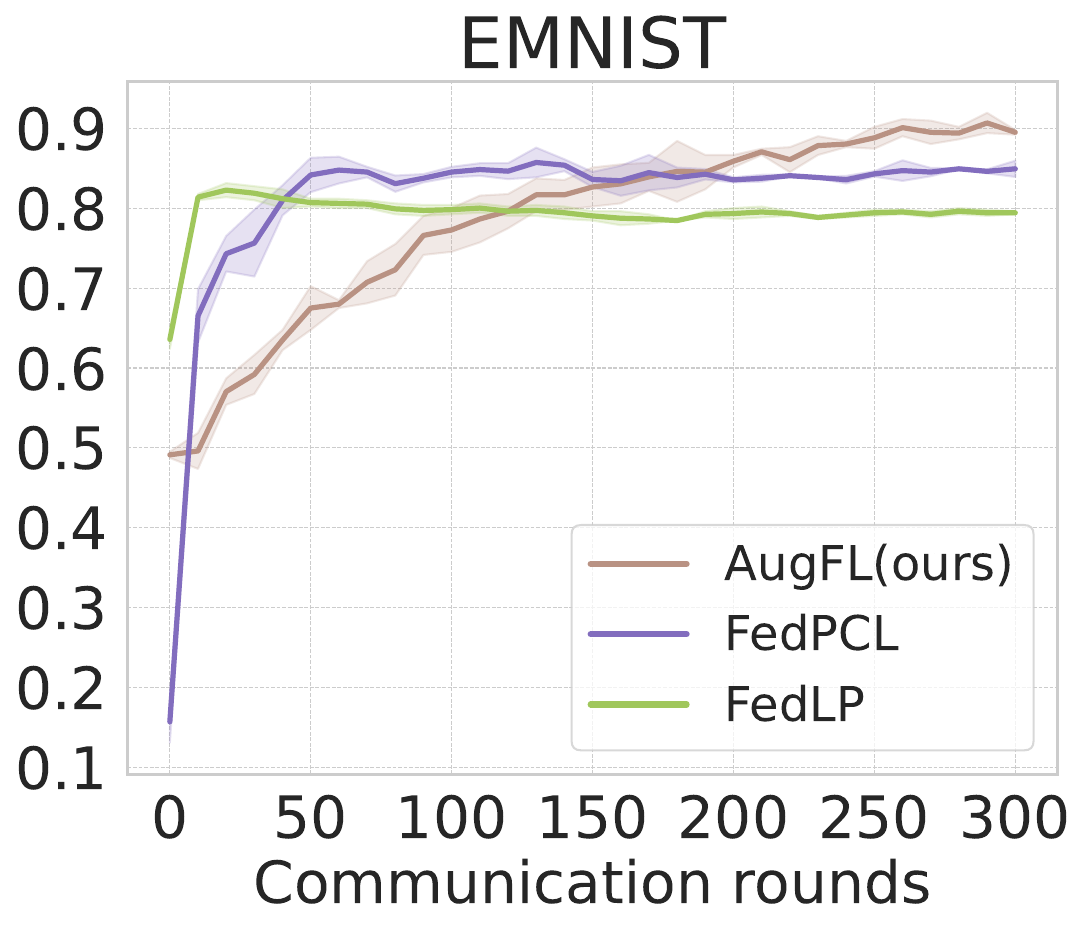}
    \end{minipage}
    \begin{minipage}{0.19\linewidth}
    \centering
    \includegraphics[width=1.0\linewidth]{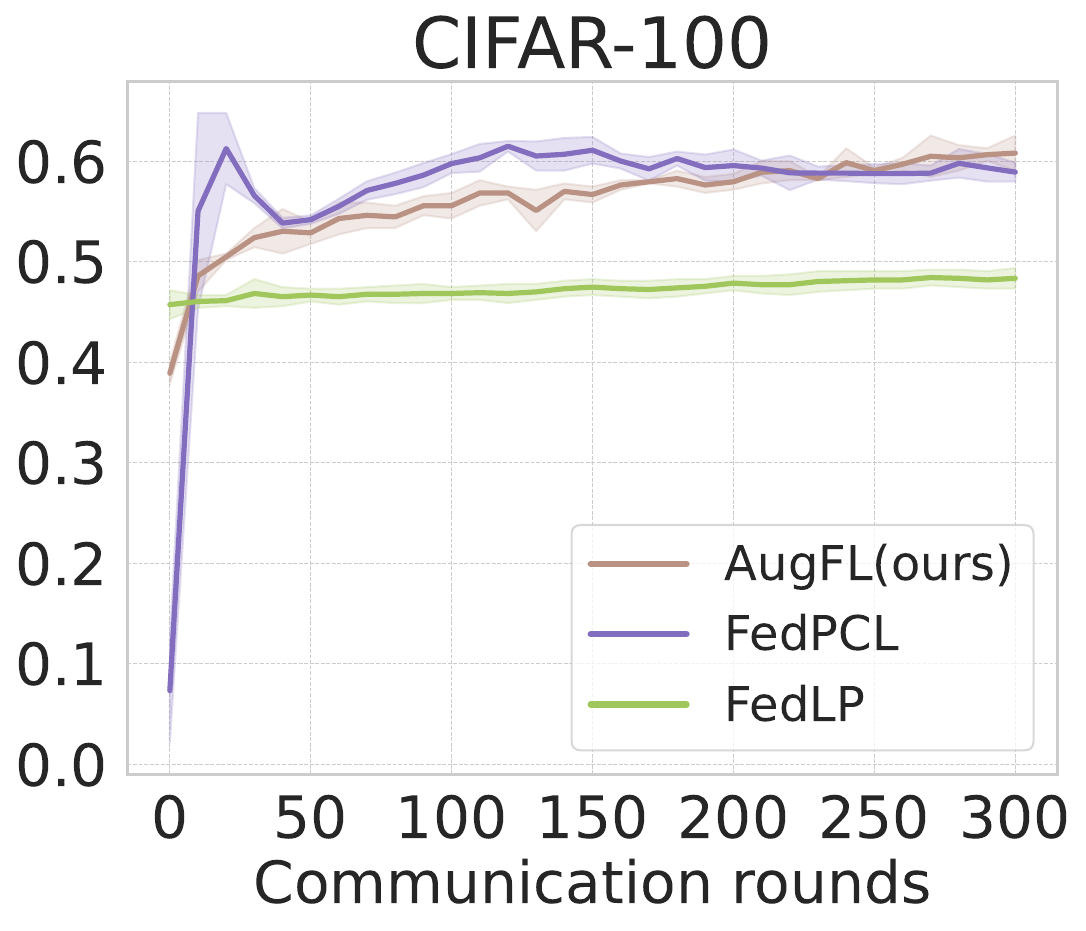}
    \end{minipage}
    \begin{minipage}{0.19\linewidth}
    \centering
    \includegraphics[width=1.0\linewidth]{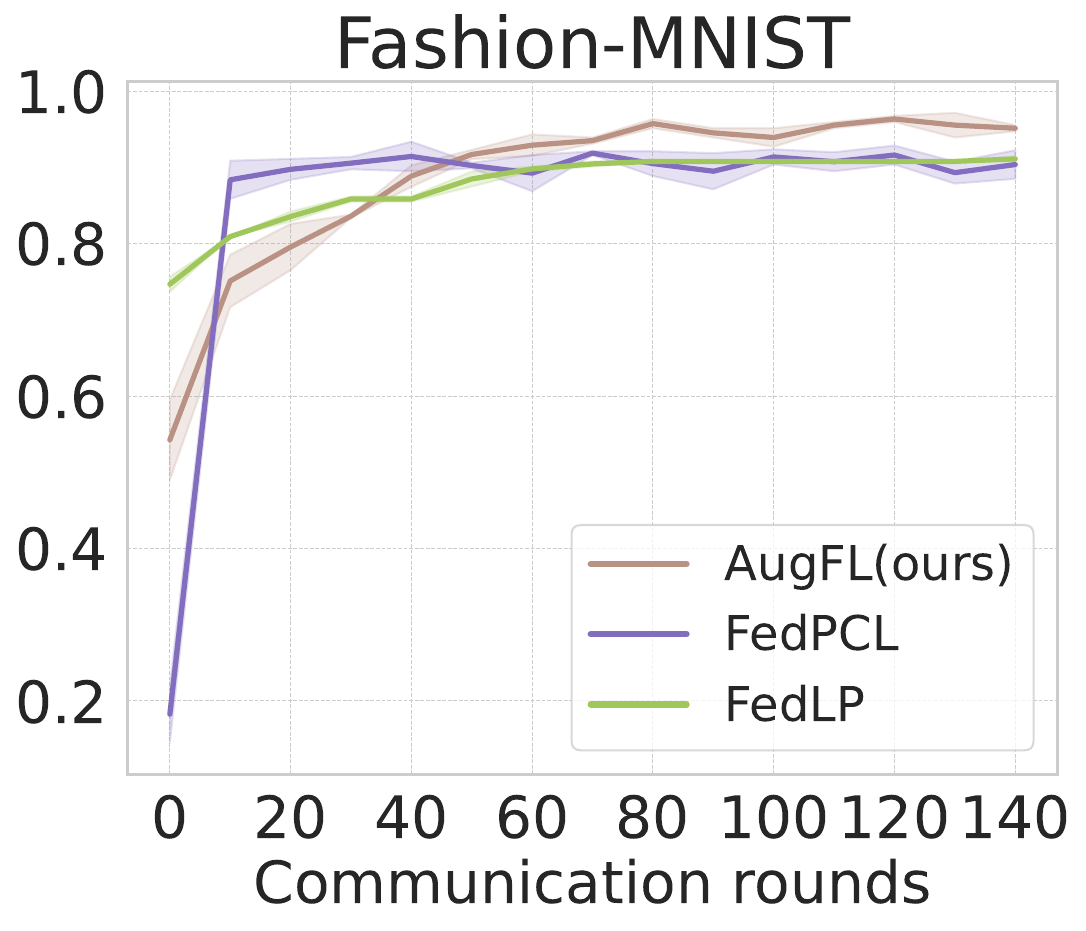}
    \end{minipage}
    \begin{minipage}{0.19\linewidth}
    \centering
    \includegraphics[width=1.0\linewidth]{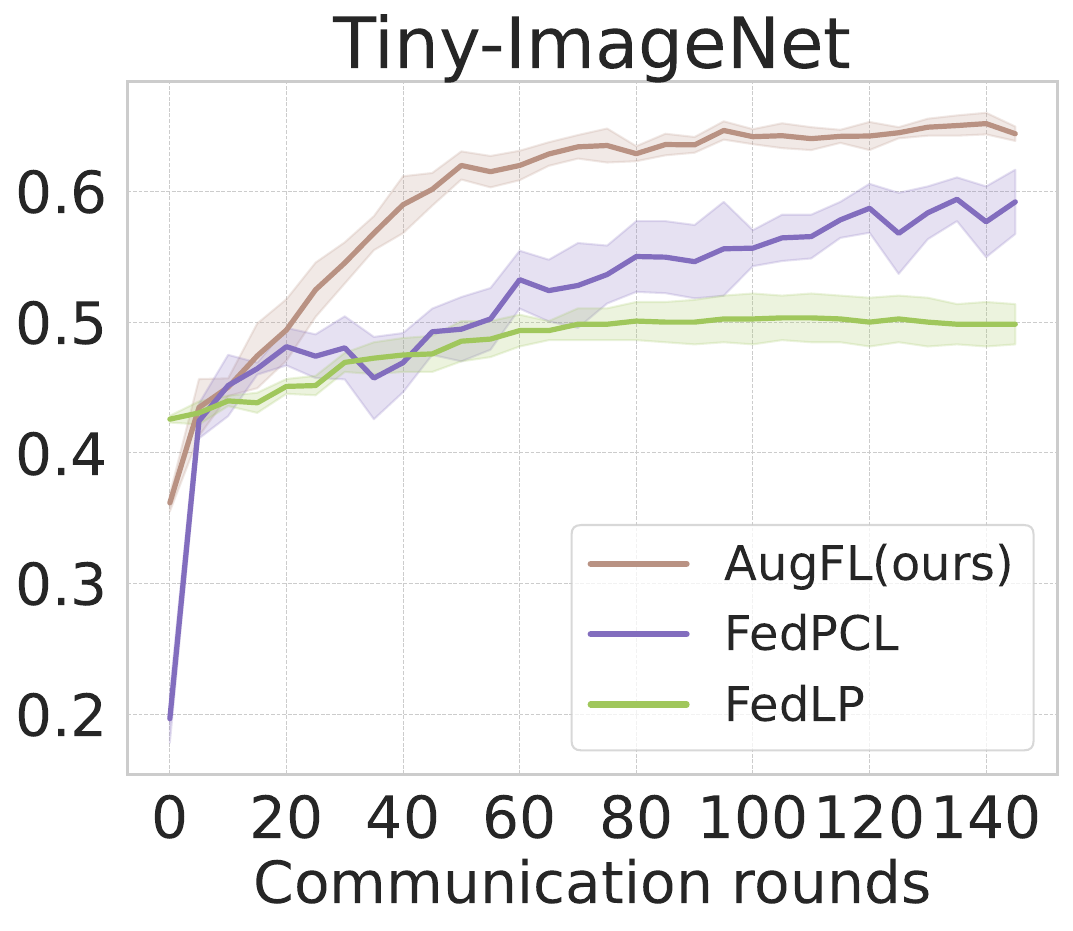}
    \end{minipage}
    \caption{Comparative results between different PM-aided FL methods.}
    \label{fig:AugFL_PM_baselines}
    \vspace{-0.2in}
\end{figure*}

\textbf{Results of PM-aided FL.} For fair comparison, we deploy a pretrained ResNet8x4 as the initial client model for FedLP and FedPCL, while \name{} uses the same PM only on the server for knowledge distillation. The results are shown in \cref{fig:AugFL_PM_baselines}. While FedLP initially performs well due to the PM initialization, the continual federated learning causes the model to easily forget this initial knowledge. FedPCL also shows subpar performance, as its frozen local encoders limit its ability to capture client-specific domain knowledge. Furthermore, deploying the PM on clients would introduce potentially high computational and storage costs and may compromise the ownership of the PM.


\begin{figure}[t]
	\centering
	\begin{minipage}{0.49\linewidth}
		\centering
		\includegraphics[width=1.0\linewidth]{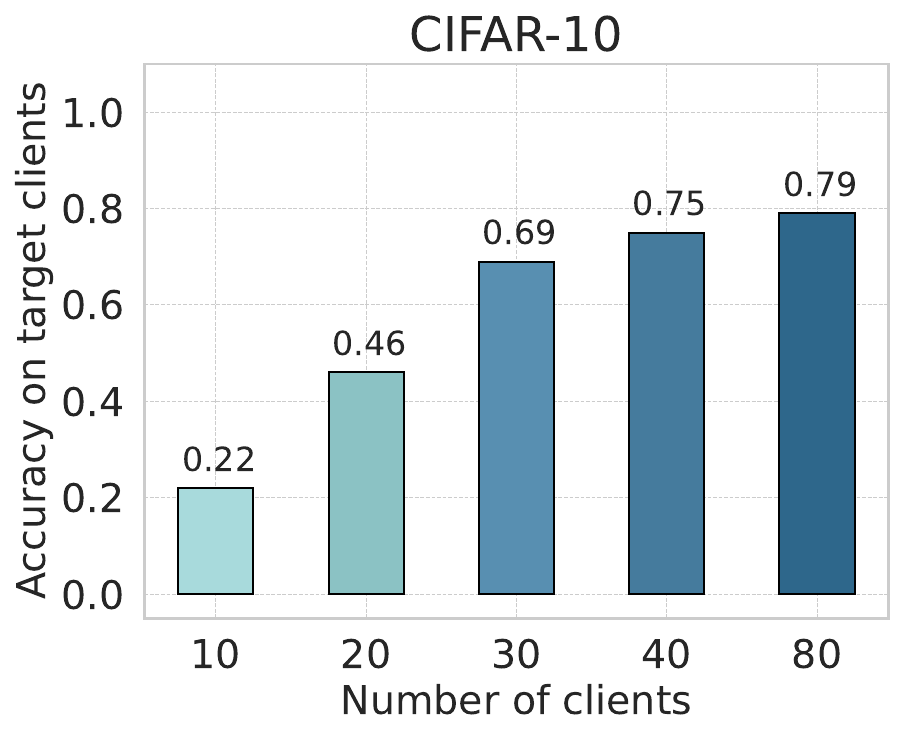}
	\end{minipage}
	\begin{minipage}{0.49\linewidth}
		\centering
		\includegraphics[width=1.0\linewidth]{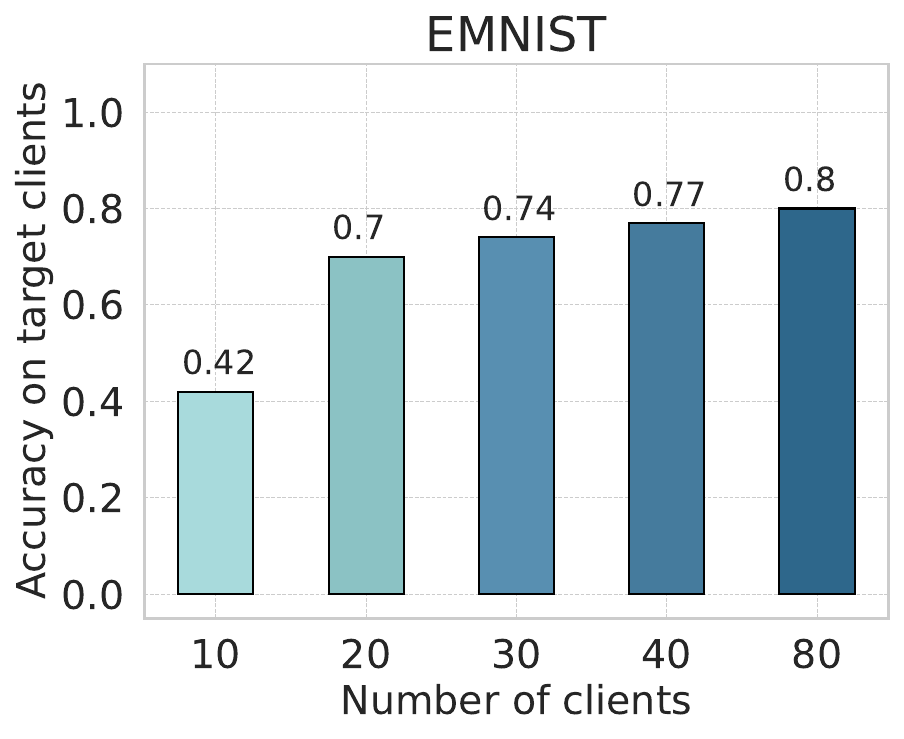}
	\end{minipage}
    \caption{Impact of network scales on \name{}.}
    \label{fig:node_number}
    \vspace{-0.2in}
\end{figure}

\begin{figure}[t]
    \centering
    \vspace{0.35em}
    \begin{minipage}{0.49\linewidth}
    \centering
    \includegraphics[width=1.0\linewidth]{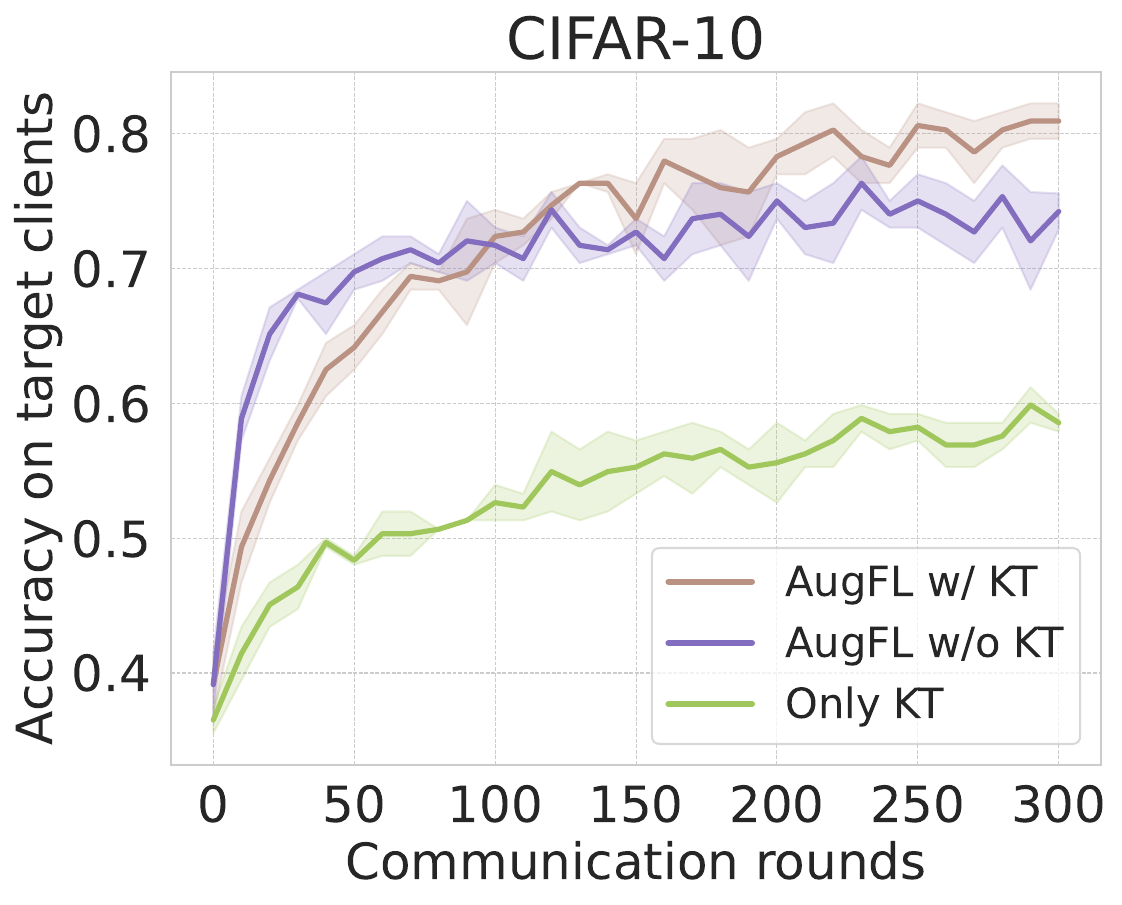}
    \end{minipage}
    \begin{minipage}{0.49\linewidth}
    \centering
    \includegraphics[width=1.0\linewidth]{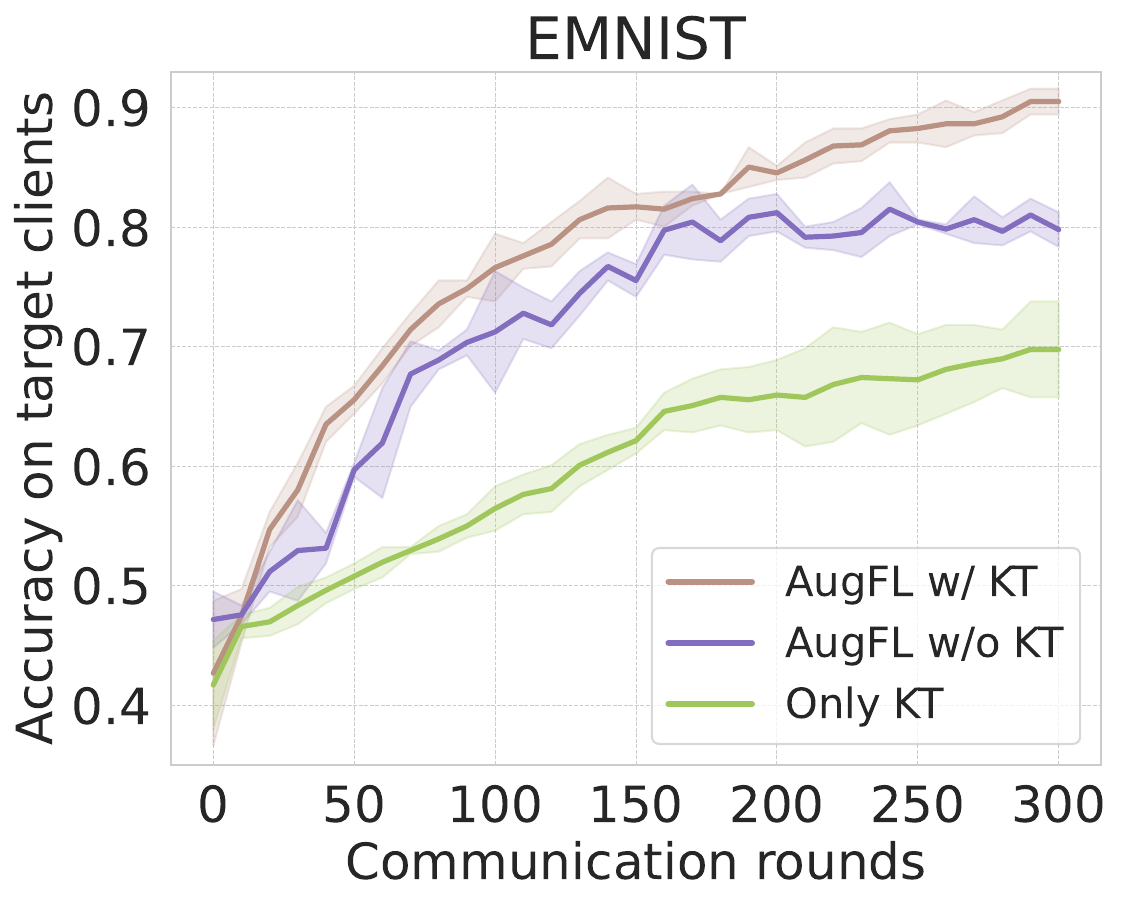}
    \end{minipage}
    
    \begin{minipage}{0.49\linewidth}
    \centering
    \includegraphics[width=1.0\linewidth]{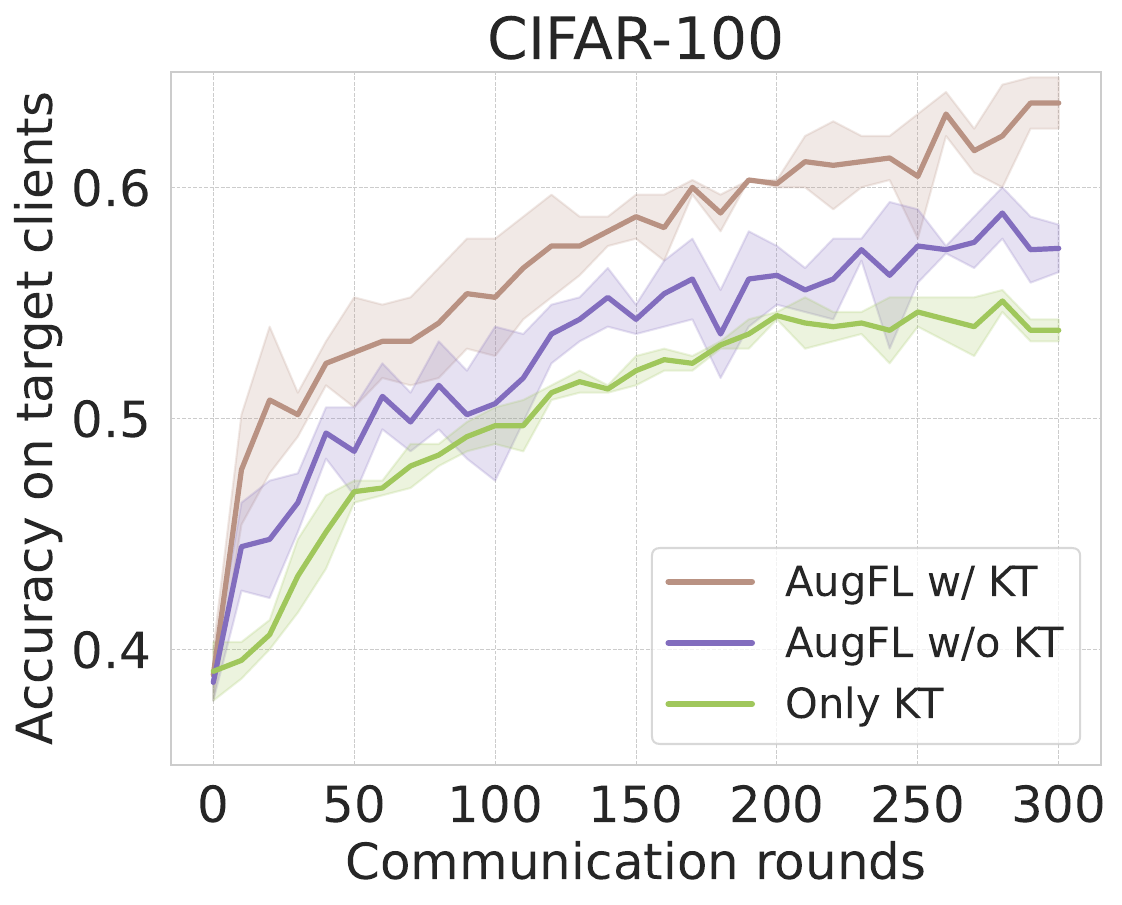}
    \end{minipage}
    \begin{minipage}{0.49\linewidth}
    \centering
    \includegraphics[width=1.0\linewidth]{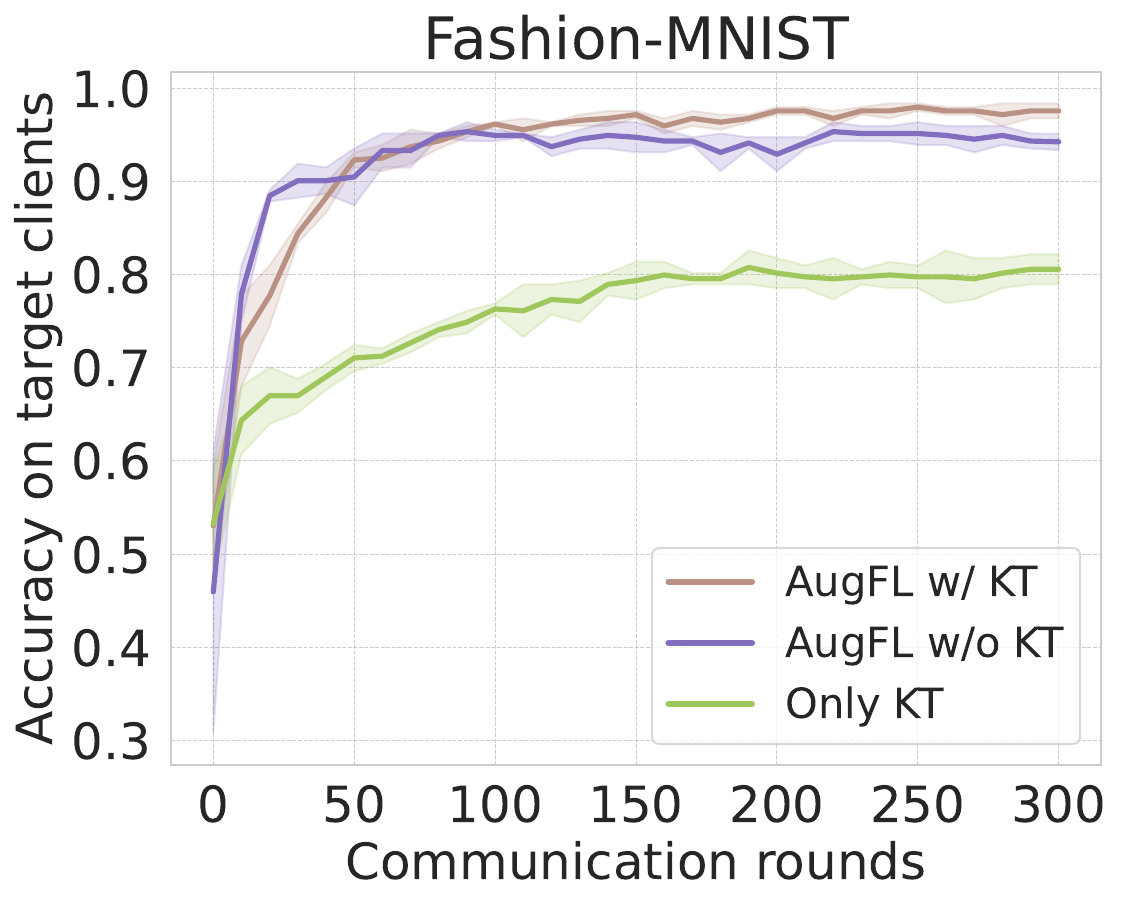}
    \end{minipage}
    \caption{Impact of knowledge transfer.}
    \label{fig:regularization}
    \vspace{-0.2in}
\end{figure}

\textbf{Benefit of the pretrained model.} To showcase the impact of the PM on performance enhancement, we partition the benchmark data into two parts. The first 10,000 samples are dedicated to knowledge transfer on the server, while clients' local data are sampled from the remaining part. The results, depicted in \cref{fig:regularization} (where KT denotes knowledge transfer), show that the knowledge transfer from the PM significantly improves the performance. This is because the PM enhances the structural knowledge of the meta-model, enabling client models to better understand the input figures. In addition, we ablate the FL and use $R_h(\theta,\theta_p)$ solely for knowledge distillation. As illustrated in \cref{fig:regularization}, only knowledge distillation works not well due to a lack of domain-specific knowledge.
\begin{figure}[t]
	\centering
	\begin{minipage}{0.49\linewidth}
		\centering
		\includegraphics[width=1.0\linewidth]{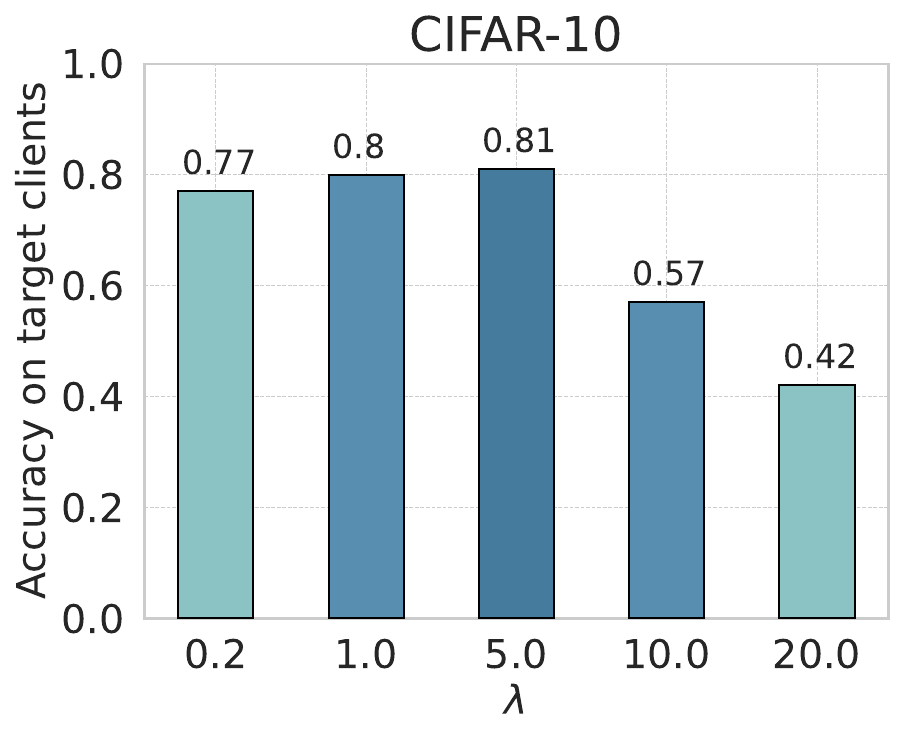}
	\end{minipage}
	\begin{minipage}{0.49\linewidth}
		\centering
		\includegraphics[width=1.0\linewidth]{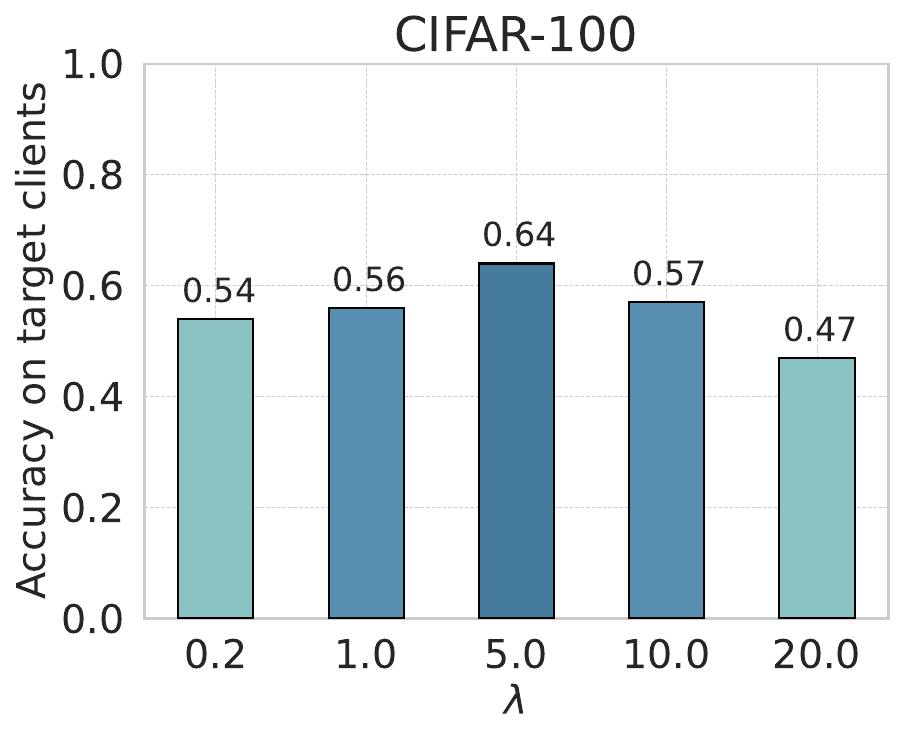}
	\end{minipage}
	\caption{Performance under different values of $\lambda$.}
    \label{fig:impact_lambda}
    \vspace{-0.1in}
\end{figure}

\textbf{Impact of $\lambda$.} We conduct hyper-parameter experiments to showcase the impact of the hyper-parameter $\lambda$. As indicated in \cref{fig:impact_lambda}, when $\lambda$ is set too low, \name{} struggles to fully utilize the PM. On the contrary, a high value of $\lambda$ can limit the model to learn the knowledge in client data. We found that $\lambda=5$ consistently yielded good performance, and hence we applied this value in our experiments.

\begin{figure}[h]
\centering
	\subfloat{\includegraphics[width = 0.5\linewidth]{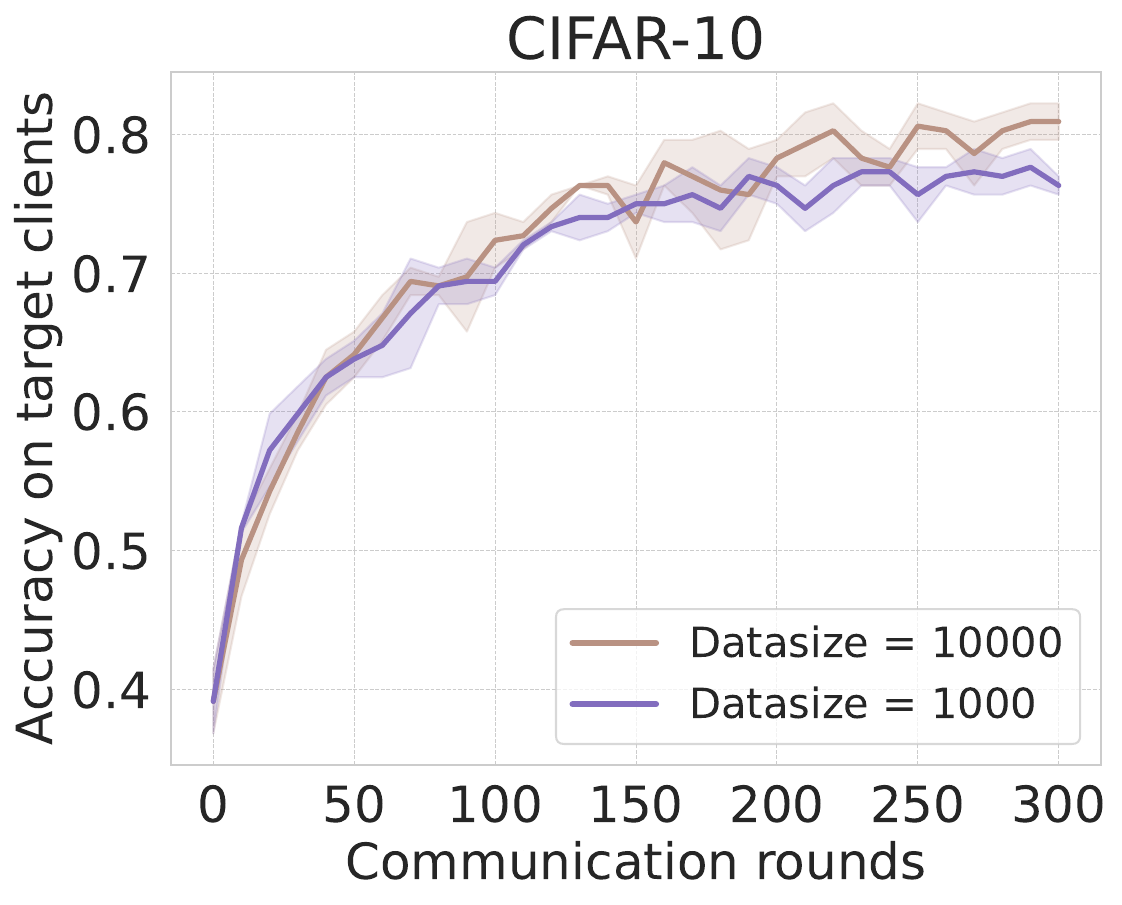}
        \includegraphics[width = 0.5\linewidth]{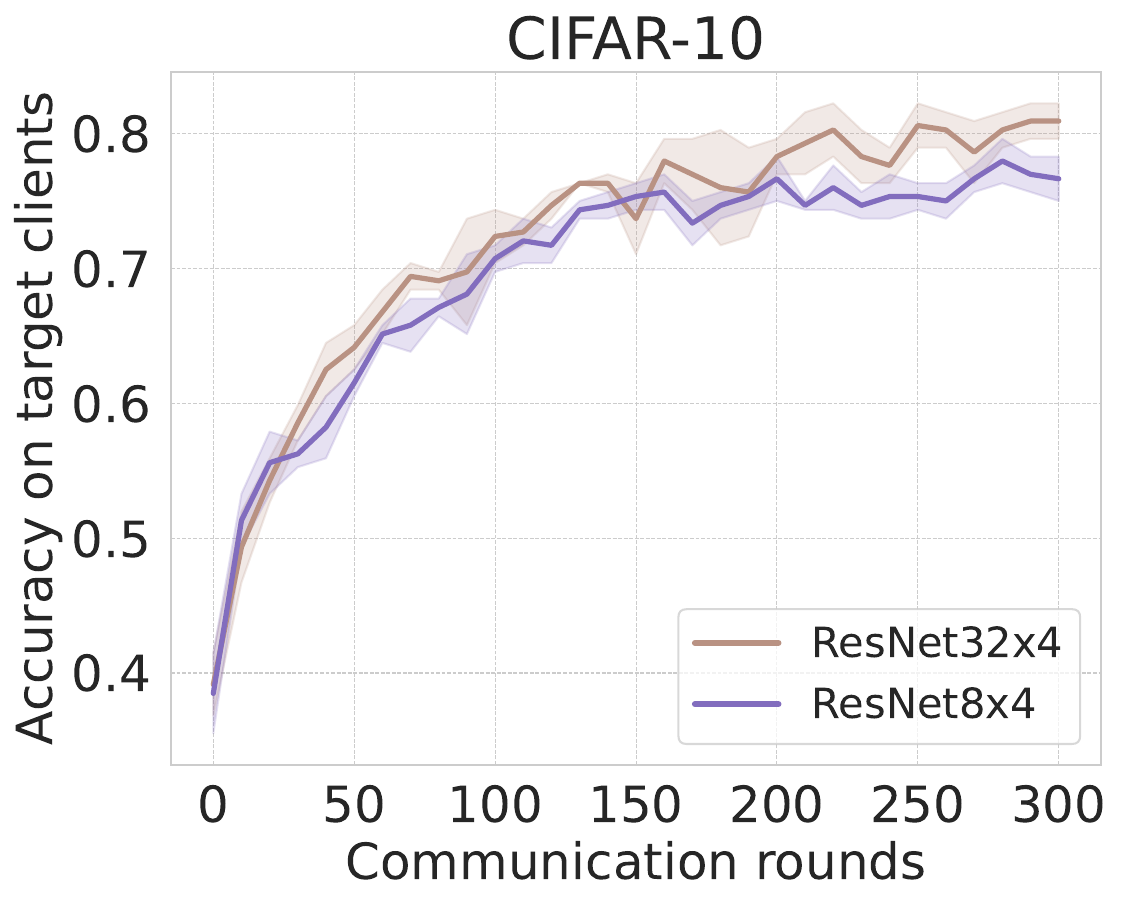}}
	\hfill
	\subfloat{\includegraphics[width = 0.5\linewidth]{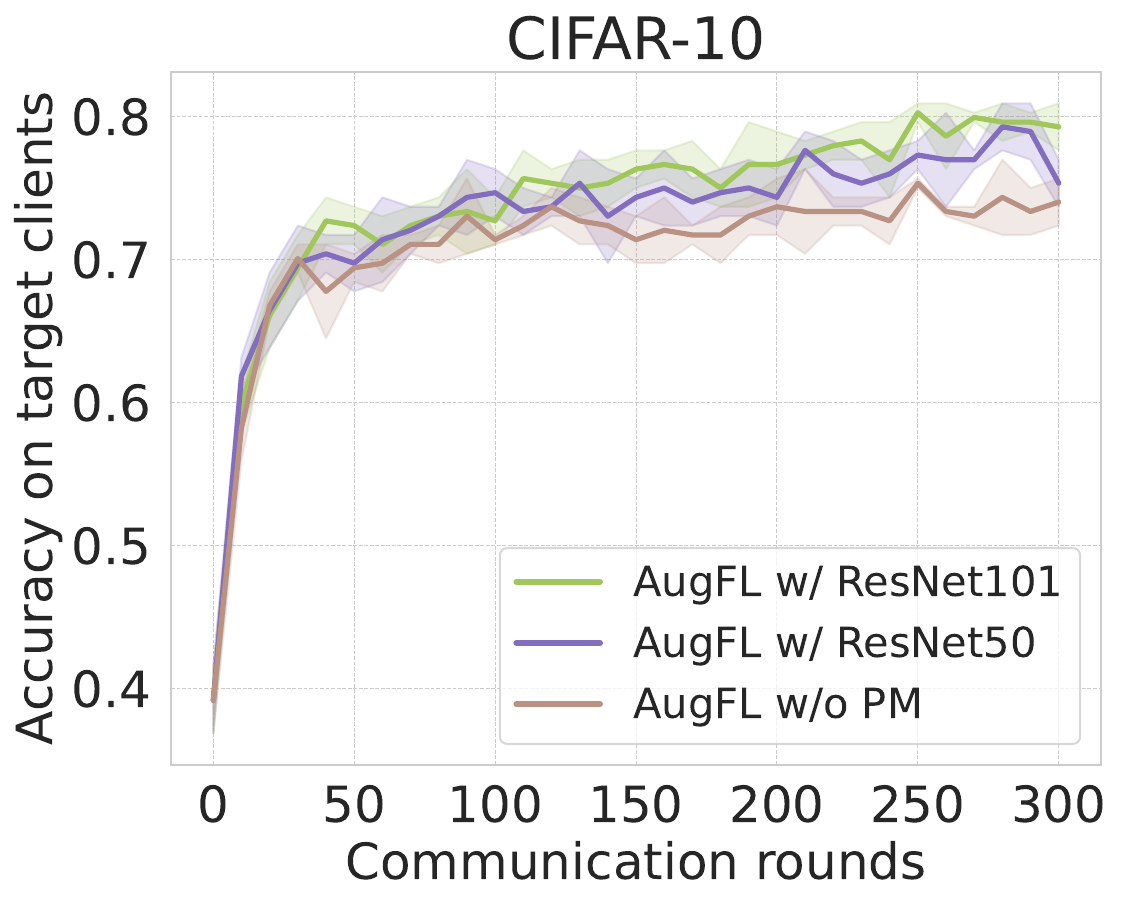}
        \includegraphics[width = 0.5\linewidth]{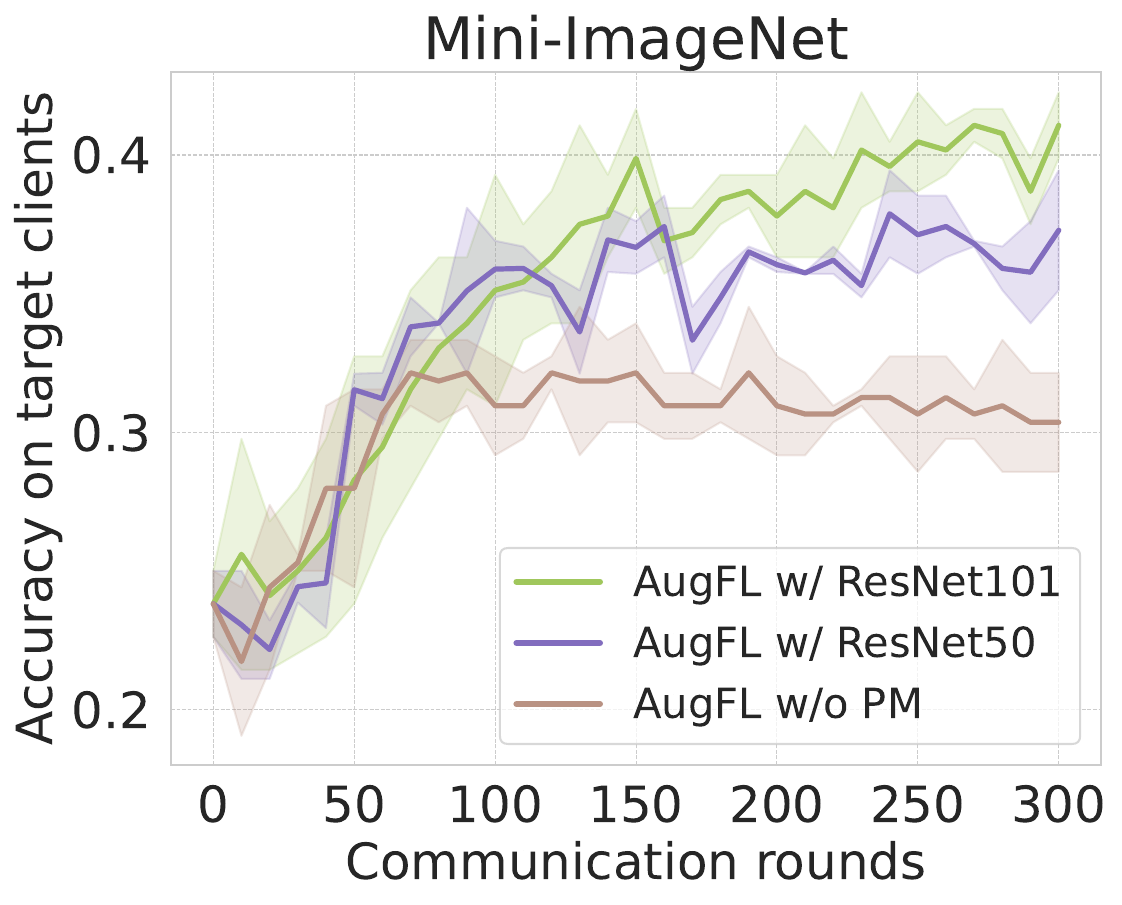}} 
\caption{\textcolor{black}{Performance of \name{} with different PMs}}
\label{fig:diff_PMs}
\end{figure}

\textcolor{black}{\textbf{Results under different pretrained models.} 
We explore the impact of PM on performance by varying the model size and the pretraining dataset. As demonstrated in \cref{fig:diff_PMs}, larger pretrained model and pretraining dataset lead to improved performance. This implies that \name{} can effectively utilize the capability of the PM. To further demonstrate the adaptability of our proposed framework, we incorporate commonly used pretrained ResNet models - \texttt{resnet50} and \texttt{resnet101} pretrained on ImageNet from PyTorch \cite{resnet_pytorch}. We conduct complementary experiments on CIFAR-10 and Mini-ImageNet~\cite{vigneswaran2021feature}. As shown in \cref{fig:diff_PMs}, \name{}'s performance improves when using these ResNet models as the PM, with the gains becoming more significant as the PM size increases. In particular, the similarity between the pretraining domain and the target domain lead to more pronounced performance improvement on Mini-ImageNet. The results demonstrate \name{}'s generalizability across a variety of PMs, aligning with our underlying motivation and theoretical results.}

\section{Conclusion}
\label{sec:conclustion}

In this paper, we introduced an inexact-ADMM-based PM-aided personalized FL framework that enjoys low computational cost with no need to expose the information of the PM to clients. We rigorously analyzed the convergence property and adaptation performance of the proposed method and corroborated its superiority over existing baselines across a suite of commonly used benchmarks. In future work, we plan to deploy the introduced framework in practical recommendation systems with knowledge transfer from well-established LLMs. Besides, we will explore enhancing \name{}'s privacy protection by addressing Byzantine attacks and the potential privacy leakage in aggregation.

\section*{Acknowledgments}
This research was supported in part by the National Key R\&D Program of China (No. 2022YFB2703303), the National Natural Science Foundation of China (Nos. 62432004, 62411540034, and 62402267), and a grant from the Guoqiang Institute, Tsinghua University. We also thank Prof. Xiaochun Cao for his valuable guidance and suggestions throughout this research.



\bibliographystyle{IEEEtran}
\bibliography{reference}

\appendix

\section*{Proof of \cref{fsmoothlemma}}

Recall that $F_i(\theta) = L_i(\phi_i(\theta),\mathcal{D}^q_i)$. First, we can obtain $\nabla F_i(x)=(I_n-\alpha\nabla^2 L_i(x,\mathcal{D}^{s}_i))\nabla L_i(x-\alpha\nabla L_i(x,\mathcal{D}^{s}_i),\mathcal{D}^q_i)$. Based on \cref{Lsmooth}, the following facts hold:
\begin{align}
\label{fsmooth_eq1}
\Vert\nabla F_i(x)-\nabla F_i(y)\Vert\le\Vert \nabla L_i(x-\alpha\nabla L_i(x,\mathcal{D}^{s}_i),\mathcal{D}^q_i)\nonumber\\
-\nabla L_i(y-\alpha\nabla L_i(y,\mathcal{D}^{s}_i),\mathcal{D}^q_i) \Vert\\
\nonumber
+\alpha \Vert\nabla^2 L_i(x,\mathcal{D}^{s}_i)\nabla L_i(x-\alpha\nabla L_i(x,\mathcal{D}^{s}_i),\mathcal{D}^q_i)\\
\label{fsmooth_eq2}
-\nabla^2 L_i(y,\mathcal{D}^{s}_i)\nabla L_i(y-\alpha\nabla L_i(y,\mathcal{D}^{s}_i),\mathcal{D}^q_i)\Vert
\end{align}
and 
\begin{align}
    \label{Hessianbound}
    -\mu_i I_n\preceq \nabla^2 L_i(x,\mathcal{D}^{s}_i) \preceq \mu_i I_n,~\forall x\in\mathbb{R}^n.
\end{align}
To prove \cref{fsmooth}, we need to bound \cref{fsmooth_eq1} and \cref{fsmooth_eq2}. For \cref{fsmooth_eq1}, based on \cref{Lsmooth}, we can write 
\begin{align}
    \label{fsmooth_eq3}
    \nonumber
    \Vert \nabla L_i(x-\alpha\nabla L_i(x,\mathcal{D}^{s}_i),\mathcal{D}^q_i)-\nabla L_i(y-\alpha\nabla L_i(y,\mathcal{D}^{s}_i),\mathcal{D}^q_i) \Vert\\
    \nonumber
    \le\mu_i\Vert x-y-\alpha(\nabla L_i(x,\mathcal{D}^{s}_i)-\nabla L_i(y,\mathcal{D}^{s}_i))\Vert\\
    \nonumber
    \le\mu_i( \Vert x-y\Vert + \alpha\Vert \nabla L_i(x,\mathcal{D}^{s}_i)-\nabla L_i(y,\mathcal{D}^{s}_i)\Vert)\\
    \le (1+\alpha\mu_i)\mu_i\Vert x-y\Vert.
\end{align}
To bound \cref{fsmooth_eq2}, it can be shown that
\begin{align}
    \label{fsmooth_eq4}
    \Vert\nabla^2 L_i(x,\mathcal{D}^{s}_i)\nabla L_i(x-\alpha\nabla L_i(x,\mathcal{D}^{s}_i),\mathcal{D}^q_i)-\nabla^2 L_i(y,\mathcal{D}^{s}_i)\nonumber\\
    \cdot\nabla L_i(y-\alpha\nabla L_i(y,\mathcal{D}^{s}_i),\mathcal{D}^q_i)\Vert\nonumber\\
    =\Vert\nabla^2 L_i(x,\mathcal{D}^{s}_i)\nabla L_i(x-\alpha\nabla L_i(x,\mathcal{D}^{s}_i),\mathcal{D}^q_i)\nonumber\\
    -\nabla^2 L_i(x,\mathcal{D}^{s}_i)\nabla L_i(y-\alpha\nabla L_i(y,\mathcal{D}^{s}_i),\mathcal{D}^q_i)\nonumber\\
    +\nabla^2 L_i(x,\mathcal{D}^{s}_i)\nabla L_i(y-\alpha\nabla L_i(y,\mathcal{D}^{s}_i),\mathcal{D}^q_i)\nonumber\\
    -\nabla^2 L_i(y,\mathcal{D}^{s}_i)\nabla L_i(y-\alpha\nabla L_i(y,\mathcal{D}^{s}_i),\mathcal{D}^q_i)\Vert\nonumber\\
    \le \Vert \nabla^2 L_i(x,\mathcal{D}^{s}_i)\Vert\Vert \nabla L_i(x-\alpha\nabla L_i(x,\mathcal{D}^{s}_i),\mathcal{D}^q_i)\nonumber\\
    -\nabla L_i(y-\alpha\nabla L_i(y,\mathcal{D}^{s}_i),\mathcal{D}^q_i)\Vert\nonumber
    \\
    +\Vert \nabla^2 L_i(x,\mathcal{D}^{s}_i)-\nabla^2 L_i(y,\mathcal{D}^{s}_i)\Vert\nonumber\\
    \cdot\Vert\nabla L_i(y-\alpha\nabla L_i(y,\mathcal{D}^{s}_i),\mathcal{D}^q_i)\Vert\nonumber\\
    \nonumber
    \le ((1+\alpha\mu_i)\mu^2_i+\zeta_i\Vert\nabla L_i(y-\alpha\nabla L_i(y,\mathcal{D}^{s}_i),\mathcal{D}^q_i)\Vert)\Vert x-y\Vert\\
    \le ((1+\alpha\mu_i)\mu^2_i+\zeta_i\beta_i)\Vert x-y\Vert
\end{align}
where \cref{fsmooth_eq4} follows \cref{gradientbound,Hessianbound,fsmooth_eq3} and \cref{HessianLipschitz}. 
Combining \cref{fsmooth_eq3,fsmooth_eq4} yields the result.

\section*{Proof of \cref{dualbound}}

First, denote
\begin{align}
    \label{tilde_F}
    \tilde{\nabla}F_i(\theta^{t+1})\doteq \nabla f^q_i(\phi^{t+1}_i)-\alpha g^{t+1}_i
\end{align}
with $\phi^{t+1}_i=\theta^{t+1}-\alpha\nabla f_i(\theta^{t+1})$. From \cref{eq:update_thetai_inex}, we have the following observation:
\begin{align}
    \label{dualbound_eq1}
    w_i\tilde{\nabla}F_i(\theta^{t+1})+ y^t_i+\rho_i(\theta^{t+1}_i-\theta^{t+1})=0.
\end{align}
Using \cref{eq:update_y}, we conclude that \cref{dualbound_eq1} is equivalent to
\begin{align}
    \label{dualbound_eq2}
    - y^{t+1}_i=w_i\tilde{\nabla}F_i(\theta^{t+1}).
\end{align}
Thus, for all $t\in\mathbb{N}$, the following holds:
\begin{align}
    \label{dualbound_eq3}
    \Vert y^{t+1}_i- y^t_i\Vert= w_i\underbrace{\Vert\tilde{\nabla}F_i(\theta^{t+1})- \tilde{\nabla}F_i(\theta^t)\Vert}_{\text{(a)}}.
\end{align}
Using \cref{fsmoothlemma}, we derive the upper bound of (a) as follows:
\begin{align}
    \label{dualbound_eq4}
    \nonumber
    \text{(a)}
    =&\;\Vert(\nabla F_i(\theta^{t+1})-\tilde{\nabla}F_i(\theta^{t+1}))-(\nabla F_i(\theta^{t+1})-\tilde{\nabla}F_i(\theta^t))\Vert\\
    \nonumber
    \le&\;\Vert\nabla F_i(\theta^{t+1})-\tilde{\nabla}F_i(\theta^{t+1})\Vert+\Vert\nabla F_i(\theta^{t+1})-\tilde{\nabla}F_i(\theta^t)\Vert\\
    \nonumber
    \le&\;\Vert\nabla F_i(\theta^{t+1})-\tilde{\nabla}F_i(\theta^{t+1})\Vert+\Vert\nabla F_i(\theta^{t+1})-\nabla F_i(\theta^t)\Vert\\
    &+\Vert\nabla F_i(\theta^t)-\tilde{\nabla}F_i(\theta^t)\Vert\nonumber\\
    \le&\;\nu_i\Vert\theta^{t+1}-\theta^t\Vert + (\delta_{i,t}+\delta_{i,t+1})\alpha\zeta_i\beta^2_i
\end{align}
where the last equality uses the following result in  \cite{fallah2020convergence}
\begin{align*}
    \Vert\nabla^2 f_i(\theta^{t+1})\nabla f^q_i(\phi^{t+1}_i)-g^{t+1}_i\Vert\le\delta_{i,t+1}\zeta_i\beta^2_i.
\end{align*}
Plugging \cref{dualbound_eq4} into \cref{dualbound_eq3}, we have
\begin{align*}
    \Vert y^{t+1}_i- y^t_i\Vert\le w_i\nu_i\Vert\theta^{t+1}-\theta^t\Vert + (\delta_{i,t}+\delta_{i,t+1})\alpha w_i\zeta_i\beta^2_i,
\end{align*}
which completes the proof.

\section*{Proof of \cref{l_descent}}

First, define $\hat{\mathcal{L}}_i(\theta_i,\theta^{t+1}, y^t_i)$ and $\tilde{\mathcal{L}}_i(\theta_i,\theta^{t+1}, y^t_i)$ as
\begin{align*}
    &\hat{\mathcal{L}}_i(\theta_i,\theta^{t+1}, y^t_i)
    \doteq \langle(I-\alpha\nabla^2 f_i(\theta^{t+1}))\nabla f^q_i(\phi^{t+1}_i),\theta_i-\theta^{t+1}\rangle \nonumber\\
    &~~\cdot w_i + w_i F_i(\theta^{t+1})+\langle y^t_i,\theta_i-\theta^{t+1}\rangle + \rho_i/2\cdot \Vert \theta_i-\theta^{t+1}\Vert^2,\\
    &\tilde{\mathcal{L}}_i(\theta_i,\theta^{t+1}, y^t_i)
    \doteq w_i\langle\nabla f^q_i(\phi^{t+1}_i)-\alpha g^{t+1}_i,\theta_i-\theta^{t+1}\rangle \nonumber\\
    &~~+ w_i F_i(\theta^{t+1})+\langle y^t_i,\theta_i-\theta^{t+1}\rangle + \rho_i/2\cdot \Vert \theta_i-\theta^{t+1}\Vert^2,
\end{align*}
where $g^{t+1}_i$ and $\phi^{t+1}_i$ are defined in \cref{estimatehessian,eq:update_phi}. For any $i\in\mathcal{I}$, using Lemma \ref{fsmoothlemma} yields
\begin{align}
    \label{l_descent_eq1}
    \mathcal{L}_i(\theta_i,\theta^{t+1}, y^t_i)\le\hat{\mathcal{L}}_i(\theta_i,\theta^{t+1}, y^t_i)+w_i\nu_i/2\cdot \Vert \theta_i-\theta^{t+1}\Vert^2.
\end{align}
Recall that 
\begin{align*}
    \Vert\nabla^2 f_i(\theta^{t+1})\nabla f^q_i(\phi^{t+1}_i)-g^{t+1}_i\Vert\le\zeta_i\beta^2_i\delta_{i,t+1}.
\end{align*}
Using the Cauchy-Schwarz inequality, we can write
\begin{align}
    \label{l_descent_eq2}
    \hat{\mathcal{L}}_i(\theta_i,\theta^{t+1}, y^t_i)\le \tilde{\mathcal{L}}_i(\theta_i,\theta^{t+1}, y^t_i) + \alpha w_i\zeta_i\beta^2_i\delta_{i,t+1}\Vert \theta_i-\theta^{t+1}\Vert.
\end{align}
Combining (\ref{l_descent_eq1}) and (\ref{l_descent_eq2}) yields that
\begin{align}
    \label{l_descent_eq3}
    \mathcal{L}_i(\theta_i,\theta^{t+1}, y^t_i)&\le \tilde{\mathcal{L}}_i(\theta_i,\theta^{t+1}, y^t_i)+w_i\nu_i/2\cdot \Vert \theta_i-\theta^{t+1}\Vert^2\nonumber\\
    &+\alpha w_i\zeta_i\beta^2_i\delta_{i,t+1}\Vert \theta_i-\theta^{t+1}\Vert.
\end{align}
Based on (\ref{dualbound_eq1}) and the strong convexity of $\tilde{\mathcal{L}}_i(\theta_i,\theta^{t+1}, y^t_i)$ with modulus $\rho_i$, for each $i\in\mathcal{I}$, the following fact holds: 
\begin{align}
    \label{l_descent_eq4}
    \tilde{\mathcal{L}}_i(\theta^{t+1}_i,\theta^{t+1}, y^t_i)-\tilde{\mathcal{L}}_i(\theta^t_i,\theta^{t+1}, y^t_i)\le-\rho_i/2\cdot\Vert\theta^{t+1}_i-\theta^t_i\Vert^2.
\end{align}
It follows that
\begin{align}
    \nonumber
    &\tilde{\mathcal{L}}_i(\theta^t_i,\theta^{t+1}, y^t_i)-\mathcal{L}_i(\theta^t_i,\theta^{t+1}, y^t_i)\\
    \nonumber
    =\,&w_i(\langle\nabla f^q_i(\phi^{t+1}_i)-\alpha g^{t+1}_i,\theta^t_i-\theta^{t+1}\rangle+ F_i(\theta^{t+1})- F_i(\theta^t_i))\\
    \nonumber
    =\,&w_i F_i(\theta^{t+1})-w_i F_i(\theta^t_i)-w_i\langle \nabla F_i(\theta^t_i),\theta^{t+1}-\theta^t_i\rangle\\
    \nonumber
    &-w_i\nu_i/2\cdot\Vert \theta^t_i-\theta^{t+1}\Vert^2+w_i\nu_i/2\cdot \Vert \theta^t_i-\theta^{t+1}\Vert^2\\
    \nonumber
    &+w_i\langle\nabla f^q_i(\phi^{t+1}_i)-\alpha g^{t+1}_i-\nabla F_i(\theta^t_i),\theta^t_i-\theta^{t+1}\rangle\\
    \le\,& w_i\langle\nabla f^q_i(\phi^{t+1}_i)-\alpha g^{t+1}_i-\nabla F_i(\theta^t_i),\theta^t_i-\theta^{t+1}\rangle
    \label{l_descent_eq5_1}
    \\
    \nonumber
    &+w_i\nu_i/2\cdot\Vert \theta^t_i-\theta^{t+1}\Vert^2\\
    \nonumber
    \le\,& w_i\Vert\nabla f^q_i(\phi^{t+1}_i)-\alpha g^{t+1}_i - \nabla F_i(\theta^{t+1})\Vert\cdot\Vert \theta^t_i-\theta^{t+1}\Vert\\
    \nonumber
    &+w_i\Vert\nabla F_i(\theta^{t+1})- \nabla F_i(\theta^t_i)\Vert\cdot\Vert \theta^t_i-\theta^{t+1}\Vert\\
    \nonumber
    &+w_i\nu_i/2\cdot\Vert \theta^t_i-\theta^{t+1}\Vert^2\\
    \nonumber
    \le\,&3w_i\nu_i/2\cdot\Vert \theta^t_i-\theta^{t+1}\Vert^2 + \alpha w_i\zeta_i\beta^2_i\delta_{i,t+1}\Vert \theta^t_i-\theta^{t+1}\Vert\\
    \le\,&3w_i\nu_i(\Vert \theta^t_i-\theta^{t+1}_i\Vert^2+\Vert \theta^{t+1}_i-\theta^{t+1}\Vert^2)+\alpha w_i\zeta_i\beta^2_i\delta_{i,t+1}
    \nonumber
    \\
    &\cdot(\Vert \theta^t_i-\theta^{t+1}_i\Vert+\Vert \theta^{t+1}_i-\theta^{t+1}\Vert),
    \label{l_descent_eq5}
\end{align}
where \cref{l_descent_eq5_1} is derived from Lemma \ref{fsmoothlemma} and the last inequality is obtained based on the following fact: 
    \begin{align}
        \Vert x+y\Vert^2\le 2\Vert x\Vert^2+2\Vert y\Vert^2,~x,y\in\mathbb{R}^n.
    \end{align}
Combining \cref{l_descent_eq3,l_descent_eq4,l_descent_eq5}, we conclude that
\begin{align}
    \label{l_descent_eq6}
    \nonumber
     &\mathcal{L}_i(\theta^{t+1}_i,\theta^{t+1}, y^t_i)- \mathcal{L}_i(\theta^t_i,\theta^{t+1}, y^t_i)\\
     \nonumber
     \le\,& \tilde{\mathcal{L}}_i(\theta^{t+1}_i,\theta^{t+1}, y^t_i)-\tilde{\mathcal{L}}_i(\theta^t_i,\theta^{t+1}, y^t_i)+\tilde{\mathcal{L}}_i(\theta^t_i,\theta^{t+1}, y^t_i)\\
     \nonumber
     &-\mathcal{L}_i(\theta^t_i,\theta^{t+1}, y^t_i)+w_i\nu_i/2\cdot\Vert \theta^t_i-\theta^{t+1}\Vert^2\\
     \nonumber
     &+\alpha w_i\zeta_i\beta^2_i\delta_{i,t+1}\Vert \theta^{t+1}_i-\theta^{t+1}\Vert\\
     \nonumber
     \le\,& -(\rho_i-8w_i\nu_i)/2\cdot\Vert\theta^{t+1}_i-\theta^t_i\Vert^2+4w_i\nu_i\Vert \theta^{t+1}_i-\theta^{t+1}\Vert^2\\
     \nonumber
     &+2\alpha w_i\zeta_i\beta^2_i\delta_{i,t+1}\Vert \theta^{t+1}_i-\theta^{t+1}\Vert\\
     \nonumber
     &+\alpha w_i\zeta_i\beta^2_i\delta_{i,t+1}\Vert \theta^t_i-\theta^{t+1}_i\Vert\\
     \nonumber
     \le\,& -(\rho_i-8w_i\nu_i)/2\cdot\Vert\theta^{t+1}_i-\theta^t_i\Vert^2\\
     \nonumber
     &+4w_i\nu_i/\rho^2_i\cdot\Vert  y^{t+1}_i- y^t_i\Vert^2\\
     \nonumber
     &+(2\alpha w_i\zeta_i\beta^2_i\delta_{i,t+1})/\rho_i\cdot\Vert  y^{t+1}_i- y^t_i\Vert\\
     &+\alpha w_i\zeta_i\beta^2_i\delta_{i,t+1}\Vert \theta^t_i-\theta^{t+1}_i\Vert,
\end{align}
where the last inequality is derived from \cref{eq:update_y}. The proof is completed.

\section*{Proof of \cref{L_descent}}

Based on \cref{eq:update_y}, we have
\begin{align}
    \label{L_descent_eq1}
    \nonumber
    &\mathcal{L}(\{\theta^{t+1}_i,y^{t+1}_i\},\theta^{t+1})-\mathcal{L}(\{\theta^{t+1}_i,y^t_i\},\theta^{t+1})\\
    \nonumber
    =\,&\sum\nolimits_{i\in\mathcal{I}}\langle  y^{t+1}_i- y^t_i,\theta^{t+1}_i-\theta^{t+1}\rangle\\
    =\,&\sum\nolimits_{i\in\mathcal{I}}1/\rho_i\cdot\Vert y^{t+1}_i- y^t_i\Vert^2.
\end{align}
Using \cref{l_descent} and \cref{Lsmooth}, we have
\begin{align}
    \label{L_descent_eq2}
    \nonumber
    &\mathcal{L}(\{\theta^{t+1}_i,y^t_i\},\theta^{t+1})-\mathcal{L}(\{\theta^t_i,y^t_i\},\theta^t)\\
    \nonumber
    =\,&\mathcal{L}(\{\theta^{t+1}_i,y^t_i\},\theta^{t+1})-\mathcal{L}(\{\theta^t_i,y^t_i\},\theta^{t+1})\\
    \nonumber
    &+\mathcal{L}(\{\theta^t_i,y^t_i\},\theta^{t+1})-\mathcal{L}(\{\theta^t_i,y^t_i\},\theta^t)\\
    \nonumber
    =\,&\sum\nolimits_{i\in\mathcal{I}}\left(\mathcal{L}_i(\theta^{t+1}_i,\theta^{t+1}, y^t_i)-\mathcal{L}_i(\theta^t_i,\theta^{t+1}, y^t_i)\right)\\
    \nonumber
    &+\underbrace{\mathcal{L}(\{\theta^t_i,y^t_i\},\theta^{t+1})-\mathcal{L}(\{\theta^t_i,y^t_i\},\theta^t)}_\text{(a)}\\
    \nonumber
    \le\,& -\sum\nolimits_{i\in\mathcal{I}}((\rho_i-8w_i\nu_i)/2\cdot\Vert\theta^{t+1}_i-\theta^t_i\Vert^2\\
    \nonumber
    &+(\rho_i-\lambda\mu_r/|\mathcal{I}|)/2\cdot\Vert\theta^{t+1}-\theta^t\Vert^2\\
    \nonumber
    &-4w_i\nu_i/\rho^2_i\cdot\Vert  y^{t+1}_i- y^t_i\Vert^2\\
    &-2\alpha w_i\zeta_i\beta^2_i\delta_{i,t+1}/\rho_i\cdot\Vert y^{t+1}_i- y^t_i\Vert\\
    \nonumber
    &-\alpha w_i\zeta_i\beta^2_i\delta_{i,t+1}\Vert \theta^{t+1}_i-\theta^t_i\Vert),
\end{align}
where the bound of (a) is derived in the similar way as in \cref{l_descent}. Combining \cref{L_descent_eq1,L_descent_eq2}, we conclude that
\begin{align}
    \label{L_descent_eq3}
    \nonumber
    &\mathcal{L}(\{\theta^{t+1}_i,y^{t+1}_i\},\theta^{t+1})-\mathcal{L}(\{\theta^t_i,y^t_i\},\theta^t)\\
    \nonumber
    =\,& \mathcal{L}(\{\theta^{t+1}_i,y^{t+1}_i\},\theta^{t+1})-\mathcal{L}(\{\theta^{t+1}_i,y^t_i\}_i,\theta^{t+1})\\
    \nonumber
    &+\mathcal{L}(\{\theta^{t+1}_i,y^t_i\},\theta^{t+1})-\mathcal{L}(\{\theta^t_i,y^t_i\},\theta^t)\\
    \nonumber
    =\,&-\sum\nolimits_{i\in\mathcal{I}}( (\rho_i-8w_i\nu_i)/2\cdot\Vert\theta^{t+1}_i-\theta^t_i\Vert^2\\
    \nonumber
    &+(\rho_i-\lambda\mu_r/|\mathcal{I}|)/2\cdot\Vert\theta^{t+1}-\theta^t\Vert^2\\
    \nonumber
    &-(4w_i\nu_i/\rho^2_i+1/\rho_i)\Vert y^{t+1}_i- y^t_i\Vert^2\\
    \nonumber
    &-2\alpha w_i\zeta_i\beta^2_i\delta_{i,t+1}/\rho_i\cdot\Vert y^{t+1}_i- y^t_i\Vert\\
    \nonumber
    &-\alpha w_i\zeta_i\beta^2_i\delta_{i,t+1}\Vert \theta^t_i-\theta^{t+1}_i\Vert)\\
    \nonumber
    \le\,&-\sum\nolimits_{i\in\mathcal{I}}((\rho_i-8w_i\nu_i)/2\cdot\Vert\theta^{t+1}_i-\theta^t_i\Vert^2\\
    \nonumber
    &+(\rho_i-\lambda\mu_r/|\mathcal{I}|)/2\cdot\Vert\theta^{t+1}-\theta^t\Vert^2\\
    \nonumber
    &-(4w_i\nu_i/\rho^2_i+1/\rho_i)(2w^2_i\nu^2_i\Vert\theta^{t+1}-\theta^t\Vert^2 \\
    \nonumber
    &+ 2(\delta_{i,t}+\delta_{i,t+1})^2(\alpha w_i\zeta_i\beta^2_i)^2)\\
    \nonumber
    &-2\alpha w_i\zeta_i\beta^2_i\delta_{i,t+1}/\rho_i\cdot(w_i\nu_i\Vert\theta^{t+1}-\theta^t\Vert \\
    \nonumber
    &+ (\delta_{i,t}+\delta_{i,t+1})\alpha w_i\zeta_i\beta^2_i)\\
    \nonumber
    &-\alpha w_i\zeta_i\beta^2_i\delta_{i,t+1}\Vert \theta^t_i-\theta^{t+1}_i\Vert)\\
    \nonumber
    =\,&-\sum\nolimits_{i\in\mathcal{I}}(a_{i,e}\Vert\theta^{t+1}_i-\theta^t_i\Vert^2+a_{i,p}\Vert\theta^{t+1}-\theta^t\Vert^2\\
    &-b^{t+1}_{i,e}\Vert\theta^{t+1}_i-\theta^t_i\Vert-b^{t+1}_{i,p}\Vert\theta^{t+1}-\theta^t\Vert-c^{t+1}_i),
\end{align}
thereby completing the proof.

\section*{Proof of \cref{lagrangian_lowerbound}}

From \cref{dualbound_eq2}, we have
\begin{align}
    - y^{t+1}_i=w_i\tilde{\nabla}F_i(\theta^{t+1})
\end{align}
where $\tilde{\nabla}F_i(\theta^{t+1})= \nabla f^q_i(\phi^{t+1}_i)-\alpha g^{t+1}_i$. Due to Lemma \ref{fsmoothlemma}, we can write
\begin{align}
    \label{lagrangian_lowerbound_eq1}
    \nonumber
    F_i(\theta^{t+1})\le\;& F_i(\theta^{t+1}_i)+\langle \nabla F_i(\theta^{t+1}_i),\theta^{t+1}-\theta^{t+1}_i\rangle\\
    \nonumber
    &+\nu_i/2\cdot\Vert\theta^{t+1}_i-\theta^{t+1}\Vert^2\\
    \nonumber
    =\;& F_i(\theta^{t+1}_i)+\langle \nabla F_i(\theta^{t+1}),\theta^{t+1}-\theta^{t+1}_i\rangle\\
    \nonumber
    &+\langle \nabla F_i(\theta^{t+1}_i)-\nabla F_i(\theta^{t+1}),\theta^{t+1}-\theta^{t+1}_i\rangle\\
    \nonumber
    &+\nu_i/2\cdot\Vert\theta^{t+1}_i-\theta^{t+1}\Vert^2\\
    \nonumber
    \le\;& F_i(\theta^{t+1}_i)+\langle F_i(\theta^{t+1}),\theta^{t+1}-\theta^{t+1}_i\rangle\\
    &+3\nu_i/2\cdot\Vert\theta^{t+1}_i-\theta^{t+1}\Vert^2.
\end{align}
Based on the definition of the augmented Lagrangian function in \cref{eq:lagrangian} and \cref{lagrangian_lowerbound_eq1}, we have the following observation:
\begin{align}
    \nonumber
    &\mathcal{L}(\{\theta^{t+1}_i,y^{t+1}_i\},\theta^{t+1})\\
    \nonumber
    =\,&\lambda R_h(\theta^{t+1},\theta_p)+\sum\nolimits_{i\in\mathcal{I}}(w_i F_i(\theta^{t+1}_i)
    \\
    \nonumber
    &+\langle y^{t+1} _i,\theta^{t+1}_i-\theta^{t+1}\rangle+\rho_i/2\cdot\Vert \theta^{t+1}_i-\theta^{t+1}\Vert^2)\\
    \nonumber
    =\,&\lambda R_h(\theta^{t+1},\theta_p)+\sum\nolimits_{i\in\mathcal{I}}(w_i F_i(\theta^{t+1}_i)\\
    \nonumber
    &+\langle w_i\tilde{\nabla}F_i(\theta^{t+1}),\theta^{t+1}-\theta^{t+1}_i\rangle\\
    \nonumber
    &+\rho_i/2\cdot\Vert \theta^{t+1}_i-\theta^{t+1}\Vert^2)\\
    \nonumber
    =\,&\lambda R_h(\theta^{t+1},\theta_p)+\sum\nolimits_{i\in\mathcal{I}}(w_i F_i(\theta^{t+1}_i)\\
    \nonumber
    &+\langle w_i\nabla F_i(\theta^{t+1}),\theta^{t+1}-\theta^{t+1}_i\rangle\\
    \nonumber
    &+ w_i\langle \tilde{\nabla}F_i(\theta^{t+1})-\nabla F_i(\theta^{t+1}),\theta^{t+1}-\theta^{t+1}_i\rangle\\
    \nonumber
    &+\rho_i/2\cdot\Vert \theta^{t+1}_i-\theta^{t+1}\Vert^2)\\
    \nonumber
    \ge\,& \lambda R_h(\theta^{t+1},\theta_p)+\sum\nolimits_{i\in\mathcal{I}}(w_i F_i(\theta^{t+1}_i)\\
    \nonumber
    &+\langle w_i\nabla F_i(\theta^{t+1}),\theta^{t+1}-\theta^{t+1}_i\rangle\\
    \nonumber
    &- w_i\Vert \tilde{\nabla}F_i(\theta^{t+1})-\nabla F_i(\theta^{t+1})\Vert\Vert\theta^{t+1}-\theta^{t+1}_i\Vert\\
    \nonumber
    &+\frac{\rho_i}{2}\Vert \theta^{t+1}_i-\theta^{t+1}\Vert^2)\\
    \nonumber
    \ge\,& \lambda R_h(\theta^{t+1},\theta_p)+ \sum\nolimits_{i\in\mathcal{I}}(w_i F_i(\theta^{t+1})\\
    \nonumber
    &+(\rho_i-3\nu_i)/2\cdot\Vert \theta^{t+1}_i-\theta^{t+1}\Vert^2\\
    \nonumber
    &- \alpha w_i\zeta_i\beta^2_i\delta_{i,t+1}\Vert\theta^{t+1}-\theta^{t+1}_i\Vert)
\end{align}
where the last inequality is derived from $\nu_i$-smoothness of $F_i$ and first-order Taylor expansion. According to \cref{lowerbounded}, $\lambda R_h(\theta^{t+1},\theta_p)+\sum_{i\in\mathcal{I}} w_i F_i(\theta^{t+1})$ is lower bounded. Due to \cref{parameterassumption}, it is easy to show that
\begin{align}
    \sum\nolimits_{i\in\mathcal{I}}((\rho_i-3\nu_i)/2\cdot\Vert \theta^{t+1}_i-\theta^{t+1}\Vert^2\nonumber\\
    - \alpha w_i\zeta_i\beta^2_i\delta_{i,t+1}\Vert\theta^{t+1}-\theta^{t+1}_i\Vert)>-\infty,
\end{align}
thereby completing the proof.

\section*{Proof of \cref{lem:forgetting}}

From \cref{coro:data_impact}, the following fact holds:
\begin{align}
\nonumber
&\mathbb{E}\{\Vert \sum\nolimits_{i\in\mathcal{I}}w_i \nabla L_i(\theta_{\epsilon}-\alpha\nabla L_i(\theta_{\epsilon}))+\lambda \nabla R_h(\theta_{\epsilon},\theta_p)\Vert\}\\
&\le\epsilon+\sum\nolimits_{i\in\mathcal{I}}w_i\sigma^g_i(\frac{\alpha\mu_i}{\sqrt{D^{s}_i}}+\frac{1}{\sqrt{D^{q}_i}}).
\end{align}
Based on \cref{Lsmooth}, we obtain
\begin{align}
\label{eq:forgetting_l2_1}
 \lambda\Vert \nabla R_h(\theta_{\epsilon},\theta_p)\Vert=\Vert\sum\nolimits_{i\in\mathcal{I}}w_i \nabla L_i(\theta_{\epsilon}-\alpha\nabla L_i(\theta_{\epsilon}))\nonumber\\
 -\sum\nolimits_{i\in\mathcal{I}}w_i \nabla L_i(\theta_{\epsilon}-\alpha\nabla L_i(\theta_{\epsilon}))+\lambda \nabla R_h(\theta_{\epsilon},\theta_p)\Vert\nonumber\\
\le\Vert\sum\nolimits_{i\in\mathcal{I}}w_i \nabla L_i(\theta_{\epsilon}-\alpha\nabla L_i(\theta_{\epsilon}))+\lambda \nabla R_h(\theta_{\epsilon},\theta_p)\Vert\nonumber\\
+\Vert\sum\nolimits_{i\in\mathcal{I}}w_i \nabla L_i(\theta_{\epsilon}-\alpha\nabla L_i(\theta_{\epsilon}))\Vert\nonumber\\
\le\epsilon+\sum\nolimits_{i\in\mathcal{I}}w_i(\beta_i+\frac{\alpha\mu_i\sigma^g_i}{\sqrt{D^{s}_i}}+\frac{\sigma^g_i}{\sqrt{D^{q}_i}}).
\end{align}
Due to the convexity of $R_h(\cdot,\theta_p)$ and the fact $R_h(\theta_p,\theta_p)=0$, we have
\begin{align}
\mathbb{E}\{R_h(\theta,\theta_p)\}\le\;&\frac{1}{\lambda}(\epsilon+\sum\nolimits_{i\in\mathcal{I}}w_i(\beta_i+\frac{\alpha\mu_i\sigma^g_i}{\sqrt{D^s_i}}+\frac{\sigma^g_i}{\sqrt{D^q_i}}))\nonumber\\
\cdot\;&\Vert\theta_{\epsilon}-\theta_p\Vert.
\end{align}
Equation \eqref{eq:forgetting_general_strong_ex} can be directly derived via \eqref{eq:strongly_convex_primal_ex}.

\vfill

\end{document}